\definecolor{Green}{RGB}{76, 136, 107}
\renewcommand{\maketag@@@}[1]{\hbox{\m@th\normalsize\normalfont#1}}%
\newenvironment{sciabstract}{
\begin{quote} \baselineskip14pt\small\hfil {\bf Abstract} \hfil\\[3pt]}
{\end{quote}\vspace{6pt}}
\newif\if@restonecol
\newtheorem{definition}{Definition}
\newtheorem{remark}{Remark}
\title{Scaling Multi-Objective Security Games Provably via Space Discretization Based Evolutionary Search}
\author{
Yu-Peng Wu$^{1}$, Hong Qian$^{1,2,*}$, Rong-Jun Qin$^{3,4}$, Yi Chen$^{5}$, Aimin Zhou$^{1,2}$\\
\small{
$^{1}$School of Computer Science and Technology, East China Normal University, Shanghai, China}\\
\small{
$^{2}$Shanghai Institute of AI for Education, East China Normal University, Shanghai, China}\\
\small{
$^{3}$National Key Laboratory for Novel Software Technology, Nanjing University, Nanjing, China}\\
\small{
$^{4}$Polixir Technologies, Nanjing, China}\\
\small{
$^{5}$Department of Computer Science and Artificial Intelligence, Wenzhou University, Wenzhou, China}\\
\small{
51215901031@stu.ecnu.edu.cn,\ hqian@cs.ecnu.edu.cn,}\\
\small{
qinrj@lamda.nju.edu.cn,\ kenyoncy2016@gmail.com,\ amzhou@cs.ecnu.edu.cn}\\
\small{$^*$Corresponding Author}
}
\date{}
\begin{document}

\baselineskip16pt

\maketitle 

\begin{sciabstract}
In the field of security, multi-objective security games (MOSGs) allow defenders to simultaneously protect targets from multiple heterogeneous attackers. MOSGs aim to simultaneously maximize all the heterogeneous payoffs, e.g., life, money, and crime rate, without merging heterogeneous attackers. 
In real-world scenarios, the number of heterogeneous attackers and targets to be protected may exceed the capability of most existing state-of-the-art methods, i.e., MOSGs are limited by the issue of scalability.
To this end, this paper proposes a general framework called SDES based on many-objective evolutionary search to scale up MOSGs to large-scale targets and heterogeneous attackers. SDES consists of four consecutive key components, i.e., discretization, optimization, evaluation, and refinement. 
Specifically, SDES first discretizes the originally high-dimensional continuous solution space to the low-dimensional discrete one by the maximal indifference property in game theory. This property helps evolutionary algorithms (EAs) bypass the high-dimensional step function and ensure a well-convergent Pareto front. 
Then, a many-objective EA is used for optimization in the low-dimensional discrete solution space to obtain a well-spaced Pareto front.
To evaluate solutions, SDES restores solutions back to the original space via greedily optimizing a novel divergence measurement.
Finally, the refinement in SDES boosts the optimization performance with acceptable cost. 
Theoretically, we prove the optimization consistency and convergence of SDES.
Experiment results show that SDES is the first linear-time MOSG algorithm for both large-scale attackers and targets. SDES is able to solve up to 20 attackers and 100 targets MOSG problems, while the state-of-the-art (SOTA) methods can only solve up to 8 attackers and 25 targets ones. Ablation study verifies the necessity of all components in SDES.
\end{sciabstract}


\section{Introduction} \label{sec:introduction}
Security games (SGs) study how to efficiently use limited resources to defend targets against potential attackers in the security field. SGs have facilitated a variety of successful applications, including ARMOR, IRIS, and GUARDS for airport protection~\citep{ARMOR, IRIS, GUARDS}, TRUSTS for transit system protection~\citep{TRUSTS}, PROTECT for coastal protection~\citep{PROTECT} and others~\citep{otherApplication}. SGs can be modeled as Stackelberg games~\citep{StackelberGame}, where the solution concept, Strong Stackelberg Equilibrium (SSE), a refinement of the Nash Equilibrium, always exists~\citep{SSE}. Traditional SG work usually uses scalar-based decision-making methods, e.g., Bayesian SGs algorithms~\citep{MultiLPs, DOBSS, ERASER}. 
However, the scalar-based methods are not suitable for handing heterogeneous SGs, which involves multiple non-mergeable attackers. For instance, the loss of property and life is not of the same magnitude~\citep{MOSG}. 
 
This yields multi-objective security games (MOSGs)~\citep{MOSG,extended-MOSG}, which aims to find a well-convergent and well-spaced Pareto front (PF). Herein, ``well-spaced'' refers to good diversity and distribution~\citep{MOEA-survey}.
MOSGs model heterogeneous, possibly conflicting, attackers by multi-objective optimization. 
In MOSGs, each attacker maximizes its own payoff by attacking an appropriate target, while the defender maximizes payoffs from defending against all heterogeneous attackers. 
%
Modeling security games as multi-objective optimization has the benefits of a PF for decision-makers, and less prior knowledge demands such as probability distribution over potential attackers or historical attack data~\citep{extended-MOSG}.
With the deepening of MOSGs research, real-world problems present simultaneous large-scale requirements in attackers (e.g., a wide variety of crime types) and targets (e.g., massive infrastructure in public safety field).

The existing methods address MOSG problems with large-scale targets at the cost of greatly reducing the number of attackers they are able to handle.
Specifically, a faster convergence with the large target number is achieved by ORIGAMI algorithm (optimizing resources in games using maximal indifference)~\citep{ERASER}, which uses the maximal indifference property to address fully rational adversary games. The maximal indifference property focuses on the most attractive targets so as to make the large-scale target MOSG tractable. ORIGAMI-M and ORIGAMI-A~\citep{MOSG} extend ORIGAMI~\citep{ERASER} from single-objective optimization to multi-objective optimization and they are insensitive to the target dimension. 
However, the sample complexity of ORIGAMI-M and ORIGAMI-A finding out a well-spaced PF is exponentially with the number of attackers.
Some pruning-based methods are then proposed to reduce the algorithmic complexity via pre-skipping inferior policy~\citep{MOSG,extended-MOSG}. However, they are still restricted by the number of attackers. The reason is that they adopt the $\epsilon$-constraint framework, which optimizes the convergence and diversity of the solution simultaneously~\citep{DMOEA-eC}. As the number of attackers increase, $\epsilon$-constraint framework requires exponential solutions to guarantee a well-spaced PF. 
Meanwhile, MOSG problem of large-scale attackers are in demand in the real world. 
For example, in the field of public security, the criminal justice system covers a wide range of criminal offenses. Even according to their pertinence, there are more than ten kinds of heterogeneous attack types, including drug crimes, homicide, inchoate crimes, cybercrime, juvenile crimes, sex crimes, etc.~\footnote{https://www.justia.com/criminal/offenses/}
Therefore, it is urgent and important to design a method that can handle MOSG problem of large-scale heterogeneous attackers without losing the large number of targets.

To this end, this paper proposes a framework termed space discretization based evolutionary search (SDES) that is able to simultaneously extend targets and heterogeneous attackers to the large-scale scenario to meet the real-world demand. 
In the large-scale scenario, traditional methods~\citep{MOSG,extended-MOSG} fail to find a well-convergent and well-spaced PF, since they optimize the convergence and diversity of the solution simultaneously. 
However, directly applying heuristic algorithms such as evolutionary algorithms (EAs) is not suitable to scale up MOSGs. This is because it encounters a high-dimensional step function whose landscape changes frequently. 
In SDES, the goal of the well-convergent and well-spaced PF is decoupled into the convergence and the distribution of solutions. 
SDES achieves these two goals separately by the proposed discretization, optimization, evaluation and refinement components.
The optimization component aims to achieve a well-spaced PF, while the evaluation and refinement components aim to make the well-spaced PF well converge. Specifically, the discretization component transforms the high-dimensional continuous solution space into the low-dimensional discrete one by the maximal indifference property. This game property can guarantee solution convergence and avoid directly optimizing the high-dimensional step function.
The optimization component searches a set of low-dimensional solutions by many-objective EAs (e.g., NSGA-III~\citep{NSGA-III}) and an adaptive reference direction set generator (e.g., Riesz~\citep{ref-dirs-energy}).
In the evaluation component, the low-dimensional solutions are mapped to the high-dimensional ones via a greedy algorithm.
This component can find out solutions to approximate the true PF under the convergence assumption.
If a MOSG problem is too complex to satisfy the convergence assumption, the refinement component tries to improve the solution convergence. 

To the best of our knowledge, SDES is the first linear sample complexity framework if either the number of targets or attackers is fixed.
Theoretically, we prove the consistency of the greedy algorithm. Furthermore, a sufficient convergence condition for the SDES framework is disclosed.
Empirically, on the widely-used MOSG benchmark, the linearity of the SDES framework is verified if either the number of targets or attackers is fixed. The results show that SDES outperforms the state-of-the-art (SOTA) methods in terms of scalability, time efficiency, and effectiveness. Specifically, SDES can solve up to 20 attackers and 100 targets MOSG problems, while the SOTA methods can only solve up to 8 attackers and 25 targets MOSG problems. Ablation study verifies the necessity of all components in SDES.

In general, the contributions of this paper include:

\begin{itemize}
    \item We generalize the related theory of the \textit{attack set} $\Gamma$ from the single attacker's scenario~\cite{ERASER} to the multiple heterogeneous attacker's scenario. Specifically, (i) The relationship between the \textit{resource allocation} $\bm{c}$ and $\Gamma$ is formally analyzed, cf. Lemma~\ref{pro:property1.2}. (ii) The step change of the optimization objective of MOSG (a high-dimensional step function) is quantified, which is meaningful to characterize the difficulty of MOSG problem, cf. Lemma~\ref{pro:property1.1}.
    
    \item We transform the complex original MOSG problem (a high-dimensional step functions) into a combinatorial optimization problem by the discretization component of SDES. Then, we propose a novel solution divergence measurement in the evaluation component to assist greedy algorithms to bypass the potential combinatorial explosion. Furthermore, we prove the optimization consistency and convergence of SDES.

    \item We propose the first linear sample complexity framework SDES if either the number of targets or attackers is fixed. SDES makes the MOSG application scenarios extend from medium-scale scenarios~\cite{MOSG,extended-MOSG} (number of attackers $3\sim5$, number of targets $25\sim100$) to large-scale scenarios (number of attackers $6\sim20$, number of targets $200\sim1000$).
\end{itemize}

The rest of the paper is organized as follows. Section~\ref{sec:rel_wor} reviews the related work. Section~\ref{sec:pre} introduces the preliminaries of SGs and MOSGs. Section~\ref{sec:met} presents the framework and sample complexity of SDES. Section~\ref{sec:val_ana} further analyses the theoretical guarantee of SDES. Section~\ref{sec:exp} shows the experiment results of SDES compared with other SOTA methods with respect to scalability, time efficiency, effectiveness and others. Finally, Section~\ref{sec:conclusion} concludes the paper.

\section{Related Work}
\label{sec:rel_wor}
Modeling SG can be roughly categorized into single-objective and multi-objective methods, where Bayesian Stackelberg games and multi-objective security games (MOSGs) are their representative approaches respectively. This section first introduces those two research lines separately, followed by their corresponding work on large-scale problems.


\textbf{Bayesian Stackelberg Games.}
The initial research studies on the Bayesian Stackelberg games primarily investigate the optimal resource allocation yielding the best defender rewards~\cite{MultiLPs, DOBSS, ERASER}. MultiLPs~\cite{MultiLPs} can transform the game tables of different attackers into a large standard game table through Harsanyi transformation to solve the Bayesian Stackelberg game. However, the idea of MultiLPs is to enumerate linear program problems of all possible strategy profiles, its complexity varies exponentially with the attacker dimension. Although the SSE of Bayesian Stackelberg has been proved to be an NP-hard problem~\cite{MultiLPs}, researchers can utilize some characteristics of the problem to design more sophisticated algorithms, e.g., the independence of attackers. DOBSS~\cite{DOBSS}, the first algorithm successfully applied to ARMOR~\cite{ARMOR} system, takes advantage of the independence of attackers to convert several linear program problems into a mixed-integer linear program problem that can be solved by effectively programming-based approaches. In addition, ERASER~\cite{ERASER} directly solves SGs in mixed-integer linear program formulation and makes the target dimension no longer affect the solving efficiency. Although such Bayesian Stackelberg game methods can yield accurate optimal resource allocation, their single-objective optimization modeling limited their performance in SG problems with heterogeneous attackers.

\textbf{Multi-Objective Security Games.}
Compared with Bayesian Stackelberg games, MOSGs can better meet the real-world needs of heterogeneous attackers~\citep{MOSG,extended-MOSG}. The concept of MOSG formulation is systematically proposed in~\citep{MOSG} for the first time by a multi-objective optimization framework combining the $\epsilon$-constraint framework and lexicographic method. Similar to MultiLPs~\citep{MultiLPs}, the idea of~\citep{MOSG} is to enumerate all strategy profiles and call ERASER solver~\citep{ERASER} several times to yield a PF. Although~\citep{MOSG} pre-skips some policy profiles with pruning techniques, the attacker dimension is still a bottleneck. Later, on the basis of~\citep{MOSG}, an extended study~\citep{extended-MOSG} systematically introduces various pruning methods, but it still fails to eradicate the curse of dimensionality brought in attacker dimension by $\epsilon$-constraint framework and lexicographic method. Meanwhile, solution distribution and convergence are two key indicators in multi-objective task solution evaluation, which are coupled by $\epsilon$-constraint methods~\citep{MOSG,extended-MOSG}. To ensure solution uniformity, those methods face the curse of dimensionality in the exponentially growing objective space. To our knowledge, although the performance of the existing algorithm is no longer affected by the target dimension, the attacker dimension is still the bottleneck limiting most traditional methods. In this paper, our work handle large-scale problem in both target and heterogeneous attacker dimensions by decoupling solution distribution and convergence.  

\textbf{Large-Scale Security Games.} Although there are a lot of works focusing on large-scale SGs, most methods cannot deal with the general large-scale MOSGs well. In~\citep{patroller-PSG}, researchers solve specific large-scale security games, but this work cannot be easily applied to the general large-scale problem. N. Basilico et al.~\citep{patroller-PSG} study an extensive-form infinite-horizon underlying game for patrolling security games under specific assumptions about attacker behavior. 
In EASG~\citep{EASG}, the researches provides a large-scale game-independent EA-based algorithm, which is tailored to sequential SGs. 
At the same time, programming-based methods like~\citep{ERASER} can deal with large-scale target or resource scenarios, which generally merge the loss of attackers and violate the problem setting of heterogeneous attackers. Besides, most programming-based algorithms cannot solve large-scale problems of multi-dimension simultaneously, e.g., targets and attackers. In addition, a large number of studies have focused on network security games (NSGs)~\citep{NSGZero,NFSP}. 
NSGZero~\citep{NSGZero} and NFSP~\citep{NFSP} solve the large-scale extensive-form NSGs and large-scale resources NSGs by deep-learning methods respectively. However, some real-world problems cannot be modeled as NSGs. For example, RAND~\citep{Scale-up-Approximation-ARA-RAND} proposes a unified model of large-scale SGs to solve the deployed federal air marshals and threat screening problem. This paper handles large-scale MOSGs by space discretization, which transforms solution representation from the high-dimensional continuous space to the low-dimensional discrete one. The proposed SDES is the first linear sample complexity EA-based framework for MOSGs if either the number of targets or attackers is fixed, which simultaneously extends targets and attackers to the large-scale scenario.

\section{Preliminaries}
\label{sec:pre}
\subsection{Security Game (SG)}
SG, a special kind of \textit{defender-attackers} Stackelberg game, consists of one \textit{defender} $\mathcal{D}$ and $N$ heterogeneous \textit{attackers} $\mathcal{A}_i$, $T$ targets and a resource ratio $r\in[0,1]$, where $i\in[N]$ and $[N]=\{1,\ldots,N\}$ for a positive integer $N$. All notations in this paper is summarized in~\ref{app:notation}. The continuous strategy of defender $\mathcal{D}$ can be represented as the coverage vector $\bm {c}\in [0, 1]^T$, where $||\bm{c}||_1 \leq r \cdot T$ and $c_t$ represents the resources allocated on the $t$-th target to defend all attackers and the probability of successfully covering the $t$-th target~\citep{Multi-defender21}. 
The strategy of $\mathcal{A}_i$ can be represented as the attack vector 
$\bm {a}_i=(a_{i1}, \ldots, a_{iT}) \in \{0, 1\}^T$, where $\sum_{t=1}^T {a_{it}} = 1$ and $a_{it}=1$ means that $t$ is the \textit{attacked target} of $\mathcal{A}_i$, denoted as $at_i$. 
Since multiple heterogeneous attackers do not interfere with each other and pursue their payoff maximization, the $N$-attacker SG is often transformed into $N$ two-player Stackelberg games. Given the strategy profile $(\bm{c}, \bm{a}_i)$, each player's payoff can be expressed as the payoff structure $\bm{U}=(U_1,\ldots,U_N)$, cf. Equation~\eqref{equ:equation2} and~\eqref{equ:equation2-2}. 

In a two-player Stackelberg game, $U_i$ consists of $\mathcal{A}_i$'s payoff $U_i^a(\bm {c}, \bm {a}_i)$ and $\mathcal{D}$'s payoff $U_i^d(\bm {c}, \bm {a}_i)$. 
If the attacked target $at_i$ is fully covered (i.e., $c_t=1$), $\mathcal{A}_i$'s or $\mathcal{D}$'s payoff is denoted as $U_i^{c,a}(t)$ or $U_i^{c,d}(t)$. If $at_i$ is completely uncovered (i.e., $c_t=0$), $\mathcal{A}_i$'s or $\mathcal{D}$'s payoff is denoted as $U_i^{u,a}(t)$ or $U_i^{u,d}(t)$. Since $c_t$ is the cover probability, $\mathcal{A}_i$'s or $\mathcal{D}$'s payoff on $t$ is

\begin{equation} 
\label{equ:equation1}
U_i^{a}(c_t)=c_t U_i^{c,a}(t) + (1-c_t) U_i^{u,a}(t)\,,
\end{equation}
\begin{equation} 
\label{equ:equation1-2}
U_i^{d}(c_t)=c_t U_i^{c,d}(t) + (1-c_t) U_i^{u,d}(t)\,.
\end{equation}
Equation~\eqref{equ:equation1} and \eqref{equ:equation1-2} show $U_i^{a}(c_t)\propto {\frac {1}{c_t}}$, $U_i^{d}(c_t)\propto {c_t}$, since all $U_i^c$ and $U_i^u$ follow the assumption $U_i^{c,a} \leq U_i^{u,a}$, $U_i^{c,d} \geq U_i^{u,d}$. This means that $\mathcal{A}_i$ does not want $at_i$ to be covered, while the defender does the opposite. Finally, given the strategy profile $( \bm {c}, \bm {a}_i )$, $\mathcal{A}_i$'s or $\mathcal{D}$'s payoff for $T$ targets is

\begin{equation} \label{equ:equation2}
U_i^{a}(\bm {c}, \bm {a}_i) = \sum_{t=1}^T {a_{it} U_i^{a}(c_t)}\,,
\end{equation}
\begin{equation} \label{equ:equation2-2}
U_i^{d}(\bm {c}, \bm {a}_i) = \sum_{t=1}^T {a_{it} U_i^{d}(c_t)}\,.
\end{equation}



For $\mathcal{A}_i$, the collection of targets yielding the best $U^a_i(c_t)$ is called the \textit{attack set} $\Gamma_i(\bm{c})$, defined as:

\begin{equation}
\label{equ:equationBRa}
\Gamma_i(\bm {c})=\left\{t\mid U^a_i(c_t)\geq U^a_i(c_{t'})\forall t'\in [T]\right\}\,.
\end{equation}

$\Gamma_i(\bm{c})$ is defined as the set of targets which yields the maximum expected payoff for $\mathcal{A}_i$. The target $t\in\Gamma_i(\bm{c})$ have the same attraction to $\mathcal{A}_i$, while the attraction of target $t\notin\Gamma_i(\bm{c})$ is $0$. The realistic meaning of $\Gamma_i(\bm{c})$ is narrowing the solution space and guaranteeing the optimal solution.

The Stackelberg interaction between $\mathcal{D}$ and $\mathcal{A}_i$ is the $\mathcal{A}_i$ computes $\Gamma_i(\bm{c})$ and then breaks ties in favor of $\mathcal{D}$~\citep{SSE}. Finally, the $A_i$'s expected payoff is $\max_{t\in[T]}U_i^a(c_t)$, and the $D$'s expected payoff against $\mathcal{A}_i$ is $\max_{t\in\Gamma_i(\bm{c})}U_i^d(c_t)$.

\subsection{Multi-Objective Security Games (MOSGs)}
\label{sec:Problem_Definition}
In MOSGs, $\mathcal{D}$ wants to defend $T$ targets against $N$ heterogeneous $\mathcal{A}_i$ with limited resources.
Under the fully rational attacker behavior assumption, $\bm {a}_i$ is deterministic, thus the player's payoff $U_i^{a}(\bm {c}, \bm {a}_i)$ or $U_i^{d}(\bm {c}, \bm {a}_i)$ can be reformulated as $U_i^{a}(\bm {c})$ or $U_i^{d}(\bm {c})$. The objective function of MOSGs is

\begin{equation}
\label{equ:equationOptFunc}
\max_{\bm{c}\in [0,1]^T}\bm {F}(\bm {c})=\left\{ U_1^{d}(\bm {c}),\ldots,U_N^{d}(\bm {c}) \right\}\,.
\end{equation}

MOSGs can obtain multiple solutions. The definition of the Pareto dominance relation is widely used in the solution comparison, cf. Definition~\ref{def:Pareto}.

\begin{definition}[Dominance Relation and Pareto optimality]\label{def:Pareto}
$\bm{c}$ weak dominates $\bm{c}'$ if $U_i^d(\bm{c})\geq U_i^d(\bm{c}')$, $\forall i\in[N]$, denoted as $\bm{c}\succeq \bm{c}'$. $\bm{c}$ \textbf{dominates}  $\bm{c}'$ if $U_i^d(\bm{c})\geq U_i^d(\bm{c}')$, $\forall i\in[N]$, and $U_j^d(\bm{c})>U_j^d(\bm{c}')$, $\exists j\in[N]$, denoted as $\bm{c}\succ \bm{c}'$.
The \textit{coverage vector} $\bm{c}$ is $Pareto\ solution$ if there is no other $\bm{c}'\in C$ satisfying $\bm{c}\succ \bm{c}'$. The set of non-dominant $coverage\ vectors$ is called \textbf{Pareto set}.
\end{definition}


\section{The Proposed Method}
\label{sec:met}
The proposed SDES framework consists of discretization, optimization, evaluation, and refinement components, as depicted in Figure~\ref{fig:framework}. Traditional methods for MOSGs mainly focus on high-dimensional continuous solution space $\bm{X}=[0,1]^T$. However, the discretization component shows that searching on low-dimensional discrete solution space $\bm{X}'=\Gamma(\bm{c})$ (cf. Equation~\eqref{equ:equationBRa}) is better.
Therefore, the optimization component optimizes solutions on $\bm{X}'$, and the evaluation component evaluates solutions on $\bm{X}$. Finally, since MOSGs may not meet the convergence assumption, the refinement component attempts to improve solutions with acceptable cost.

\begin{figure}
\centering
\includegraphics[width=0.5\columnwidth]{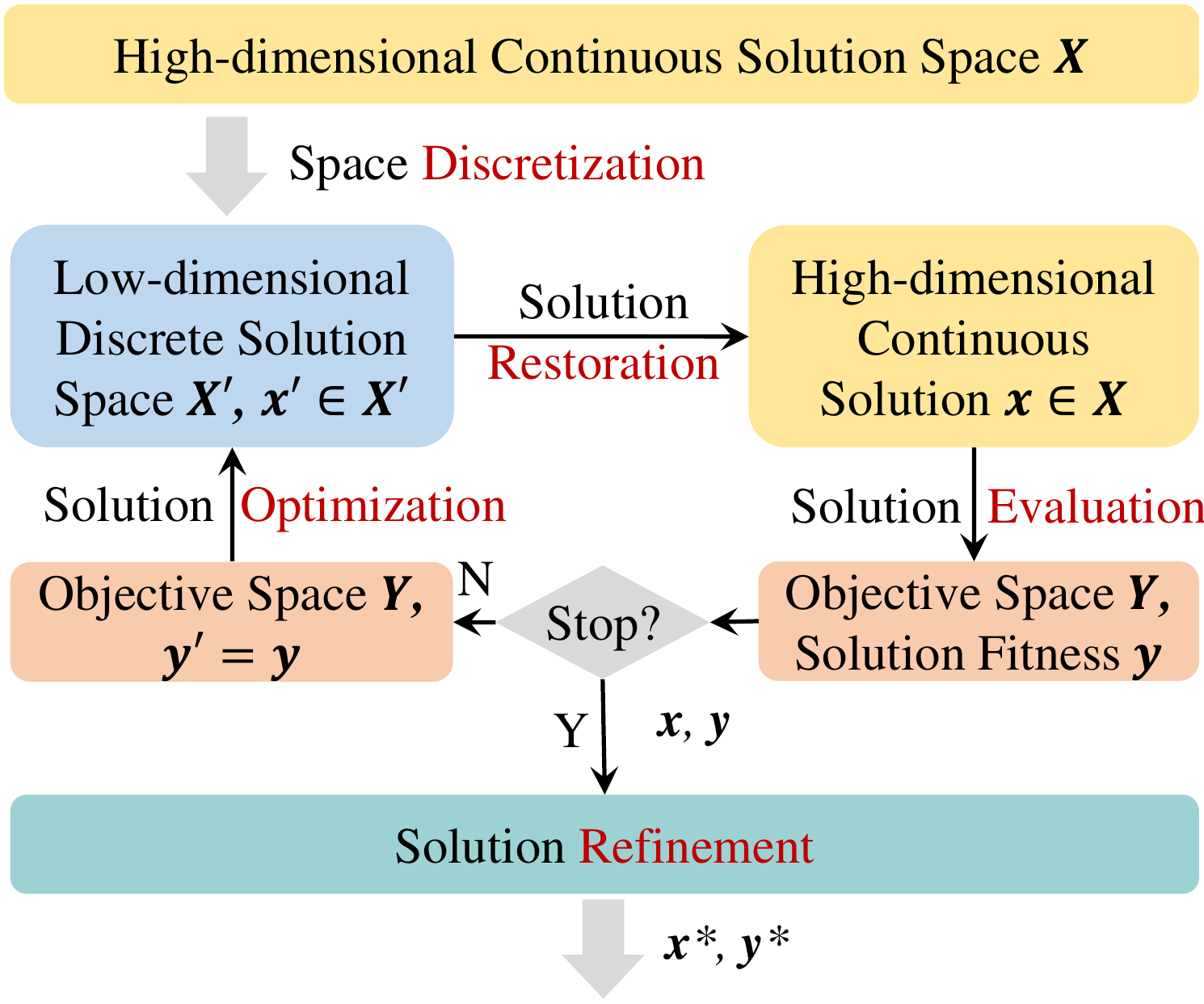}
\caption{The framework of SDES. SDES consists of discretization, optimization, evaluation, and refinement components. SDES evaluates solutions in the high-dimensional continuous space $\bm{X}$ and optimizes solutions in the low-dimensional discrete space $\bm{X}'$. The objective space of $\bm{X}$ and $\bm{X'}$ is the same, i.e, $\bm{Y}$. $\bm{y}'$ is the fitness of low-dimensional solution $\bm{x}'$, while $\bm{y}$ is the fitness of the corresponding high-dimensional solution $\bm{x}$, and $\bm{y}'$ equals $\bm{y}$. If the stop criterion is met, the refinement component refines high-dimensional solutions and returns an approximate PF.}
\label{fig:framework}
\end{figure}

\subsection{Discretization}
\label{sec:dis}
\subsubsection{Why Discretization}
\label{sec:property}
 In the fully rational SGs, since $U^d_i$ is a function of $\bm {c}$ in Equation~\eqref{equ:equation1}, many programming-based methods optimize the problem on $\bm {c}$ directly. 
 However, general heuristic algorithms cannot solve MOSG problem well by searching $\bm{c}$ directly, as disclosed in Lemma~\ref{pro:property1.2} and~\ref{pro:property1.1}. 
 This section reveals two properties of $\Gamma(\bm{c})$, i.e., the vulnerability of $\Gamma(\bm{c})$ and the maximal indifference of $\Gamma(\bm{c})$. The former shows that it is very difficult for the general heuristic or gradient-based optimization algorithms to optimize high-dimensional $\bm {c}$ directly. The latter provides the way of discretization and the theoretical support for optimizing low-dimensional discrete $\Gamma(\bm{c})$. 

\textbf{Vulnerability of $\Gamma(\bm{c})$.}
\label{sec:property1}
This section shows that general heuristic and gradient optimization algorithms cannot optimize high-dimensional $\bm {c}$ directly in the continuous solution space, because there are lots of flat and jump regions in the continuous solution space. The following illustrates the vulnerability of $\Gamma(\bm{c})$ from the perspective of security game as Lemma~\ref{pro:property1.2} and the perspective of optimization as Lemma~\ref{pro:property1.1}. The proof details are shown in~\ref{app:proof:method}.

\begin{restatable}[Vulnerability of $\Gamma(\bm{c})$ from Security Game Perspective]{lemma}{lemmaSGperspective}\label{pro:property1.2}
For $\mathcal{A}_i$, given $\Gamma_i(\bm {c})$, $\Gamma_i(\bm {c}+\bm{v})$, where $\bm{c},\bm{v}\in[0,1]^T$ and $\bm{v}$ is a vector with one small non-zero component $v_{t'}$, suppose $||\bm{c}+\bm{v}||_1\le r\cdot T$, one has that
\begin{equation}
\begin{cases}
{\Gamma_i(\bm{c}+\bm{v})=\Gamma_i(\bm{c}),}&{\text{if}}\ {t'\notin \Gamma_i(\bm{c})},
\\
{\Gamma_i(\bm {c}+\bm{v})=\Gamma_i(\bm {c})\setminus \{t'\},}&{\text{if}}\ {t'\in \Gamma_i(\bm{c})\ \text{and}\ v_{t'}>0},
\\
{\Gamma_i(\bm {c}+\bm{v})=\{t'\},}&{\text{if}}\ {t'\in \Gamma_i(\bm{c})\ \text{and}\ v_{t'}<0}.
\end{cases}
\end{equation}
\end{restatable}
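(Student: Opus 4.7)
The plan is to argue by cases on whether $t' \in \Gamma_i(\bm{c})$ and on the sign of $v_{t'}$, leveraging two elementary facts. First, Equation~\eqref{equ:equation1} together with the payoff assumption $U_i^{c,a}(t) \leq U_i^{u,a}(t)$ renders the single-target attacker payoff $c_t \mapsto U_i^a(c_t) = c_t U_i^{c,a}(t) + (1-c_t) U_i^{u,a}(t)$ a (generically strictly) decreasing linear function of $c_t$. Second, because $\bm{v}$ perturbs only the $t'$-th coordinate, every value $U_i^a(c_t)$ with $t \neq t'$ is left untouched, so only the single scalar $U_i^a(c_{t'} + v_{t'})$ has to be tracked against the otherwise unchanged maximum $M_i := \max_{t \in [T]} U_i^a(c_t)$.

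For Case 1, $t' \notin \Gamma_i(\bm{c})$ implies $U_i^a(c_{t'}) < M_i$ while some other target still realizes $M_i$; for sufficiently small $|v_{t'}|$ the perturbed value $U_i^a(c_{t'} + v_{t'})$ remains strictly below $M_i$, so both the maximum and the argmax set are unchanged. For Case 2, $v_{t'} > 0$ together with strict monotonicity gives $U_i^a(c_{t'} + v_{t'}) < U_i^a(c_{t'}) = M_i$, while every $t \in \Gamma_i(\bm{c}) \setminus \{t'\}$ continues to attain $M_i$, so $t'$ is simply excised from the attack set. For Case 3, $v_{t'} < 0$ lifts $U_i^a$ at $t'$ strictly above $M_i$, making $t'$ the unique maximizer and collapsing $\Gamma_i(\bm{c}+\bm{v})$ to $\{t'\}$.

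The principal obstacle is pinning down the quantitative meaning of \emph{small} so that Case~1 cannot be broken by $t'$ suddenly joining the attack set after perturbation. I would choose the threshold $|v_{t'}| < (M_i - U_i^a(c_{t'}))/|U_i^{u,a}(t') - U_i^{c,a}(t')|$, which is strictly positive precisely because $t' \notin \Gamma_i(\bm{c})$; this same bound also preserves the membership structure needed for Case~2, since the only way another target could overtake $M_i$ is via a change at $t'$, which does not happen here. Feasibility of $c_{t'} + v_{t'} \in [0,1]$ is delivered by the coverage-vector domain together with the hypothesis $\|\bm{c}+\bm{v}\|_1 \leq r\cdot T$, so no further admissibility check is required. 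With these three cases combined, the piecewise description of $\Gamma_i(\bm{c}+\bm{v})$ in the lemma follows directly.
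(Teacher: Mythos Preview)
Your proposal is correct and follows essentially the same approach as the paper: a case split on whether $t'\in\Gamma_i(\bm{c})$ and on the sign of $v_{t'}$, driven by the monotonicity of $U_i^a(c_t)$ in $c_t$ and the fact that only coordinate $t'$ moves. The paper's proof is slightly more informal---it simply asserts that a ``minor variation'' at $t'\notin\Gamma_i(\bm{c})$ leaves the attack set unchanged---whereas you make the smallness explicit via the threshold $(M_i-U_i^a(c_{t'}))/|U_i^{u,a}(t')-U_i^{c,a}(t')|$; note, however, that this particular bound vanishes when $t'\in\Gamma_i(\bm{c})$, so it is not the operative constraint in Case~2 (there, no upper bound on $v_{t'}$ is needed provided $|\Gamma_i(\bm{c})|\geq 2$, since the remaining targets in $\Gamma_i(\bm{c})\setminus\{t'\}$ still realise $M_i$).
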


From the perspective of security game, the reason for Lemma~\ref{pro:property1.2} is that allocating resources to $t\notin \Gamma_i(\bm{c})$ does not increase the payoff since $\mathcal{A}_i$ only attack $t\in\Gamma_i(\bm{c})$. Specifically, the actual $\vert\Gamma_i(\bm{c})\vert$ is far less than $T$ due to limited resources. Large-scale target scenarios only provide sparse feedback for optimization. Furthermore, the change of $\bm{c}$ causes a drastic payoff $U_i^d(\bm{c})$ jump. Figure~\ref{fig:Ua_Ud} shows the existence of many flat and jump regions.

\begin{restatable}[Vulnerability of $\Gamma(\bm{c})$ from Optimization Perspective]{lemma}{lemmaoptperspective}\label{pro:property1.1}
For $\mathcal{A}_i$, the same $\Gamma_i(\bm{c})$ and $\Gamma_i(\bm{c}+\bm{v})$ as Lemma~\ref{pro:property1.2} are provided. Given the attacked target $at_i$ corresponding to $\Gamma_i(\bm{c})$, if $at_i\in\Gamma_i(\bm{c}+\bm{v})$, then $U_i^d(\bm {c})$$=$$U^d_i(\bm{c}+\bm{v})$. Otherwise, $U_i^d(\bm{c}+\bm{v})-U_i^d(\bm{c})\geq gap_i$, where $gap_i=\max_{t\in\Gamma_i(\bm{c})}U^d_i(c_t) - \max_{t\in\Gamma_i(\bm{c})\setminus\{at_i\}}U^d_i(c_t)$.
\end{restatable}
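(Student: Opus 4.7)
The plan is to reduce Lemma~\ref{pro:property1.1} to a short case analysis built on the three regimes already classified by Lemma~\ref{pro:property1.2}. The crucial starting observation is that $\bm{v}$ perturbs only the single coordinate $t'$, so $U_i^d(c_t)$ is identical under $\bm{c}$ and $\bm{c}+\bm{v}$ for every $t\neq t'$. Consequently, the entire change in $U_i^d(\bm{c})$ can come only from a change in the identity of the defender's best response inside the attack set, or from the direct change of $c_{t'}$ itself.

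With this in hand, I would walk through the three regimes of Lemma~\ref{pro:property1.2}. In regime (i), $t'\notin\Gamma_i(\bm{c})$ gives $\Gamma_i(\bm{c}+\bm{v})=\Gamma_i(\bm{c})\ni at_i$; the coverage of $at_i$ is unchanged and $at_i$ remains the argmax of $U_i^d$ over the same set, so $U_i^d(\bm{c}+\bm{v})=U_i^d(\bm{c})$. In regime (ii), where $v_{t'}>0$ and $\Gamma_i(\bm{c}+\bm{v})=\Gamma_i(\bm{c})\setminus\{t'\}$, the subcase $at_i\neq t'$ preserves the argmax and again leaves $U_i^d$ unchanged, while the subcase $at_i=t'$ removes the argmax and forces the new defender payoff to equal $\max_{t\in\Gamma_i(\bm{c})\setminus\{at_i\}}U_i^d(c_t)$, a shift of magnitude exactly $gap_i$ by the very definition of $gap_i$. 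Regime (iii) follows the same template: $\Gamma_i(\bm{c}+\bm{v})=\{t'\}$ pins the new attacked target to $t'$, and strict monotonicity of $U_i^d$ in $c_t$ (immediate from Equation~\eqref{equ:equation1-2} together with $U_i^{c,d}\geq U_i^{u,d}$) lets me compare $U_i^d(c_{t'}+v_{t'})$ against $U_i^d(c_{at_i})$ and bound the resulting jump below by $gap_i$ whenever $at_i\neq t'$.

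The main obstacle I expect is pure bookkeeping: keeping the defender-favoring tie-breaking convention consistent so that $at_i$ is unambiguously the argmax of $U_i^d$ over $\Gamma_i(\bm{c})$, and verifying that a ``small'' $v_{t'}$ is small enough that no second indifference threshold of $\Gamma_i$ is crossed---which is precisely the premise needed to invoke Lemma~\ref{pro:property1.2}. Once those technicalities are in place the bound on the jump is a one-line application of the definition of $gap_i$ in each regime, and the dichotomy between flat region ($at_i\in\Gamma_i(\bm{c}+\bm{v})$ with unchanged coverage) and jump region ($at_i\notin\Gamma_i(\bm{c}+\bm{v})$) falls out of the case split directly.
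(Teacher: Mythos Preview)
Your proposal is correct and follows essentially the same route as the paper: both arguments reduce to a case split over the three regimes of Lemma~\ref{pro:property1.2}, track whether $at_i$ survives in $\Gamma_i(\bm{c}+\bm{v})$, and read off the jump from the definition of $gap_i$. Your treatment is slightly more careful in regime~(iii) (you explicitly invoke monotonicity of $U_i^d$ in $c_t$ to handle the perturbed coordinate, whereas the paper silently writes $\max_{t\in\Gamma_i(\bm{c}+\bm{v})}U_i^d(c_t)$ with the unperturbed $c_t$), but the underlying decomposition and the key observation that $\Gamma_i(\bm{c}+\bm{v})\subseteq\Gamma_i(\bm{c})$ whenever $at_i$ drops out are the same.
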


From the perspective of optimization, as shown in Figure~\ref{fig:Ua_Ud}, $U_i^d(\bm {c})$ does not change smoothly. Meanwhile, the space of $\bm{c}$ may reach thousands of non-differentiable dimensions, making the problem hard to converge.

\begin{figure}[htbp]
\centering
\includegraphics[width=0.6\columnwidth]{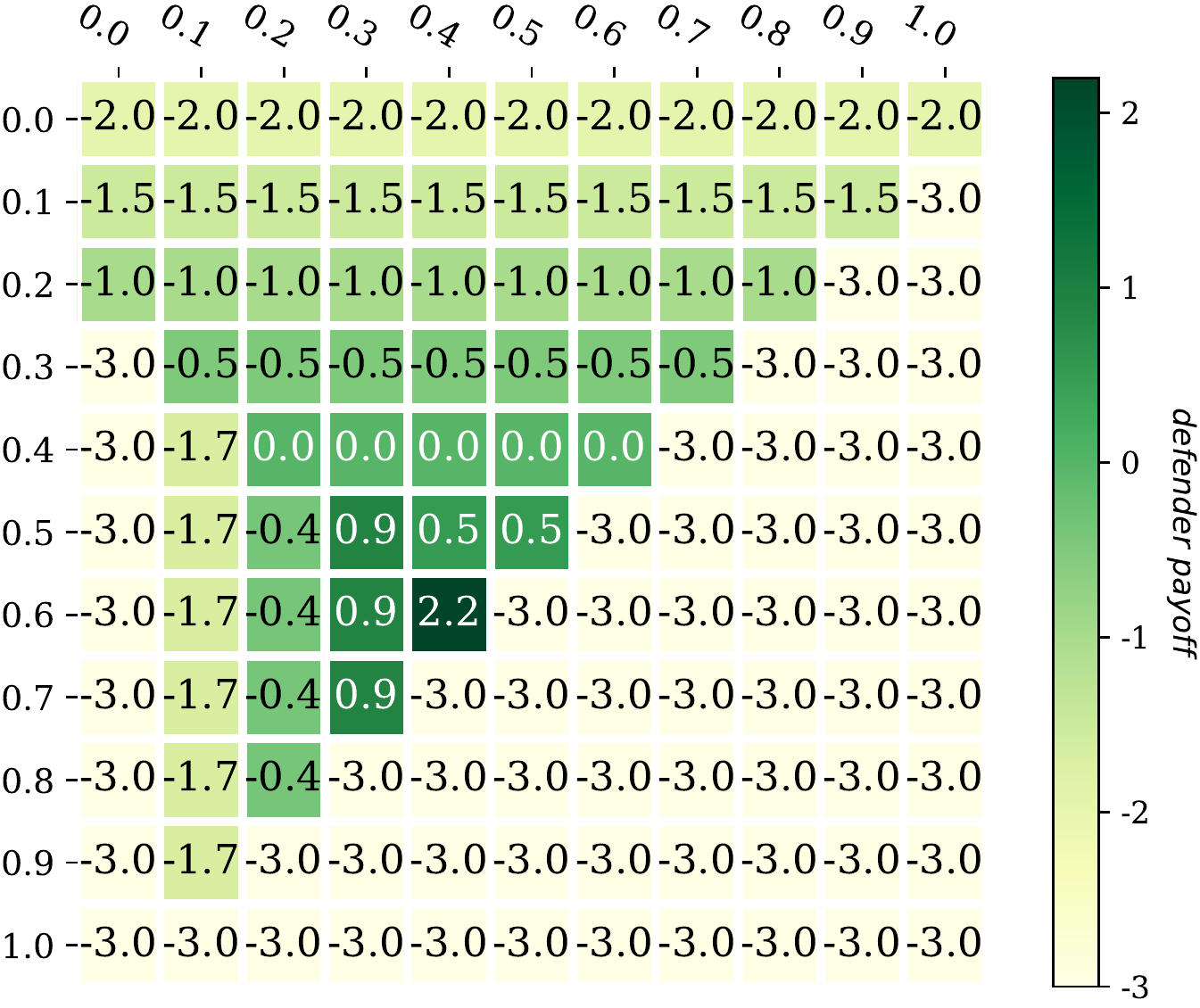}
\caption{The heatmap of $\mathcal{D}$'s payoff $U^d_i(\bm{c})$. It is an illustration in the small-scale $N=T=2$ MOSG problem. The axes mean the resources allocated on $t_1$ and $t_2$. The defender payoff optimization is a maximum task, whose goal is to find the darkest area, e.g., $2.2$. 
The payoff of the area whose total resources violate the constraint, i.e., $\Vert\bm{c}\Vert_1\leq r\cdot T=1$, equals the global minimum, e.g., $-3.0$.}
\label{fig:Ua_Ud}
\end{figure}

\textbf{Maximal Indifference of $\Gamma(\bm{c})$.}
Lemma~\ref{pro:property1.2} and~\ref{pro:property1.1} reveal that the landscape of large-scale MOSG problem within $[0,1]^T$ is a high-dimensional step function whose landscape changes frequently, which is difficult to solve. Fortunately, Lemma~\ref{pro:property2} shows that it is feasible to solve large-scale MOSG problems in the discrete space $\Gamma(\bm{c})$.

\begin{restatable}[Maximal Indifference of $\Gamma(\bm{c})$~\citep{ERASER}]{lemma}{lemmaoptimalexpansion}\label{pro:property2}
For $\mathcal{A}_i$, 
the optimal expansion rule of $\Gamma_i(\bm{c})$ is adding $t$ in the descending order of $U^{u,a}_i(t)$.
\end{restatable}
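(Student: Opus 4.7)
The plan is to recast the "expansion rule" claim as a statement about which target next achieves indifference with the current attack set as the defender increases coverage. Throughout, I would work with the attacker payoff $U^a_i(c_t) = c_t U^{c,a}_i(t) + (1-c_t)U^{u,a}_i(t)$, noting that $U^{c,a}_i(t)\le U^{u,a}_i(t)$ so this quantity is non-increasing in $c_t$, and that by the definition \eqref{equ:equationBRa} every $t\in\Gamma_i(\bm{c})$ attains the common value $v := \max_{t'\in[T]} U^a_i(c_{t'})$, while every $t\notin\Gamma_i(\bm{c})$ gives strictly less.

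First I would put the allocation into a canonical form at the moment a new target joins $\Gamma_i$. It suffices to assume $c_t = 0$ for every $t\notin\Gamma_i(\bm{c})$, so that $U^a_i(c_t)=U^{u,a}_i(t)$ on all such $t$. This is justified by Lemma~\ref{pro:property1.2}: putting resources on a target outside the attack set neither changes $\Gamma_i$ nor helps against $\mathcal{A}_i$, so any coverage spent outside can be reallocated to targets in $\Gamma_i$, which keeps the allocation feasible and can only decrease $v$. Under this canonical allocation, the targets outside $\Gamma_i(\bm{c})$ are strictly ordered by $U^{u,a}_i(t)$, and every such value is $<v$.

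Next I would track what happens as the defender adds more resources to the current $\Gamma_i(\bm{c})$. By Equations~\eqref{equ:equation1}–\eqref{equ:equation1-2}, any increment in $c_t$ for $t\in\Gamma_i$ strictly decreases $U^a_i(c_t)$, so, maintaining indifference inside $\Gamma_i$, the common value $v$ strictly decreases. A new target $t^\star\notin\Gamma_i(\bm{c})$ first enters the attack set precisely when $v$ drops to $U^{u,a}_i(t^\star)$, and since $v$ is monotonically decreasing while the outside targets retain their $U^{u,a}_i$ values, this first crossing occurs at $t^\star=\arg\max_{t\notin\Gamma_i(\bm{c})} U^{u,a}_i(t)$. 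Iterating this argument inductively on the size of $\Gamma_i$ yields the descending-$U^{u,a}_i$ expansion rule.

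The main obstacle is the "without loss of generality $c_t=0$ outside $\Gamma_i$" reduction: one must argue that optimal defender allocations can always be taken in this canonical form, otherwise a competing expansion order could superficially look different even though it produces the same payoff structure. Lemma~\ref{pro:property1.2} handles this by guaranteeing that infinitesimal allocations to $t\notin\Gamma_i$ leave $\Gamma_i$ (and hence the defender's payoff against $\mathcal{A}_i$) unchanged, so a standard exchange argument moves all such mass back onto $\Gamma_i$ without loss. The remainder is bookkeeping: track $v$ as a piecewise-linear decreasing function of the cumulative resources spent on $\Gamma_i$, and read off the crossing points against the sorted $U^{u,a}_i$ values.
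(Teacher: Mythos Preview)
Your proposal is correct and in fact more detailed than what the paper itself offers. The paper does not give a self-contained proof of Lemma~\ref{pro:property2}: it simply attributes the result to the ORIGAMI algorithm of~\citep{ERASER} and adds a few sentences of intuition about SSE existence and attacker predictability, without carrying out the monotone-threshold argument you sketch. Your approach---fixing the canonical allocation with $c_t=0$ outside $\Gamma_i$, tracking the common indifference value $v$ as a decreasing function of resources, and reading off the next entrant as $\arg\max_{t\notin\Gamma_i} U^{u,a}_i(t)$---is essentially the mechanism underlying ORIGAMI, so you are reconstructing the cited result rather than taking a different route.

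One small point: your invocation of Lemma~\ref{pro:property1.2} to justify the canonical-form reduction is slightly loose, since that lemma is stated for an infinitesimal single-component perturbation $\bm{v}$, whereas you need to remove a possibly non-infinitesimal amount of coverage from targets outside $\Gamma_i$. This is harmless because $U^a_i(c_t)$ is affine in $c_t$ and strictly below $v$ for $t\notin\Gamma_i$, so decreasing such $c_t$ all the way to zero keeps $U^a_i(c_t)=U^{u,a}_i(t)<v$ and hence leaves $\Gamma_i$ unchanged; you may want to say this directly rather than lean on the infinitesimal statement.
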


Lemma~\ref{pro:property2} is the core of the exact algorithm ORIGAMI~\citep{ERASER}. The optimal expansion of $\Gamma_i(\bm{c})$ refers to the process of adding a target $t\notin\Gamma_i(\bm{c})$ to $\Gamma_i(\bm{c})$ without affecting the original $\Gamma_i(\bm{c})$. The optimal expansion of $\Gamma_i(\bm{c})$ yields the optimal $\Gamma_i(\bm{c})$, which maximizes the efficiency of resource utilization. Lemma~\ref{pro:property2} describes the optimal way for expanding $\Gamma_i(\bm{c})$ from $\vert\Gamma_i(\bm{c})\vert=1$ to $\vert\Gamma_i(\bm{c})\vert=T$, called the \textit{optimal target order}. 
The significance of Lemma~\ref{pro:property2} is to determine the optimal target order in advance, which reduces the solution search space.
The search algorithm no longer needs to care about the content details of the optimal $\Gamma_i(\bm{c})$ anymore, but only the optimal size $\vert\Gamma_i(\bm{c})\vert$ because of the optimal target order. Therefore, the algorithm only needs to use the one-dimensional integer to represent the optimal $\Gamma_i(\bm{c})$. Similarly, in the $N$-attacker SG, the $N$-dimensional integer is enough for representing the optimal ($\Gamma_1(\bm{c}),\ldots,\Gamma_N(\bm{c})$), which is defined as the \textit{attack group} $\Gamma_s(\bm{c})$.


\subsubsection{How to Realize Discretization}
\label{sec:ind_rep}
According to Lemma~\ref{pro:property2}, 
$T$-length \textit{continuous} solution representation $\bm{c}$ termed \textbf{$\bm{c}$-code} can be uniquely discretized into $N$-length \textit{integral} (\textit{discrete}) solution representation $\bm{I}$ termed \textbf{$\bm{I}$-code}, generally $N\ll T$.

\begin{figure}[htbp]
\centering
\includegraphics[width=0.6\columnwidth]{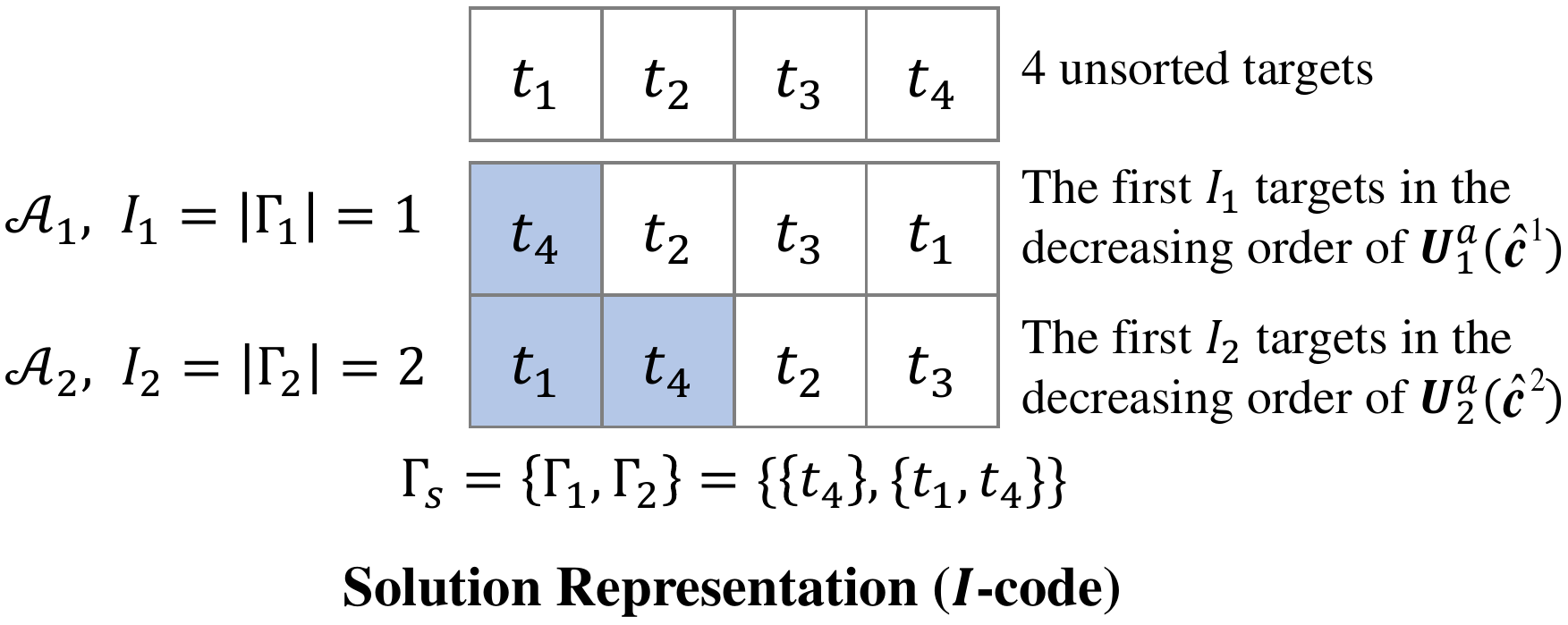}
\caption{The solution representation about $I$-code. It is an illustration introducing \textit{attack set} $\Gamma_i(\bm{c})$, and \textit{attacker group} $\Gamma_s(\bm{c})$ in the $N=2, T=4$ MOSG problem. 
$I_i$ represents $\Gamma_i(\bm{c})$'s size $\vert\Gamma_i(\bm{c})\vert$. 
$I_i$ uniquely determines $\Gamma_i(\bm{c})$, that is, $\Gamma_i(\bm{c})$ consists of the first $I_i$ targets in the decreasing order of $U^{u,a}_i(t)$.
All $\Gamma_i(\bm{c})$ make up $\Gamma_s(\bm{c})$. For example, $\bm{I}=(1, 2)$ indicates $\Gamma_s=\{\{t_4\}$, $\{t_1, t_4\}\}$. Since both $\mathcal{A}_1$ and $\mathcal{A}_2$ attack only one target, $\bm{I}$ can induce 2 possible $\bm{at}$, i.e., $(t_4, t_1)$ or $(t_4, t_4)$. 
} 
\label{fig:sol_rep}
\end{figure}

For the implementation of discretization component, $\mathcal{A}_i$'s optimal $\Gamma_i$ satisfies $\Gamma_i(\bm{c})\in\{\{t_i^{(j)}\}_{j=1}^1,\ldots,\{t_i^{(j)}\}_{j=1}^T\}$, where $t_i^{(1)},\cdots,t_i^{(T)}$ is the targets in the optimal order of descending $U^{u,a}_i(t)$. Figure~\ref{fig:sol_rep} gives an illustration of discretization process. Specifically, for $\mathcal{A}_i$, compute the optimal target order in the descending order of $U_i^{u,a}(t)$. The discrete solution representation $\bm{I}$ can uniquely represent $\Gamma_s(\bm{c})$.


\subsection{Optimization}
\label{sec:opt}
The goal of MOSGs, cf. Equation~\eqref{equ:equationOptFunc}, is to find a solution set $C^*\subseteq [0,1]^T$ that corresponds to a well-spaced PF.  
Since the discretization component maps solutions from $c$-code representation to simplified $I$-code representation, each $I$-code corresponds to a set of $c$-code, denoted as $\tilde{C}$. To find $C^*$, the optimization component selects one non-dominant solution $\bm{c}^*$ from each $\tilde{C}$. 
Problem~\eqref{equ:equationOptFunc} is transformed to Equation~\eqref{equ:equationBi-OptFunc}.
\begin{equation}
\label{equ:equationBi-OptFunc}
\begin{aligned}
& \max_{\bm {c}\in C^*}\bm {F}(\bm {c})=\max_{\bm {c}\in C^*}( U_1^{d}(\bm {c}),\ldots,U_N^{d}(\bm {c})) \,,\\
& s.t.\ C^*\!=\left\{\bm{c}^*_i\mid\bm{c}^*_i\succeq \bm{c}'_i, \bm{c}^*_i\in\tilde{C}_i, \bm{c}'_i\in\tilde{C}_i\setminus\{\bm{c}^*_i\}, i\in [\vert\tilde{C}\vert]\right\}\,.
\end{aligned}
\end{equation}
Equation~\eqref{equ:equationBi-OptFunc} focuses on the diversity and distribution of solutions. The convergence of found $\bm{c}^*_i$ is detailed in Section~\ref{sec:rat_sim}. Equation~\eqref{equ:equationBi-OptFunc} requires only weak dominance $\bm{c}^*_i\succeq\bm{c}'_i$. The reason is to deal with situations where multiple non-dominant solutions have the same payoff.

For the implementation of the optimization component, we use a stable many-objective evolutionary algorithm NSGA-III~\citep{NSGA-III} and an energy-based adaptive reference direction set generator Riesz~\citep{ref-dirs-energy}. We should point out that NSGA-III and Riesz are not the only options. The reasons for the option of this paper will be fully explained in Section~\ref{sec:secCon_eva}.
Specifically, Riesz defines a set of problem-independent reference vectors in advance, which are uniformly distributed in the objective space and invariant throughout the NSGA-III evolution process. Riesz uses $\vert\tilde{C}\vert$ (population size) reference vectors to divide the space of $\bm{c}$ into multiple subspaces $\tilde{C}$. NSGA-III returns one non-dominant solution $\bm{c}^*$ from each $\tilde{C}$ NSGA-III uses Riesz's reference vectors to maintains the diversity among the population. 
In each iteration of optimization, the many-objective evolutionary algorithm assigns at least one solution to each reference vector to maintain a set of well-spaced solutions. Meanwhile, NSGA-III preserves well-convergent solutions by the dominance relation. Especially, NSGA-III preserves the non-domination layers in order until a layer cannot be preserved completely due to the limited population size. NSGA-III then preserves the top well-spaced solutions. More details are shown in~\ref{app:NSGA-III}.


\subsection{Evaluation}
\label{sec:eva}
\subsubsection{Why Evaluation}
In each iteration of the optimization component in SDES, the general many-objective EA operators generates new offspring $I$-code with unknown fitness. 
To evaluate $I$-code, the evaluation component formulates the process of mapping $I$-code to $c$-code as a combinatorial optimization problem, cf. Equation~\eqref{equ:c_gamma}. And then, to solve the combinatorial optimization and bypass the potential combinatorial explosion, the evaluation component provides a greedy algorithm (BitOpt) and a solution divergence measurement for BitOpt. An illustration of the evaluation component is shown in Figure~\ref{fig:L-code}.

\begin{figure}[!htbp]
\centering
\includegraphics[width=\textwidth]{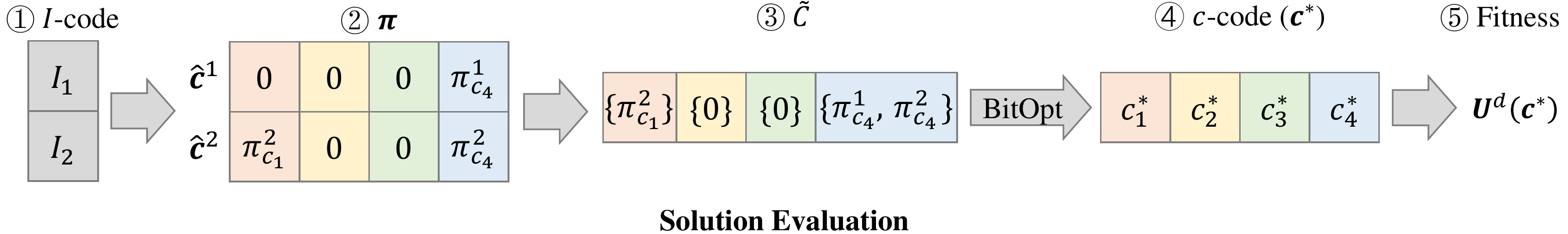}
\caption{The solution evaluation. It is an illustration of in the problem as same as Figure~\ref{fig:sol_rep}. In the step \ding{172}, $I$-code indicates \textit{attack set} $\Gamma_1(\bm{c})=\{t_4\}$ and $\Gamma_2(\bm{c})=\{t_1,t_4\}$. In the step \ding{173}, the indifference equation calculates \textit{alternative set} $\pi_{c_1}=\{\pi_{c_1}^2\}$ for $c_1$ and $\pi_{c_4}=\{\pi_{c_4}^1$, $\pi_{c_4}^2\}$ for $c_4$. In the step \ding{174}, all alternative sets get \textit{subspace} $\tilde{C}$ by Cartesian product, i.e., $\tilde{C}=\pi_{c_1}\times\pi_{c_4}$. In the step \ding{175}, a greedy algorithm named BitOpt is used to find an optimal solution $\bm{c}^*$ and bypass the potential combinatorial explosion (here set $c_2=c_3=0$ directly). In the step \ding{176}, the fitness would guide the direction of the optimization component.
}
\label{fig:L-code}
\end{figure}

\subsubsection{How to Realize Evaluation}
\label{sec:ind_eva}

\textbf{Formalize the Combinatorial Optimization Problem.}
The evaluation component wants to map $I$-code to $c$-code to obtain solution fitness $\bm{U}^d(\bm{c})$. Since $I$-code and $c$-code are connected by $\Gamma(\bm{c})$, i.e., $I$-code$\to$$\Gamma(\bm{c})$$\to$\linebreak$c$-code, a combinatorial optimization problem for $\Gamma(\bm{c})$ can be formalized.

The combinatorial optimization problem is visualized in steps \ding{172}$\sim$\ding{174} of Figure~\ref{fig:L-code}. Specifically, in steps \ding{172}$\sim$\ding{173}, recall that $\mathcal{A}_i$'s $\Gamma_i(\bm{c})$ is defined as the set of all targets that produce the maximum $U^a_i(c_t)$. Therefore, if $t\notin\Gamma_i(\bm{c})$, $\mathcal{A}_i$'s resources alternative for $t$-th target must be $0$, denoted as $\pi_{c_t}^i=0$. Otherwise, $\pi_{c_t}^i$ can be calculated from the indifference equation $\pi_{c_t}^i=\frac{U^a(c_{t'})-U^{u,a}_i(t)}{U^{c,a}_i(t)-U^{u,a}_i(t)}$~\citep{ERASER}, where $t'\in\Gamma_i(\bm{c})$ and it comes from Equation~\eqref{equ:equation1}. In steps \ding{173}$\sim$\ding{174}, the search domain of each $c$-code component $c_t$ contains $N$ solution alternatives, denoted as the alternative set $\pi_{c_t}=\{\pi_{c_t}^1,\ldots,\pi_{c_t}^N\}$. Overall, the subspace $\tilde{C}$ mentioned in Equation~\eqref{equ:equationBi-OptFunc} is the Cartesian product of $T$ alternative sets $\pi_{c_t}$, i.e., $\tilde{C}=\bigtimes_{t=1}^T {\pi_{c_t}}$. The combinatorial optimization problem is formalized as finding the non-dominant solution $\bm{c}^*\in\tilde{C}$. 
Steps \ding{172}$\sim$\ding{174} in Figure~\ref{fig:L-code} is formalized as

\begin{equation}
\label{equ:c_gamma}
\begin{aligned}
& \bm{c}^*=\mathop{\arg\max}_{\bm{c}\in\bigtimes_{t=1}^T {\pi_{c_t}}}\bm{F}(\bm{c})\,,\\
s.t. & \pi_{c_t}=\{\pi_{c_t}^1,\ldots,\pi_{c_t}^N\}\forall t\in[T]\,,\\
&{\pi_{c_t}^i}=
\begin{cases}
\frac{U^a(c_{t'})-U^{u,a}_i(t)}{U^{c,a}_i(t)-U^{u,a}_i(t)},&{\text{if}}\ {t\in\Gamma_i(\bm{c})},\\
    {0,}                                        &{\text{otherwise.}}
\end{cases}\forall i\in[N]\,,
\end{aligned}
\end{equation}
where $\bm{F}(\bm{c})$ is $\mathcal{D}$'s comprehensive payoff $(U_1^d(\bm{c}),\ldots,U_N^d(\bm{c}))$. Lemma~\ref{pro:property2} shows that the alternative set $\pi_{c_t}$ must contain $c^*_t$. Therefore, Equation~\eqref{equ:c_gamma} guarantees that the single Pareto solution obtained by the evaluation component is the exact optimal solution~\citep{ERASER}, which provides a convergence guarantee for the SDES framework.



\textbf{Prepare Measurement for the Combinatorial Optimization Problem.} Before solving Equation~\eqref{equ:c_gamma}, here introduces novel measurements about $\Gamma(\bm{c})$ to guide the map from $I$-code to $c$-code, cf. Definition~\ref{def:sim_mea} and~\ref{def:dis_mea}. 
Specifically, this section first introduces Definition~\ref{def:sim_mea} measuring the divergence of $\Gamma_i(\bm{c})$ in the single attacker scenario, and generalizes it to Definition~\ref{def:dis_mea} measuring the divergence of $\Gamma_s(\bm{c})$ in the $N$-attackers scenario.

Since $\mathcal{A}_i$ finally only attacks one target $at_i$, the divergence measurement in Definition~\ref{def:sim_mea} focuses on the consistency of $at_i$ between two different $\Gamma_i(\bm{c})$.
\begin{definition}[Divergence between $\Gamma_i(\bm{c})$]\label{def:sim_mea}
For $\mathcal{A}_i$, given two different attack sets $\Gamma^a_i(\bm{c})$ and $\Gamma^b_i(\bm{c})$ and their attacked target $at_i^a$ and $at_i^b$, if $at_i^a=at_i^b$, then $\Gamma^a_i(\bm{c})$ and $\Gamma^b_i(\bm{c})$ are not divergent. Otherwise, $\Gamma^a_i(\bm{c})$ and $\Gamma^b_i(\bm{c})$ are divergent. 
\end{definition}

From Definition~\ref{def:sim_mea}, the semantic of $\Gamma^a_i(\bm{c})$ and $\Gamma^b_i(\bm{c})$ may be the same as long as they both contain the same attacked target, i.e., $at_i^a=at_i^b$. The divergence between $\Gamma_i^a(\bm{c})$ and $\Gamma_i^b(\bm{c})$ is defined as a discrete metric, cf. Equation~\eqref{equ:sim_equ}.


\begin{equation}  
\label{equ:sim_equ}
{K(\Gamma^a_i(\bm{c}),\Gamma^b_i(\bm{c}))}=
\begin{cases}
0, & {\text{if}}\ at^a_i = at^b_i,\\
{1,} & {\text{otherwise.}}
\end{cases}
\end{equation}


\begin{definition}[Divergence between $\Gamma_{s}(\bm{c})$]\label{def:dis_mea}
The divergence of $\Gamma_s^a(\bm{c})$ and $\Gamma_s^b(\bm{c})$ is the sum of $N$ divergences of $\Gamma_i^a(\bm{c})$ and $\Gamma_i^b(\bm{c})$, which is defined as $K_s(\Gamma_s^a(\bm{c}),\Gamma_s^b(\bm{c}))=\sum_{i=1}^NK(\Gamma_i^a(\bm{c}),\Gamma_i^b(\bm{c}))$.
\end{definition}

Definition~\ref{def:dis_mea} extends the Definition~\ref{def:sim_mea} from the single attacker scenario to the multiple attackers scenario. The smaller the value of $K_s(\cdot,\cdot)$, the lower the divergence. In addition, as a metric, Definition~\ref{def:dis_mea} has non-negativity and holds the axioms of symmetry and triangle inequality, but without the identity of indiscernible axiom. The reason is Definition~\ref{def:sim_mea} and~\ref{def:dis_mea} only focus on $at_i$, but not all targets in $\Gamma_i(\bm{c})$.

\textbf{Solve the Combinatorial Optimization Problem.}
Since the divergence measurement $K_s(\cdot,\cdot)$ is capable of measuring the distance between two $\Gamma_s(\bm{c})$, the idea of solving Equation~\eqref{equ:c_gamma} is to find a fixed ideal attack group $\hat{\Gamma}_s$ and to search for $\bm{c}\in\tilde{C}$ corresponding to $\Gamma_s(\bm{c})$ with close to $\hat{\Gamma}_s$. Furthermore, to bypass the potential combinatorial explosion caused by Equation~\eqref{equ:c_gamma}, the greedy algorithm is used to optimize $\bm{c}$ bit by bit. Steps \ding{174}$\sim$\ding{175} in Figure~\ref{fig:L-code} show the visualization of greedy algorithm. Overall, with $K_s(\cdot,\cdot)$ as the metric and greedy algorithm as the optimization method, $c^*_t\in\pi_{c_t}$ is searched to minimize $K_s(\hat{\Gamma}_s, \Gamma_s(\bm{c}))$. Section~\ref{sec:val_ana} would discuss the consistency between the results of the greedy algorithm and the true PF.

Here introduces the construction details of the ideal attack group $\hat{\Gamma}_s$. 
If all limited resources are allocated to defend $\mathcal{A}_i$ as the way in Lemma~\ref{pro:property2}, the resource allocation by Lemma~\ref{pro:property2} is called the ideal solution for $\mathcal{A}_i$ and denoted as $\hat{\bm{c}}^i$. $N$ ideal solutions $(\hat{\bm{c}}^1,\ldots,\hat{\bm{c}}^N)$ can yield $\mathcal{D}$'s ideal payoff vector $(U_1^d(\bm{c}),\ldots,U_N^d(\bm{c}))$, which would not be dominated by any payoff $\bm{U}^d(\bm{c})$, where $\bm{c}\in\tilde{C}$. The attack group corresponding to $N$ ideal solutions is denoted as $\hat{\Gamma}_s=\{\Gamma_1(\hat{\bm{c}}^1),\ldots,\Gamma_N(\hat{\bm{c}}^N)\}$. The ideal attacked target is denoted as $\hat{\bm{at}}=(\hat{at}_1,\ldots,\hat{at})$. The attacked target is the key to calculate divergence measurement $K_s(\cdot,\cdot)$. Note that $\hat{\bm{at}}$ can be calculated in advance and only needs to be calculated once.



\textbf{The Pseudocode of the Evaluation Component.} As shown in Algorithm~\ref{alg:pseudo1}, the process of the evaluation component consists of three steps.
For step 1, necessary information, $\hat{\bm{at}}$, $P$ and $\bm{S}$, is prepared in Line~\ref{code:alg1:7}-\ref{code:alg1:18}. The attraction set $P$ and the counter vector $\bm{S}$ are used for counting the attract relationship. Specifically, $P[t]$ records the set of attackers interested in the target $t$, i.e., $P[t]=\{i|t\in\Gamma_i(\hat{\bm{c}}^i)\}$. $S_i$ counts the number of $\Gamma_i(\hat{\bm{c}}^i)\in\Gamma_s(\hat{\bm{c}}^i)$ containing target $t$. 
For step 2, in Line~\ref{code:alg1:BSM_begin}-\ref{code:alg1:24}, the greedy algorithm named bit-wise optimization (BitOpt) repeats $T$ times to form the optimal resources vector $\bm{c}^*$. Finally, in Line~\ref{code:alg1:fitness}, the step 3 return the payoff $\bm{U}^d(\bm{c}^*)=( U^d_1(\bm{c}^*),\ldots,U^d_N(\bm{c}^*))$ as solution fitness. Note that the sample complexity of the evaluation component is $O(NT)$. The pseudocode of BitOpt is shown in~\ref{app:det_met}.

\begin{algorithm}
\caption{$I$-code Solution Evaluation.}
\label{alg:pseudo1}
\begin{algorithmic}[1]
    \REQUIRE $\bm{I}$, $\bm{A}$, $r\cdot T$. \small\textcolor{Green}{\% $I_i$ represents the size of attack set, i.e., $I_i=\vert\Gamma_i(\hat{\bm{c}}^i)\vert$, where $\hat{\bm{c}}^i$ represents the ideal coverage vector initialized by Lemma~\ref{pro:property2}}. The $i$-th row of $\bm{A}$ represents $\mathcal{A}_i$ target order, i.e., the index vector of targets in decreasing of $U_i^{u,a}(t)$. $r\cdot T$ represents the total resource amount.
    \ENSURE 
    \STATE$\bm {c}^*\gets$ the zero best coverage vector.
    \STATE$\hat{\bm{at}}\gets$ the ideal attacked target vector calculated by $(\bm{c}^1,\ldots,\bm{c}^N)$.
    \STATE$P\gets$ the empty attraction set.
    \STATE$\bm{S}\gets$ the zero counter vector.
    \FOR{$0\leq i < N$}\label{code:alg1:7}
        \STATE$\Gamma_i(\hat{\bm{c}}^i) = \bm{A_i}[:I_i]$.
        \small \textcolor{Green}{\% $[:I_i]$ represents selecting the top $I_i$ targets.}
        \FOR{$t \in \Gamma_{i}(\hat{\bm{c}}^i)$}
            \STATE$S_t = S_t + 1$.
            $P[t] = P[t]\cup \{i\}$.
        \ENDFOR\label{code:alg1:12}
        
    
    
    
    \ENDFOR\label{code:alg1:18}
    \FOR{$t$, $count_1$ in enumerate($\bm{S}$)}\label{code:alg1:BSM_begin}
        \STATE$c^*_t=BitOpt(count_1, P, t, \bm{I} ,\bm{A}, \hat{\bm{at}})$, cf. Algorithm~\ref{alg:pseudo2} in~\ref{app:fra_pseu_BSM}.

        \IF{$\Vert\bm{c}^*\Vert_1 > r\cdot T$}
            \STATE\textbf{return} $infeasible$.
        \ENDIF
    \ENDFOR\label{code:alg1:24}
    \STATE\textbf{return} $\bm{U}^d(\bm{c}^*)$. 
    \small \textcolor{Green}{\% $\mathcal{D}$'s payoff for defending all $\mathcal{A}_i$s.}\label{code:alg1:fitness}
\end{algorithmic}
\end{algorithm}

\subsection{Refinement}
After the iterations of above components, SDES calls the MIN-COV~\citep{MOSG} subroutine once to refine solutions. MIN-COV is a lightweight adjustment part, and all comparison methods mentioned in~\citep{MOSG,extended-MOSG} use MIN-COV. For $\mathcal{A}_i$, MIN-COV tries to find a better solution that yields no decrease in payoff and consumes fewer resources. Specifically, MIN-COV accepts the current optimal resource allocation scheme $\bm{c}$ and the lower bound constraint $\bm{b}$, and attempts to find $\bm{c}'$ where $\Vert\bm{c}'\Vert_1\leq\Vert\bm{c}\Vert_1$ and $U_i^a(\bm{c}')\geq\bm{b}$. Since MIN-COV is designed for the $\epsilon$-constraint framework, it cannot directly be used, because $\bm{b}$ represents the lower bound for incremental updates that do not exist in many-objective EAs. Therefore, the refinement component adopts the following adjustments. $\bm{c}$ represents $\bm{c}^*$ found in solution evaluation, while $\bm{b}$ represents the solution’s fitness corresponding to $\bm{c}$, i.e., $\bm{b}=U_i^a(\bm{c})$. The meaning of MIN-COV for many-objective EAs is to find a new non-dominant solution with fewer resources. Then redistribute the remaining resources to try to find further payoff growth.

\subsection{SDES Framework and Sample Complexity Analysis}\label{sec:fra_com}

\begin{algorithm}
\caption{The SDES Framework.}
\label{alg:who_fra}
\begin{algorithmic}[1]
    \REQUIRE $T$-length continuous solutions ($I$-code) initialization.
    \ENSURE \quad
    
    \STATE Discretizing $T$-length high-dimensional continuous solutions ($I$-code) into $N$-length low-dimensional discrete solutions ($c$-code), $N\ll T$.
    
    
    \WHILE{Terminal criteria not satisfy}

        \STATE Optimizing $I$-code by NSGA-III.

        \STATE Mapping $I$-code to $c$-code by greedy algorithm (preparation for evaluating $I$-code).

        \STATE Evaluating $c$-code and returning $c$-code's fitness as $I$-code's fitness.
        


    \ENDWHILE
    

    \STATE Refining $c$-code.

    \STATE\textbf{return} the refined high quality $c$-code.

\end{algorithmic}
\end{algorithm}

The whole SDES framework is shown as Algorithm~\ref{alg:who_fra}. SDES is faster than other comparison method, e.g., the SOTA $\epsilon$-constraint method ORIGAMI-M. The efficiency of SDES reflects in two points during iterations: the complexity of the evaluation component and the number of the solutions to be refined in the refinement component. 

As for the complexity of the evaluation component, ORIGAMI-M finds a solution satisfying the bound constraint in $O(NT^3)$, where $O(T^2)$ is used to accelerate the convergence of the bound constraint~\cite{MOSG}. As for the sample complexity of SDES, SDES transform the MOSG problem into the combinatorial optimization problem. After transformation, SDES shrinks the solution domain into a feasible range, so that the number of inefficient feasibility constraint detections is reduced. As shown in Algorithm~\ref{alg:pseudo1}, the complexity of the evaluation component is $O(NT)$, which is a linear complexity if either the number of targets or attackers is fixed. 
As for the number of the solutions to be refined in the refinement component, SDES calls the refinement component in constant times, while ORIGAMI-M call it exponential times with respect to the number of attackers~\citep{MOSG, extended-MOSG}.

\section{Theoretical Analysis}
\label{sec:val_ana}
\subsection{Optimization Consistency}
\label{sec:rat_sim}
This section proves the consistency of the combinatorial optimization and the proposed bit-wise optimization in the restoration component. It involves (i) Optimizing the similairity of ideal solution $\{\bm{\hat{c}}^{1},\ldots,\bm{\hat{c}}^{N}\}$ and feasible solution $\tilde{\bm{c}}$ can also find the solutions on the true PF of the MOSG problems.
(ii) The lower the divergence of $\Gamma_s$, the closer payoff of the solutions. The proof detail is shown in Appendix~\ref{app:proof:3}.

\begin{restatable}{lemma}{lemmaTheoAnal}\label{lem:lemma1}
If $\Gamma$ is constructed from the optimal expansion rule in Lemma~\ref{pro:property2} and $\Gamma(\bm{c})\subseteq\Gamma(\bm{c}')$, then $U^d(\bm{c})\leq U^d(\bm{c}')$.
\end{restatable}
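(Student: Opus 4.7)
The plan is to exploit the structure that the optimal expansion rule (Lemma~\ref{pro:property2}) imposes on $\bm{c}$ and $\bm{c}'$, and then chain two monotonicity facts: (i) the attacker's indifference value only decreases as $\Gamma$ grows, and (ii) the defender's per-target payoff $U^d(c_t)$ is monotone in $c_t$. This lets us transfer the set inclusion $\Gamma(\bm{c})\subseteq\Gamma(\bm{c}')$ into a pointwise inequality $U^d(c_t)\leq U^d(c'_t)$ on $\Gamma(\bm{c})$, after which taking the defender's max completes the argument.

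First, I would unpack the hypothesis. Because $\Gamma$ is produced by the optimal expansion rule, both $\Gamma(\bm{c})$ and $\Gamma(\bm{c}')$ are prefixes in the decreasing order of $U^{u,a}(\cdot)$, resources are placed only on targets inside the respective attack set ($c_t=0$ for $t\notin\Gamma(\bm{c})$, similarly for $\bm{c}'$), and the indifference equation holds inside: there exist constants $v,v'$ with $U^a(c_t)=v$ for every $t\in\Gamma(\bm{c})$ and $U^a(c'_t)=v'$ for every $t\in\Gamma(\bm{c}')$. Next I would show $v'\leq v$. Pick any $t_\star\in\Gamma(\bm{c}')\setminus\Gamma(\bm{c})$ (if the sets coincide the optimal expansion rule forces $v=v'$). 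Since $t_\star\notin\Gamma(\bm{c})$, the definition of the attack set together with $c_{t_\star}=0$ yields $U^{u,a}(t_\star)=U^a(c_{t_\star})\leq v$. On the other hand $t_\star\in\Gamma(\bm{c}')$ gives $v'=U^a(c'_{t_\star})$, and by Equation~\eqref{equ:equation1} together with $U^{c,a}\leq U^{u,a}$, the map $c\mapsto U^a(c)$ is non-increasing, so $v'=U^a(c'_{t_\star})\leq U^a(0)=U^{u,a}(t_\star)\leq v$.

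With $v'\leq v$ in hand, for every $t\in\Gamma(\bm{c})\subseteq\Gamma(\bm{c}')$ the indifference equations read $U^a(c_t)=v\geq v'=U^a(c'_t)$. Strict monotonicity of $U^a$ in the coverage coordinate then gives $c_t\leq c'_t$. Because $U^{c,d}\geq U^{u,d}$, Equation~\eqref{equ:equation1-2} shows $c\mapsto U^d(c)$ is non-decreasing, so $U^d(c_t)\leq U^d(c'_t)$ for all $t\in\Gamma(\bm{c})$. Using the tie-breaking convention that the defender's realized payoff against the attacker is $\max_{t\in\Gamma(\cdot)}U^d(c_t)$, I conclude
\begin{equation*}
U^d(\bm{c})=\max_{t\in\Gamma(\bm{c})}U^d(c_t)\leq\max_{t\in\Gamma(\bm{c})}U^d(c'_t)\leq\max_{t\in\Gamma(\bm{c}')}U^d(c'_t)=U^d(\bm{c}'),
\end{equation*}
where the last inequality uses $\Gamma(\bm{c})\subseteq\Gamma(\bm{c}')$.

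The main obstacle I expect is step two, i.e., arguing $v'\leq v$ rigorously from the phrase ``constructed from the optimal expansion rule''. Concretely, one has to make sure the hypothesis really forces $c_t=0$ off the attack set and the strict indifference inside it; otherwise coverage could be wasted on irrelevant targets, which would decouple $v$ from the ordering of $U^{u,a}$ and break the chain. Everything else is routine monotonicity once that structural fact is pinned down, and the inclusion $\Gamma(\bm{c})\subseteq\Gamma(\bm{c}')$ combined with the prefix property makes the choice of $t_\star$ (or the reduction to $v=v'$ when the sets coincide) automatic.
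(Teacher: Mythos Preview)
Your proposal is correct and follows essentially the same approach as the paper: both arguments use the optimal expansion rule to establish $c_t\leq c'_t$ for all $t\in\Gamma(\bm{c})$, then combine the monotonicity of $U^d$ in coverage with the set inclusion $\Gamma(\bm{c})\subseteq\Gamma(\bm{c}')$ to conclude. Your derivation is in fact more explicit than the paper's, which simply asserts $c'_t\geq c_t$ from the expansion rule and cites an observation from~\citep{ERASER}, whereas you carefully justify this via the indifference values $v'\leq v$.
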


\begin{restatable}{lemma}{obsalgorithmfir}\label{obs:algorithm1}
The ideal solution $\{\bm{\hat{c}}^{1},\ldots,\bm{\hat{c}}^{N}\}$ satisfies $U_i^d(\bm{\tilde{c}})\leq U_i^d(\bm{\hat{c}}^{i}), \forall i\in[N]$ and $(U_i^d(\bm{\hat{c}}^{1}),\ldots,\\U_i^d(\bm{\hat{c}}^{N}))$ cannot be dominated by any solution on the PF. 
\end{restatable}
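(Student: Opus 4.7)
The plan is to prove the two conjuncts in sequence: first the componentwise bound $U_i^d(\bm{\tilde{c}}) \leq U_i^d(\bm{\hat{c}}^i)$ for every $i$, and then use that bound to rule out any PF solution dominating the ideal vector.

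For the first claim, I would lean on the constructive definition of $\bm{\hat{c}}^i$: the ideal solution is obtained by allocating the entire budget $r\cdot T$ against $\mathcal{A}_i$ alone using the optimal expansion rule of Lemma~\ref{pro:property2}. This is exactly the ORIGAMI construction cited in the paper, and it is known to be an optimal solver of the single-attacker problem $\max_{\bm{c}:\,\|\bm{c}\|_1\leq rT}U_i^d(\bm{c})$. Since any feasible $\bm{\tilde{c}}$ is a candidate for this maximization, the inequality follows directly. As a cross-check I would also give a short derivation via Lemma~\ref{lem:lemma1}: because $\bm{\hat{c}}^i$ greedily exhausts the budget on the maximal-indifference expansion of $\Gamma_i$, one can argue that for any feasible $\bm{\tilde{c}}$ we have $\Gamma_i(\bm{\tilde{c}})\subseteq\Gamma_i(\bm{\hat{c}}^i)$ when both are taken under the optimal expansion ordering, and then Lemma~\ref{lem:lemma1} delivers $U_i^d(\bm{\tilde{c}})\leq U_i^d(\bm{\hat{c}}^i)$.

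For the second claim, I would argue by contradiction. Suppose a solution $\bm{c}^{\star}$ on the Pareto front dominates $(U_1^d(\bm{\hat{c}}^{1}),\ldots,U_N^d(\bm{\hat{c}}^{N}))$; then $U_i^d(\bm{c}^{\star})\geq U_i^d(\bm{\hat{c}}^{i})$ for all $i\in[N]$ and $U_j^d(\bm{c}^{\star})>U_j^d(\bm{\hat{c}}^{j})$ for some $j$. Since $\bm{c}^{\star}$ lies on the PF it is feasible, so applying the first claim with $\bm{\tilde{c}}=\bm{c}^{\star}$ yields $U_j^d(\bm{c}^{\star})\leq U_j^d(\bm{\hat{c}}^{j})$, which contradicts the strict inequality. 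Hence no PF solution can dominate the ideal payoff vector.

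The main obstacle is the rigorous justification of the first claim, specifically verifying that $\bm{\hat{c}}^i$ is indeed a global maximizer of $U_i^d$ on the feasibility polytope. This is the substance of the ORIGAMI correctness argument and relies on the maximal-indifference property of Lemma~\ref{pro:property2}; if I take that correctness as given (as the paper does when it cites ORIGAMI), the proof is short, but the alternative route via the $\Gamma$-subset relation and Lemma~\ref{lem:lemma1} requires a careful budget-exhaustion argument to establish $\Gamma_i(\bm{\tilde{c}})\subseteq\Gamma_i(\bm{\hat{c}}^{i})$. Once the first claim is secured, the second is an immediate contradiction, so essentially all of the technical work concentrates on the single-attacker optimality of the ideal allocation.
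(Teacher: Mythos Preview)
Your proposal is correct and matches the paper's own argument closely. The paper establishes the first claim exactly via your ``cross-check'' route: it notes that both $\bm{\tilde{c}}$ and $\bm{\hat{c}}^i$ arise from the optimal expansion rule of Lemma~\ref{pro:property2}, uses the budget comparison $\Vert\bm{\tilde{c}}\Vert_1\leq\Vert\bm{\hat{c}}^i\Vert_1$ to get $\Gamma_i(\bm{\tilde{c}})\subseteq\Gamma_i(\bm{\hat{c}}^i)$, and then invokes Lemma~\ref{lem:lemma1}; for the second claim it simply observes that each coordinate is an upper bound, which is your contradiction argument stated less formally.
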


\begin{remark}
    Lemma~\ref{lem:lemma1} shows that $\Gamma$ must not be worse than its subset.  Lemma~\ref{obs:algorithm1} indicates that (i) the ideal solution is an upper bound on $\forall\tilde{\bm{c}}\in\Theta$ and it is non-inferior to the solution on the PF, and (ii) the purpose of ideal solution construction is to replace finding the true PF of the MOSG problem.
\end{remark}




\begin{restatable}[Optimization Consistency]{theorem}{obsalgorithmthi}\label{obs:algorithm3}
If $\tilde{\Gamma}_{s}$ is similar to $\hat{\Gamma}_{s}$, i.e., $K_s(\tilde{\Gamma}_{s},\hat{\Gamma}_{s})=0$, then $U^d(\bm{\tilde{c}})$ is on the PF and $U_i^d(\bm{\tilde{c}})=U_i^d(\bm{\hat{c}}^{i})$, $\forall i\in[N]$. 
\end{restatable}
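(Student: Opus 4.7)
The plan is to derive the two conclusions in order. First, I would unpack the hypothesis $K_s(\tilde{\Gamma}_s, \hat{\Gamma}_s) = 0$: by Definitions~\ref{def:sim_mea} and~\ref{def:dis_mea}, $K_s$ is a sum of $N$ zero-one divergences and vanishes if and only if $\tilde{at}_i = \hat{at}_i$ for every $i \in [N]$. Denote this common attacked target by $t_i^\star$. Because each attacker's SSE response is deterministic given the defender's coverage vector (Section~\ref{sec:pre}), the defender's expected payoff against $\mathcal{A}_i$ depends only on the coverage placed on the attacked target:
\begin{equation*}
U_i^d(\bm{c}) = c_{t_i^\star}\,U_i^{c,d}(t_i^\star) + (1 - c_{t_i^\star})\,U_i^{u,d}(t_i^\star).
\end{equation*}
So, to conclude $U_i^d(\bm{\tilde{c}}) = U_i^d(\bm{\hat{c}}^{i})$, it suffices to show $\tilde{c}_{t_i^\star} = \hat{c}^{i}_{t_i^\star}$.

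Second, and this is the technical core, I would establish this coordinate-wise equality on the attacked targets by combining three ingredients: (i) $t_i^\star$ lies in both $\tilde{\Gamma}_i$ and $\hat{\Gamma}_i$, since it is the attacked target in each; (ii) by Lemma~\ref{pro:property2}, the maximal-indifference expansion of an attack set uniquely fixes, via the indifference equation in Equation~\eqref{equ:c_gamma}, the ORIGAMI-consistent coverage values on the targets it contains; and (iii) $\bm{\tilde{c}} \in \tilde{C} = \bigtimes_t \pi_{c_t}$, so $\tilde{c}_{t_i^\star}$ must equal one of the admissible indifference values in $\pi_{c_{t_i^\star}}$. The plan is then to argue by contradiction: any choice in $\pi_{c_{t_i^\star}}$ other than $\hat{c}^{i}_{t_i^\star}$ would either dislodge $t_i^\star$ from $\tilde{\Gamma}_i$ (via the vulnerability of $\Gamma$ in Lemma~\ref{pro:property1.2}, violating $K_s = 0$) or push $U_i^d(\bm{\tilde{c}})$ above the ideal value, contradicting the upper bound $U_i^d(\bm{\tilde{c}}) \leq U_i^d(\bm{\hat{c}}^{i})$ granted by Lemma~\ref{obs:algorithm1}.

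Third, once $U_i^d(\bm{\tilde{c}}) = U_i^d(\bm{\hat{c}}^{i})$ holds for every $i$, the PF claim is immediate. Lemma~\ref{obs:algorithm1} states that the ideal payoff vector $(U_1^d(\bm{\hat{c}}^{1}), \ldots, U_N^d(\bm{\hat{c}}^{N}))$ cannot be dominated by the objective vector of any feasible $\bm{c}$. Since $\bm{\tilde{c}}$ achieves exactly this vector, no other feasible solution can strictly dominate $\bm{U}^d(\bm{\tilde{c}})$, so $\bm{\tilde{c}}$ lies on the Pareto front by Definition~\ref{def:Pareto}.

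The main obstacle is the second step. The value $\hat{c}^{i}_{t_i^\star}$ is constructed under a single-attacker regime with the full resource budget devoted to $\mathcal{A}_i$, whereas $\tilde{c}_{t_i^\star}$ is a coordinate of one shared vector that must simultaneously meet the indifference constraints induced by every $\mathcal{A}_j$ whose attack set contains $t_i^\star$. I expect the cleanest route is the contradiction sketch above: any deviation either shifts the attack set (through the sharp step behavior codified by Lemmas~\ref{pro:property1.2} and~\ref{pro:property1.1}) or yields a defender payoff strictly better than the ORIGAMI-optimal $\hat{c}^{i}$, contradicting Lemma~\ref{lem:lemma1} together with Lemma~\ref{obs:algorithm1}. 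Making this forced-equality argument airtight across all $i$ at once, while respecting the total resource budget $\Vert\bm{\tilde{c}}\Vert_1 \leq r\cdot T$, is where I anticipate the delicate bookkeeping.
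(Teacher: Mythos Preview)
Your three-step structure---unpack $K_s=0$ into matching attacked targets, deduce payoff equality, then invoke Lemma~\ref{obs:algorithm1} for PF membership---is exactly the paper's, and your reduction of payoff equality to the coordinate identity $\tilde{c}_{t_i^\star}=\hat{c}^i_{t_i^\star}$ is a sound reformulation, since $U_i^d$ is affine in the coverage on the attacked target.

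The problem is your contradiction dichotomy in step two. Consider the case $\tilde{c}_{t_i^\star}<\hat{c}^i_{t_i^\star}$. Neither branch fires: less coverage on $t_i^\star$ makes it \emph{more} attractive to $\mathcal{A}_i$ (recall $U_i^a$ is decreasing in $c_t$), so $t_i^\star$ is not dislodged from $\tilde{\Gamma}_i$ via Lemma~\ref{pro:property1.2} and $K_s=0$ survives; meanwhile the defender payoff falls \emph{below} the ideal, not above, so Lemma~\ref{obs:algorithm1} is not contradicted either. There is also a framing issue: $\hat{c}^i_{t_i^\star}$ is computed under the full-budget single-attacker regime with attack-set size $|\hat{\Gamma}_i|$, whereas the alternative set $\pi_{c_{t_i^\star}}$ is built from the current $I$-code's sizes $I_j$; if $I_i\neq|\hat{\Gamma}_i|$ then $\hat{c}^i_{t_i^\star}$ need not lie in $\pi_{c_{t_i^\star}}$ at all, so ``any choice in $\pi_{c_{t_i^\star}}$ other than $\hat{c}^i_{t_i^\star}$'' is not a well-posed case split on that set. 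The paper's own proof does not descend to this coordinate-level argument: it asserts payoff equality directly from the fact that both $\tilde{\bm{c}}$ and $\hat{\bm{c}}^i$ are generated by the optimal expansion rule (Lemma~\ref{pro:property2}) and share the same attacked target $at_i$, treats this as the operative content of ``$\Gamma_i(\bm{c})$ similar to $\Gamma_i(\hat{\bm{c}}^i)$'', and then concludes PF membership via Lemma~\ref{obs:algorithm1} exactly as you do.
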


\begin{remark}
    Theorem~\ref{obs:algorithm3} shows that optimizing $K_s(\cdot,\cdot)$ between the ideal solution and feasible solution is consistent to directly optimizing MOSG problems via bit-wise optimization. Meanwhile, Lemma~\ref{lem:lemma1} has been supplemented by Theorem~\ref{obs:algorithm3}.
\end{remark}


\subsection{Convergence Guarantee of Bit-wise Optimization}
\label{sec:rat_bit}
Bit-wise optimization is a greedy algorithm, and its intuition comes from the following SGs property. $\mathcal{D}$ allocates $c_t, t\in[T]$ independently, as long as $\sum_{t=1}^T{c_t}\leq r\cdot T$.
That is, the resource allocation on each target $c_t$ does not affect each other. Therefore, convergence guarantee is induced under Assumption~\ref{asp:BitOpt}.
The proof is shown in Appendix~\ref{app:fra_pseu_BSM}.

\begin{restatable}[Split Assumption]{assumption}
{splitassumption}\label{asp:BitOpt}
In the $t$-th iteration, $\pi_{c_t}^j$ is selected, $t\in\Gamma_i$ for all $i\in[N]$ satisfies $\pi_{c_t}^i\geq\pi_{c_t}^j$.
\end{restatable}

\begin{restatable}[Convergence Guarantee]{theorem}{theoremRatBitbybit}\label{obs:val_ana1}
Under Assumption~\ref{asp:BitOpt}, the result of bit-wise optimization $\tilde{\bm{c}}$ converges to the PF, i.e., $U_i^d(\bm{\tilde{c}})=U_i^d(\bm{\hat{c}}^{i}),\forall i\in[N]$. 
\end{restatable}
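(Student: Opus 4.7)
The plan is to reduce Theorem~\ref{obs:val_ana1} to Theorem~\ref{obs:algorithm3} (Optimization Consistency) by showing that, under Assumption~\ref{asp:BitOpt}, the allocation $\tilde{\bm{c}}$ produced by bit-wise optimization induces an attack group whose divergence from the ideal is zero, i.e., $K_s(\tilde{\Gamma}_s,\hat{\Gamma}_s)=0$. Once this is established, Theorem~\ref{obs:algorithm3} yields $U_i^d(\tilde{\bm{c}})=U_i^d(\hat{\bm{c}}^{i})$ for every $i\in[N]$, which is exactly the claim.

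First, I would unpack the geometric meaning of the alternative set $\pi_{c_t}$ from Equation~\eqref{equ:c_gamma}. For any attacker $\mathcal{A}_i$ with $t\in\Gamma_i(\hat{\bm{c}}^{i})$, the indifference equation shows that $\pi_{c_t}^{i}$ is precisely the coverage that keeps $t$ co-optimal with the other targets in $\Gamma_i(\hat{\bm{c}}^{i})$ for $\mathcal{A}_i$; allocating $c_t\leq\pi_{c_t}^{i}$ leaves $t$ weakly inside $\Gamma_i$ and does not decrease $\mathcal{A}_i$'s attainable payoff on $t$, while $c_t>\pi_{c_t}^{i}$ would expel $t$ from $\Gamma_i$. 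Assumption~\ref{asp:BitOpt} selects $c_t^{*}=\pi_{c_t}^{j}$ with $\pi_{c_t}^{j}\leq\pi_{c_t}^{i}$ for every $\mathcal{A}_i$ that has $t$ in its ideal attack set, so no target is expelled from any attacker's ideal $\Gamma_i(\hat{\bm{c}}^{i})$ by the greedy selection. Doing this target-by-target for all $t\in[T]$ yields $\Gamma_i(\hat{\bm{c}}^{i})\subseteq\Gamma_i(\tilde{\bm{c}})$ for all $i$.

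Second, I would chain this inclusion with Lemma~\ref{lem:lemma1} (larger attack sets are not worse) to conclude $U_i^d(\tilde{\bm{c}})\geq U_i^d(\hat{\bm{c}}^{i})$ for every $i$. Combining with Lemma~\ref{obs:algorithm1}, which asserts that $(U_1^d(\hat{\bm{c}}^{1}),\ldots,U_N^d(\hat{\bm{c}}^{N}))$ is an upper envelope that cannot be dominated by any Pareto-front payoff, the inequality must be an equality: $U_i^d(\tilde{\bm{c}})=U_i^d(\hat{\bm{c}}^{i})$. Equivalently, $\hat{at}_i$ remains the SSE-attacked target under $\tilde{\bm{c}}$, so by Definition~\ref{def:sim_mea} we have $K(\tilde{\Gamma}_i,\hat{\Gamma}_i)=0$, and summing gives $K_s(\tilde{\Gamma}_s,\hat{\Gamma}_s)=0$. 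Invoking Theorem~\ref{obs:algorithm3} closes the argument.

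The main obstacle is the second step: tightly coupling the inequality $\pi_{c_t}^{i}\geq\pi_{c_t}^{j}$ in Assumption~\ref{asp:BitOpt} to the ``larger coverage expels $t$ from $\Gamma_i$'' monotonicity, uniformly over the iteration order of Algorithm~\ref{alg:pseudo1}. A subtle point is that $\Gamma_i(\tilde{\bm{c}})$ is defined with respect to the evolving $\tilde{\bm{c}}$, not $\hat{\bm{c}}^{i}$, so I must argue that the greedy update preserves $\mathcal{A}_i$'s indifference structure as targets are processed one at a time; the bookkeeping from the attraction set $P$ and counter vector $\bm{S}$ in Algorithm~\ref{alg:pseudo1} is precisely what ensures this. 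A secondary obstacle is verifying feasibility $\|\tilde{\bm{c}}\|_1\leq r\cdot T$: since each $\hat{\bm{c}}^{i}$ is feasible by construction (Lemma~\ref{pro:property2}) and $c_t^{*}\leq\pi_{c_t}^{i}$ for at least one attacker that ``charges'' target $t$, a careful accounting across attackers via $P$ avoids double-counting and shows the total budget is not exceeded, so the algorithm does not abort with \textit{infeasible} on Line~\ref{code:alg1:24}.
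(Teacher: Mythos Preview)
Your overall strategy of reducing to Theorem~\ref{obs:algorithm3} is the same as the paper's, but the intermediate claim you rely on is incorrect. You assert that choosing $c_t^{*}=\pi_{c_t}^{j}\le\pi_{c_t}^{i}$ for every $t$ yields $\Gamma_i(\hat{\bm{c}}^{i})\subseteq\Gamma_i(\tilde{\bm{c}})$. This does not follow: the attack set is an $\arg\max$, so membership is \emph{relative}. Setting $c_t<\pi_{c_t}^{i}$ raises $U_i^a(c_t)$ strictly above the indifference level $v$, which can make $t$ strictly more attractive than another target $t'\in\Gamma_i(\hat{\bm{c}}^{i})$ whose coverage sits at (or closer to) its own indifference value. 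In that case $t'$ is \emph{expelled} from $\Gamma_i(\tilde{\bm{c}})$. The only inclusion one can extract from your per-target inequality is the reverse one, $\Gamma_i(\tilde{\bm{c}})\subseteq\Gamma_i(\hat{\bm{c}}^{i})$, which is useless for invoking Lemma~\ref{lem:lemma1} in the direction you need. (Separately, Lemma~\ref{lem:lemma1} is stated for single-attacker optimal-expansion allocations, so its direct application to the multi-attacker $\tilde{\bm{c}}$ would require further justification even if the inclusion held.)

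The paper avoids this trap by never comparing the attack sets as sets. It tracks only the attacked target $\hat{at}_i$ and uses the case analysis of Lemma~\ref{pro:property1.2} (the top half of Figure~\ref{fig:heuristic_operator}): for each processed target $t$, selecting the smallest alternative $\pi_{c_t}^{j}$ among attackers with $t\in\Gamma_i$ ensures that no such attacker falls into the ``$t$ is removed'' branch of Lemma~\ref{pro:property1.2}. Iterating over all $t$ then preserves every $\hat{at}_i$, giving $K_s(\tilde{\Gamma}_s,\hat{\Gamma}_s)=0$ directly from Definition~\ref{def:dis_mea}, after which Theorem~\ref{obs:algorithm3} closes the argument. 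Your proposal should replace the set-inclusion step with this attacked-target preservation argument; the feasibility bookkeeping you mention is peripheral and the paper does not engage with it in the proof.
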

\begin{remark}
    Theorem~\ref{obs:val_ana1} gives a solution convergence guarantee for Bit-wise Optimization under Assumption~\ref{asp:BitOpt}, which means that bit-wise optimization select the smallest alternative when there are multiple $\mathcal{A}$ attracted by $t$.
\end{remark}

\section{Experiments}
\label{sec:exp}
\subsection{Experiment Setup}
\label{sec:secCon_eva}
\textbf{Experiment Environment.} The experimental environment is consistent with~\citep{MOSG,extended-MOSG}, including (i) the randomly-generated security games, i.e., $\mathcal{D}$'s and $\mathcal{A}$'s payoff obey the uniform integral distribution of [1, 10], while the defender and attacker payoffs obey the uniform integral distribution of [-10, -1], (ii) the problem scale is divided into three dimensions: the attacker number $N$, the target needed to be protected number $T$, and the resource ratio $r$, satisfying the following constraint: $N\geq3$, $T=[25, 50, 75, 100, 200, 400, 600, 800, 1000]$, $r=0.2$, the resource equals $r\cdot T$, 
where $r$ is generally much less than $1$ due to the limited resources,
and (iii) the maximum cap of the runtime $M$ is set to $30$ minutes (mins) for clock time. The programming language is Python, and the hardware conditions are AMD Ryzen Threadripper 2990WX 32-Core Processor, 3600 CPU MHz.

\textbf{Comparison Algorithms.} All MOSG problems below are represented as minimization tasks. This paper compares 6 algorithms simultaneously, including 4 algorithms based on $\epsilon$-constraint framework and two general many-objective EAs.
The $\epsilon$-constraint algorithms include ORIGAMI-M and ORIGAMI-A~\citep{MOSG}, ORIGAMI-M-BS and DIRECT-MIN-COV~\citep{extended-MOSG}.
All $\epsilon$-constraint algorithms use step $\epsilon=1$. ORIGAMI-A uses the threshold $\alpha=0.001$ for binary search termination. 
The general many-objective EAs include $c$-code EA with continuous solution representation, $I$-code EA with integral solution representation. Furthermore, in the large-scale scenario, even SOTA many-objective EAs cannot converge well due to Lemma~\ref{pro:property1.2} and~\ref{pro:property1.1}. Therefore, we do not consider those algorithms for comparison.

\textbf{Implementation Details of many-objective EAs in SDES.} Many-objective EAs, the most common one in solving MaOPs, are used as the solution method in this paper. The solution method can also be replaced by other zeroth-order optimization methods. More specifically, we use NSGA-III~\citep{NSGA-III}, the reference set based EA, for the following important reasons. (i) The reference set based method can easily customize the reference direction set, so many-objective EAs can better adapt to different problem scales. (ii) NSGA-III is enough for low-dimensional discrete space optimization after discretization. Therefore, we choose a classical many-objective EA as the solution method. The termination criterion $max\_{}gen$ is fixed to $300$, and the population size $pop\_{}size$ is fixed to $400$, except $N=3$ case, i.e., $max\_{}gen=pop\_{}size=50$, because this problem scale is simple for NSGA-III. The additional sensitivity analysis in Appendix~\ref{app:more_sen_ana} shows that SDES is insensitive to $pop\_{}size$.
The implementation of NSGA-III uses an open-source framework: multi-objective optimization in Python (PyMOO)~\citep{pymoo}. More implementation details about NSGA-III can be found in Appendix~\ref{app:NSGA-III}.
All experiments are repeated 30 times. Observe that NSGA-III uses Riesz~\citep{ref-dirs-energy} instead of the traditional das-Dennis or multi-layer Das-Dennis method~\citep{das-Dennis}. The Das-Dennis-based approach is greatly affected by the increasing number of attackers, while Riesz can generate a customized number of well-spaced reference direction sets by energy-based approaches.

\textbf{Quality Measurement.}
For MaOPs, an ideal algorithm can obtain an approximate solution set satisfying that all solutions in the approximate solution set are as convergent as possible and as diverse as possible to the PF~\citep{MOEA-survey}. The quality measurements of convergence and diversity are

Hypervolume Metric (\textit{HV}).
The hypervolume indicator~\citep{HV}, describes the volume of the space dominated by the Pareto set approximation to the reference point in the objective space and can be expressed as

\begin{equation}
\label{equ:equHV}
    HV(A,\bm{r})=\mathcal{L}\left(\cup_{\bm{a}\in A}\left\{\bm{b}\mid\bm{a}\succ \bm{b}\succ \bm{r}\right\}\right)\,,
\end{equation}
where $A$ represents the Pareto set approximation to be evaluated, $\bm{r}$ is a reference point prepared in advance, $\mathcal{L}(\cdot)$ represents the Lebesgue measure of a set $\mathcal{L}(B)=\int_{\bm{b}\in B}\bm{1}_B(\bm{b})\,\mathrm{d}\bm{b}$. $\bm{1}_B$ is the characteristic function of $B$ that $\bm{1}_B(\bm{b})=1$ if $\bm{b}\in B$, otherwise $\bm{1}_B(\bm{b})=0$, and $\bm{a}\succ\bm{b}$ denotes $\bm{a}$ dominates $\bm{b}$.

Modified Inverted Generational Distance (IGD$^+$).
\textit{IGD}$^+$~\citep{IGD+} is the improved version of
$IGD$.
$IGD$ measures the Euclidean distance from the reference point set $Z$ to the Pareto set approximation $A$. 
\textit{IGD}$^+$ is

\begin{equation}
    IGD^+(A) = \frac{1}{\vert Z\vert}\left(\sum_{i=1}^{\vert Z\vert}{d^+_i}^2\right)^{1/2}\,,
\end{equation}
where $d^+_i=\max\{a_i-z_i,0\}$ and $a_i$, $z_i$ are the $i$-th component of $\bm{a}$, $\bm{z}$. $\bm{d^+}=(d^+_1,\ldots,d^+_{\vert Z\vert})$ calculates only the positive component of contributes. 

\subsection{Scalability}
The scalability of all five algorithms across all problem scales within $M=30$ mins is shown in Table~\ref{tab:runtime:all}.
Table~\ref{tab:runtime:all} reveals that only ORIGAMI-M and ORIGAMI-M-BS can complete the task of $T=1000, N=3$, but SDES is less affected by the increasing number of $T$. Furthermore, with the increasing number of $N$, The maximum number of $T$ of comparison algorithms drops rapidly when $N$ increases, while the maximum number of $T$ of SDES decreases slowly. The comparison methods can only complete the large-scale tasks of $T=25$, but SDES can still work on the problem with large $T$. Notably, SDES can solve $N=20$ problems with ease.

\begin{table}[htbp]
\caption{The scalability of SDES and comparison algorithms. The maximum number of targets of each algorithm is shown when the number of attackers ranges from 3 to 20. The symbol ``--'' means that the algorithm cannot complete the MOSG problem under the corresponding number of attackers and targets within maximum time $M=30$ mins.}
\label{tab:runtime:all}
    \centering
    \begin{tabular}{c|c|c|c|c|c|c|c|c}
    \toprule
    \diagbox{Method}{$\#$Attacker} & 3& 4& 5& 6& 7& 8& 9$\sim$18& 19$\sim$20\\  \midrule
    SDES & 1000 & 1000 & 400 & 200 & 200 & 200 & 200 & 100\\
    DIRECT-MIN-COV & 600 & 200 & 100 & 100 & 50 & 25 & --& --\\
    ORIGAMI-A & 200 & 100 & 75 & 25 & -- & 25 & -- & --\\
    ORIGAMI-M-BS & 1000 & 200 & 100 & 75 & 50 & -- & -- & --\\
    ORIGAMI-M & 1000 & 400 & 100 & -- & -- & -- & -- & --\\
    \toprule
    \end{tabular}
\end{table}


\subsection{Time Efficiency}
\label{sec:Runtime}
We show that SDES not only has the ability to accomplish large-scale MOSGs, but also is more time-efficient than comparison methods. The time efficiency experiments (only results completed within $M=30$ mins are displayed) include (i) Fix $N=3$ and scale up $T=200\sim 1000$. We set $T\geq200$ since all algorithms can solve $T<200$ problems quickly, cf. Figure~\ref{fig:runtime:all}(a). (ii) Fix $T=25$ and scale up $N=3\sim9$, since no comparison methods can complete $N>9$ task even $T$ only equals $25$, cf. Figure~\ref{fig:runtime:all}(b).


\begin{figure}[!t]
\centering
\begin{minipage}[l]{\columnwidth}\centering
\includegraphics[width=0.49\textwidth]{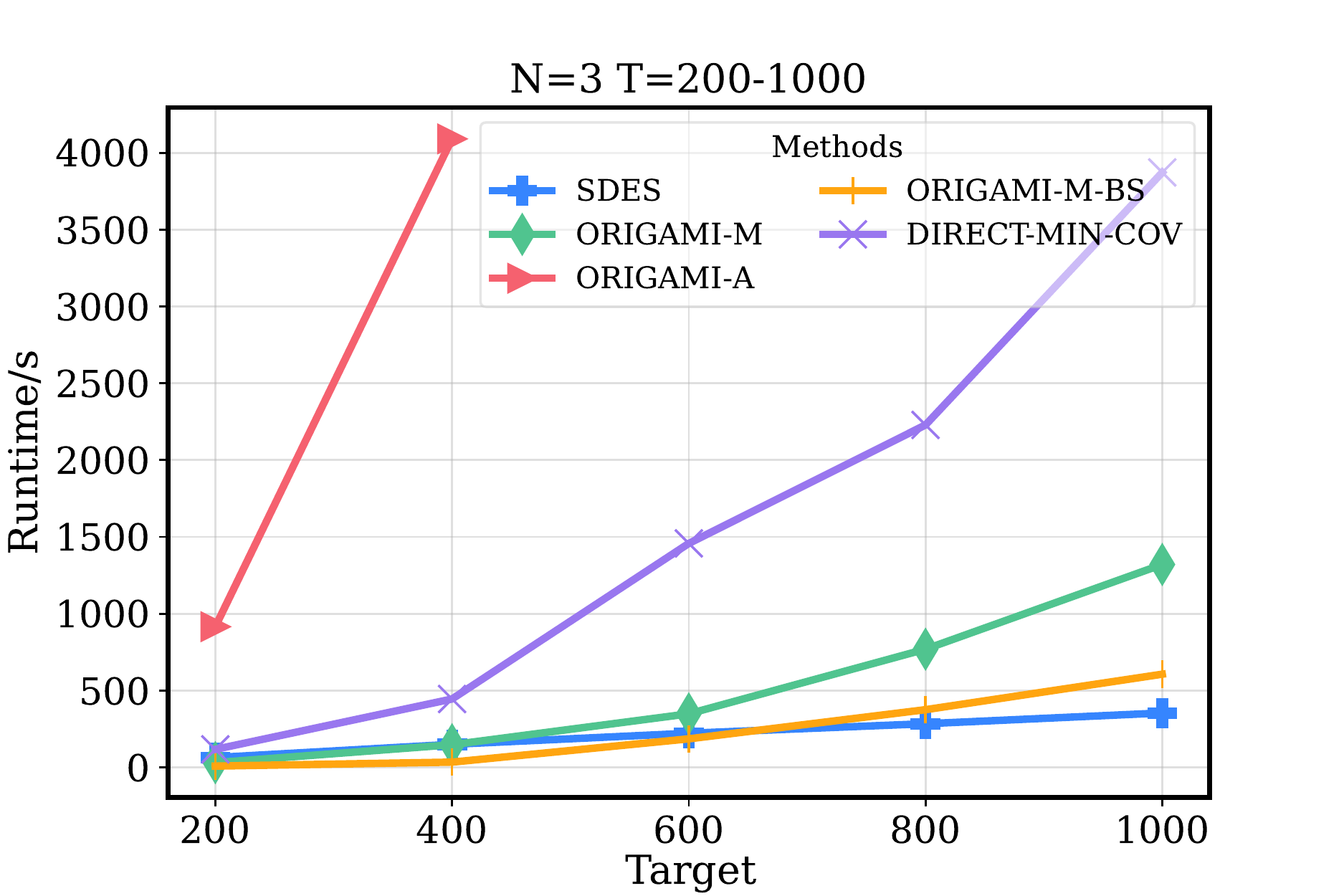}
\includegraphics[width=0.49\textwidth]{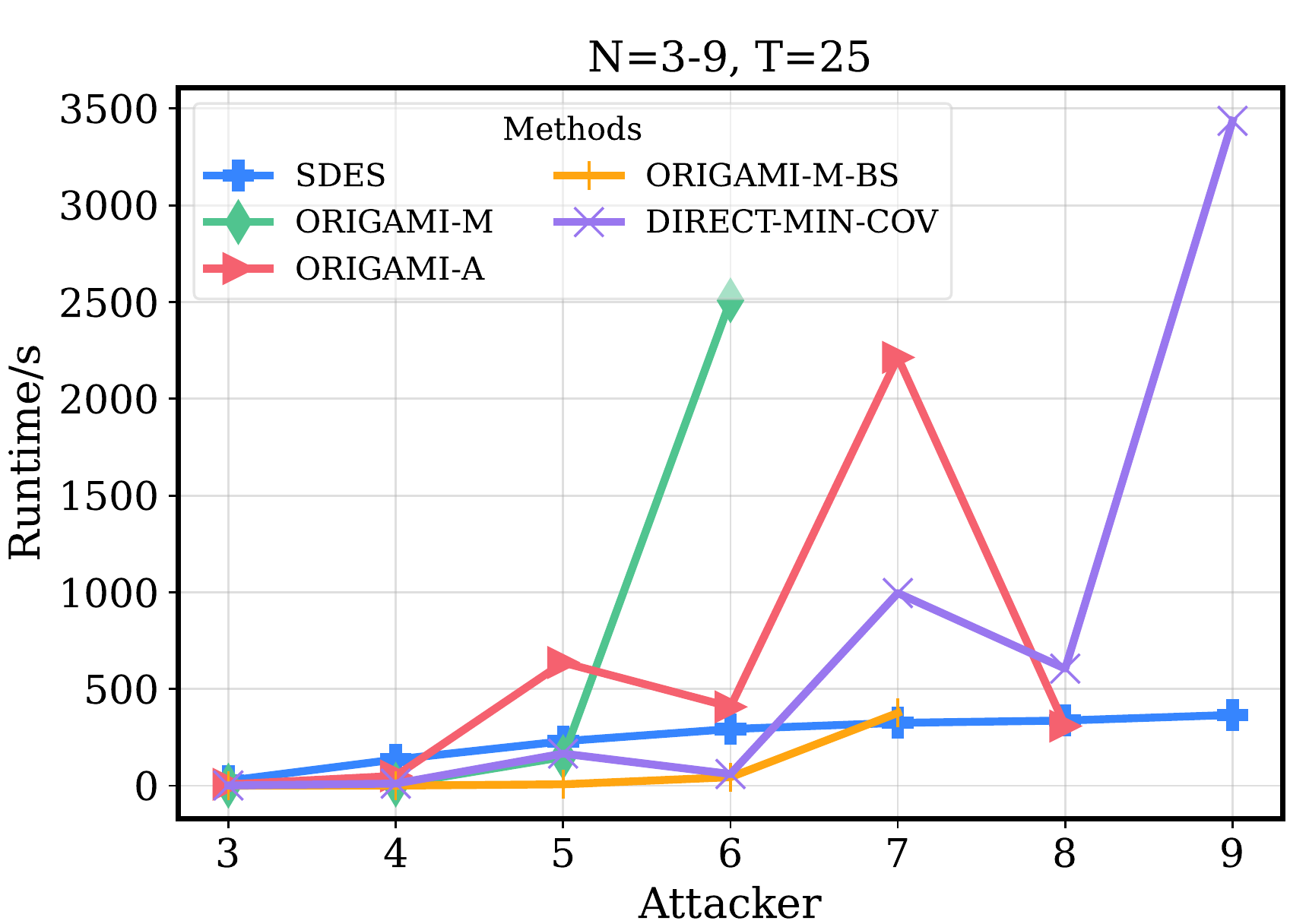}\\
\quad\qquad(a) Scaling up targets.\qquad\qquad\qquad
(b) Scaling up objectives.
\caption{The runtime indicator of SDES and comparison algorithms.}
\label{fig:runtime:all}
\end{minipage}
\end{figure}

Figure~\ref{fig:runtime:all}(a) and (b) show that SDES increases slowly. Furthermore, Figure~\ref{fig:runtime:all}(b) shows that all comparison methods fail as the number of attackers increases. Although DIRECT-MIN-COV is the most efficient method which can complete $N=8$, $T=25$ task in $M=30$ mins, DIRECT-MIN-COV is one of the worst methods with respect to both \textit{HV} and \textit{IGD}$^+$ performance indicators, cf. Figure~\ref{fig:Rankbar}. Meanwhile, although the runtime of ORIGAMI-M-BS also increases slowly in medium-scale MOSGs, it fails when $N>7$. SDES can easily accomplish $N=8$, $T=25$ task, and the performance of SDES is satisfactory.
The runtime of SDES varies steadily with the number of attackers, while the runtime of other methods mainly depends on the problem-solving difficulty. For example, when $N=6$ or $N=9$, the runtime of almost all comparison methods is reduced since those problems are less difficult than $N=5$ or $N=7$ problem. 

\begin{figure}[h!t]
\centering
\includegraphics[width=0.5\linewidth]{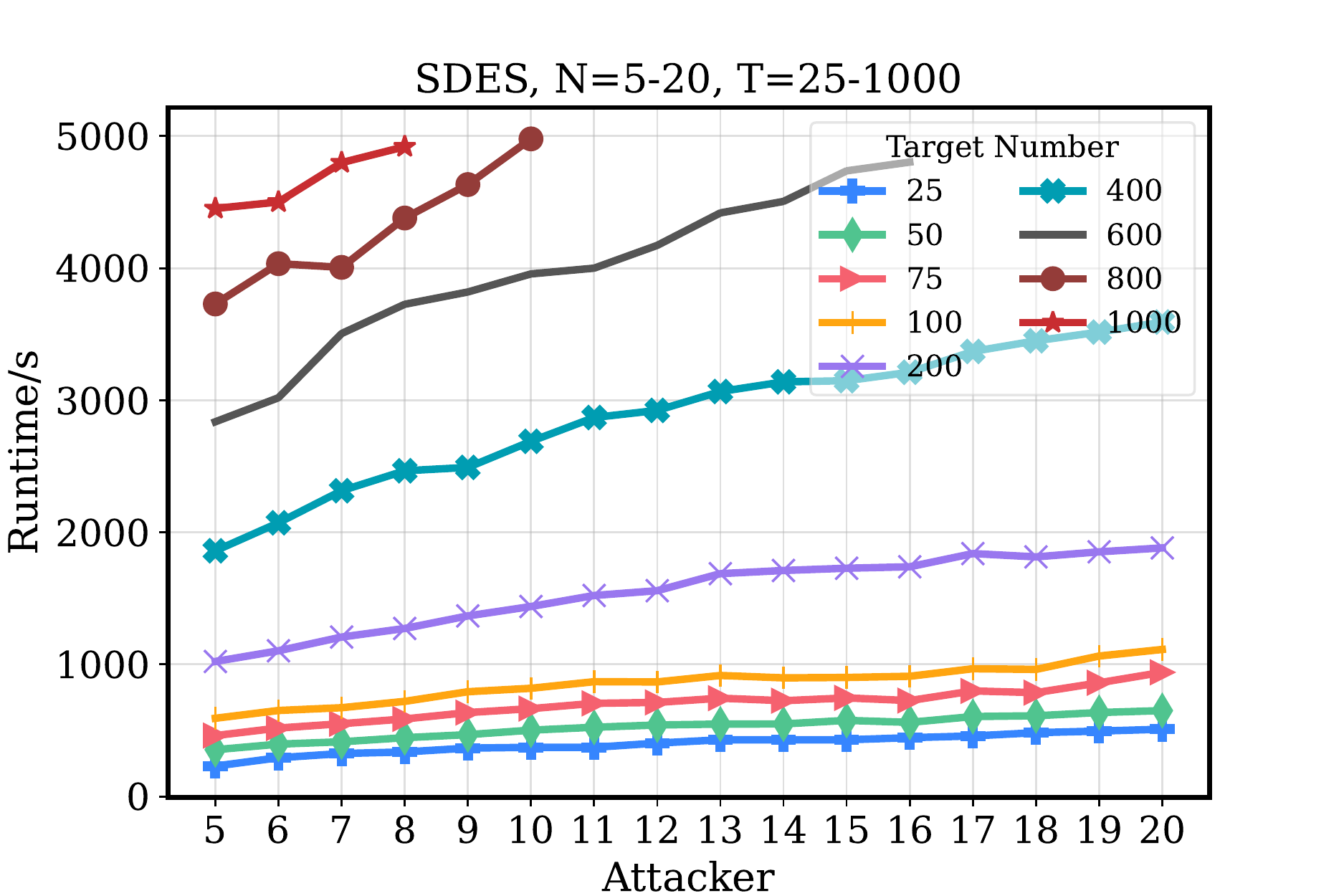}
\caption{The runtime of SDES for larger problem scales. The runtime of SDES increases linearly with the increasing number of attackers and targets when one of them is fixed.
}
\label{fig:Runtime:SDESF:larger}
\end{figure}

Figure~\ref{fig:runtime:all}(b) presents that all comparison algorithms fail when $N>8$, $T>25$, but SDES still works. Figure~\ref{fig:Runtime:SDESF:larger} shows the scalability boundary of SDES. In Figure~\ref{fig:Runtime:SDESF:larger}, SDES can handle $N=20$, $T=100$ problems within $M=30$ mins. 
In addition, the runtime of SDES increases linearly with the increasing number of targets when the number of attackers is fixed. Also, the same goes for attackers.

\subsection{Effectiveness}
\label{sec:effe}
\subsubsection{Convergence Speed and Effect}
\label{sec:effe:part1}
Lemma~\ref{pro:property1.2} and~\ref{pro:property1.1} have shown that $U_i^d(\bm{c})$ is $T$-dimensional step function that heuristic and gradient-based algorithms are difficult to directly optimize on the $C$ space, cf. Figure~\ref{fig:Ua_Ud} for an illustration.
To verify the superiority of SDES over the naive $c$-code EA and other comparison algorithms, see Figure~\ref{fig:convergence_curve}, we plot the convergence of \textit{IGD}$^+$ performance indicator with the number of iterations (\textit{IGD}$^+$ instead of \textit{HV} is selected due to its high calculation cost).

\begin{figure}[h!t]
\centering
\includegraphics[width=0.5\linewidth]{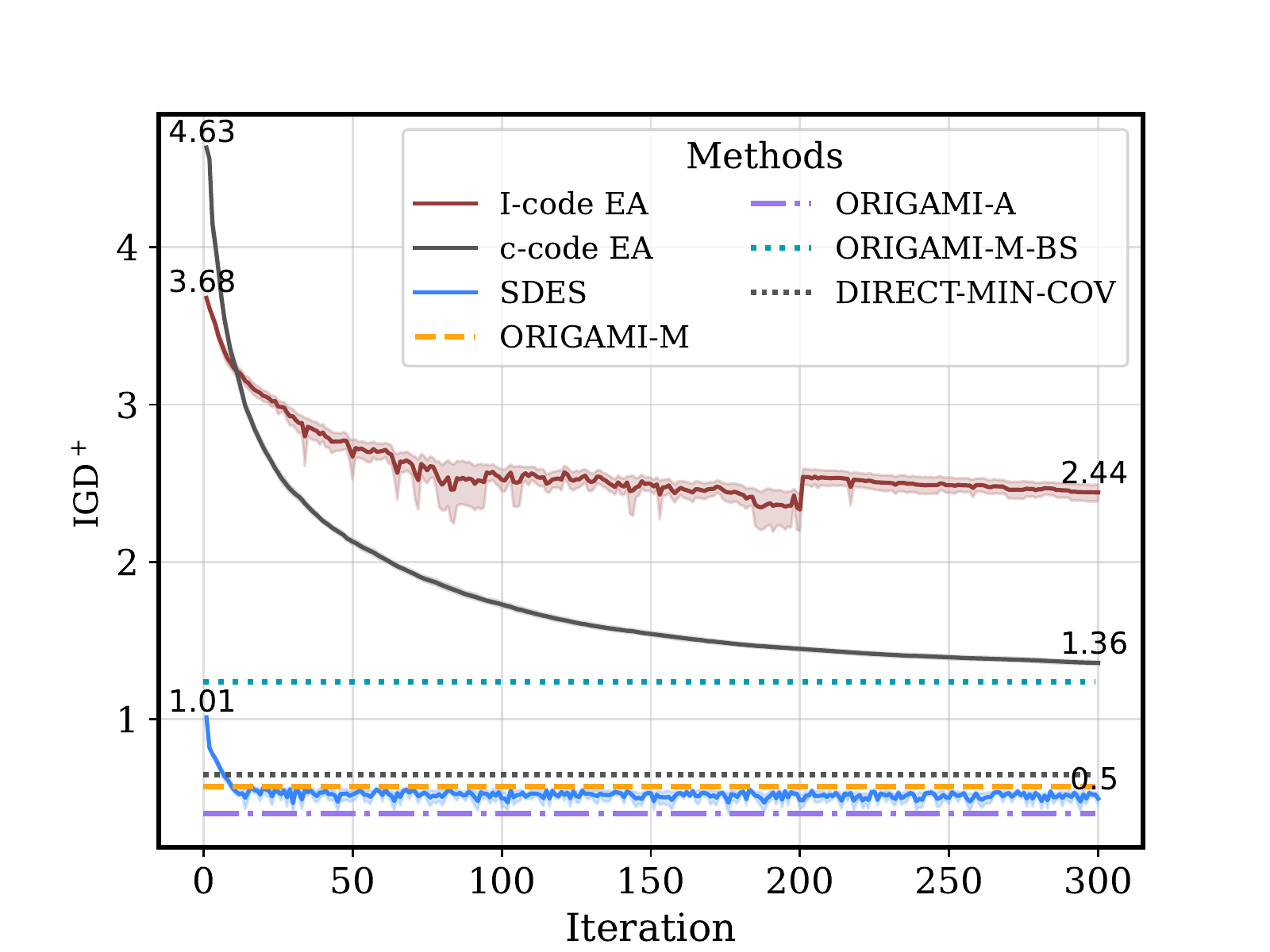}
\caption{The convergence curve. SDES converges fast, and it is comparable with other traditional methods. Since the comparison algorithms are based on the $\epsilon$-constraint framework and \textit{IGD}$^+$ score is returned after fully running, the comparison algorithms are plotted as the dotted line parallel to the x-axis.
}
\label{fig:convergence_curve}
\end{figure}

As shown in Figure~\ref{fig:convergence_curve}, the performance of SDES in the initialization stage outperforms the converged performance of $I$-code and $c$-code EAs, and its convergence speed is faster. SDES finally obtains comparable results to the SOTA methods. Meanwhile, the exact algorithm ORIGAMI~\citep{ERASER} is embedded in the iteration of SDES. Therefore, only initialization can guarantee a well performance with respect to \textit{IGD}$^+$ and \textit{HV} indicators. 

\subsubsection{Multi-objective Performance Indicators of All Scales}
\label{sec:effe:part2}
The multi-objective performance indicators \textit{HV}, \textit{IGD}$^+$ can measure convergence and diversity of the approximate PF. 
From Table~\ref{tab:runtime:all}, there are a total of 26 MOSG problems that at least one comparison algorithm can complete.

\begin{figure}[h!t]
  \centering
  \includegraphics[width=0.5\linewidth]{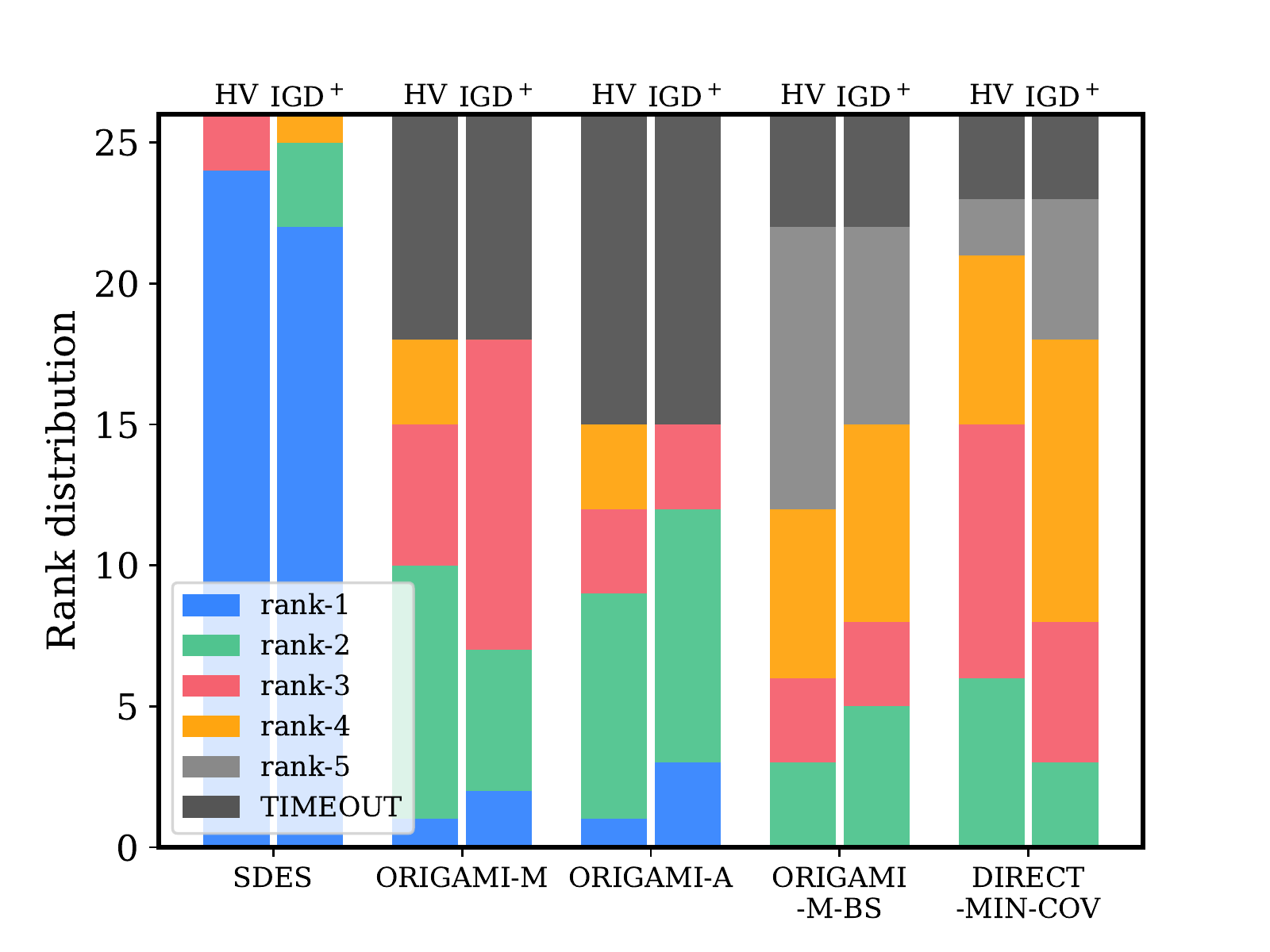}
  \caption{The rank distribution of 5 methods in 26 MOSG problems. SDES achieves most of rank-1 in both \textit{HV} (left bar) and \textit{IGD}$^+$ (right bar) indicator experiments. The SOTA methods ORIGAMI-M and ORIGAMI-A time out in many MOSG problems, while other efficient methods ORIGAMI-N-BS and DIRECT-MIN-COV are not effective. }
    \label{fig:Rankbar}
\end{figure}

SDES has a clear advantage in solution quality. Figure~\ref{fig:Rankbar} shows summarization results about all methods over 26 MOSG problems. In addition, Appendix~\ref{app:vis_ana} shows the excellent performance of SDES on each objective by visualization. Among 26 problems, SDES achieves 24 rank-1 in \textit{HV} indicator, and 22 rank-1 in \textit{IGD}$^+$ indicator. The score and rank result details are put in Appendix~\ref{app:more_eff_exp}. 
Although SOTA algorithms like ORIGAMI-M can provide well-convergent solutions, many-objective optimization task also requires well-spaced (good diversity and distribution) solutions, which makes these methods poor performance in the multi-objective indicator. 

\subsection{Ablation Studies}
\label{sec:Ablation}

The ablation study is conducted to analyze the effects of the three key parts discretization, restoration and refinement. Five scenarios are designed as Table~\ref{tab:ablation:our_method}. 
For discretization, \ding{53} means many-objective EA searches in the high-dimensional continuous space $c$-code, while $\checkmark$ means many-objective EA searches in the low-dimensional discrete space $I$-code. For restoration, \ding{53} means solutions are restored in random, while $\checkmark$ means solutions are restored by solution divergence. For refinement, \ding{53} and $\checkmark$ mark whether the PF approximation found by many-objective EA is refined or not.
The representative problem scale is $N=5, T=50$.
The reason of $T=50$ is that SDES and SOTA methods behave similarly, cf. Figure~\ref{fig:convergence_curve}. The reason of $N=5$ is that all methods perform well. 

\begin{table}[!tbp]
\caption{The ablation result of SDES in $N=5, T=50$ MOSG problem.}
\label{tab:ablation:our_method}
    \centering
    \begin{tabular}{c|ccc|ll}
    \toprule
        Num & Dis & Res & Ref & HV/Rank & IGD$^+$/Rank\\ \midrule
        1 & \ding{53} &\ding{53}&\ding{53}&3.68e5{\scriptsize $\pm$4.04e3}/4 & 1.31{\scriptsize $\pm$.02}/3 \\
        2 & $\checkmark$&\ding{53}&\ding{53}&\textbf{3.41e5}{\scriptsize $\pm$9.75e3}/5 & \textbf{2.45}{\scriptsize$\pm$.11}/5 \\
        3 & $\checkmark$&\ding{53}&$\checkmark$&4.02e5{\scriptsize $\pm$4.49e4}/3 & 2.39{\scriptsize$\pm$.15}/4 \\ 
        4 & $\checkmark$&$\checkmark$&\ding{53}&\textbf{5.58e5}{\scriptsize $\pm$1.73e3}/2 & \textbf{0.57}{\scriptsize $\pm$.01}/2 \\
        5 & $\checkmark$&$\checkmark$&$\checkmark$&5.61e5{\scriptsize $\pm$1.23e5}/1 & 0.50{\scriptsize $\pm$.06}/1 \\ \bottomrule
    \end{tabular}
\end{table}

From the ablation results, the following key questions are answered:

(i) Can MOSGs be solved simply by naive many-objective EA or discretization? The term ``naive" means the many-objective EA only uses the common EA operator but without discretization, restoration, and refinement components proposed in our paper. Num 1 and Num 2 indicate that both methods fail to converge and are inferior to SDES and all comparison algorithms. Furthermore, The naive EA with discretization is even inferior to the naive EA. The reason why naive EA fails is the $O(N^T)$ sample complexity in combinatorial optimization, and the reason why naive discretization fails is the need of the restoration component after the discretization component. In general, neither naive many-objective EA nor discretization can handle large-scale MOSGs well. 
    

(ii) How much improvement does the restoration component bring? The results from Num 2 to Num 4 in black show that the restoration component can greatly improve solutions' quality. 
If Assumption~\ref{asp:BitOpt} is not satisfied, the well-converged PF may not be found, where the refinement component is needed.

(iii) How much improvement does the refinement component bring? The results from Num 2 to Num 3 or from Num 4 to Num 5 show that clear gains can be obtained from the lightweight refinement. Observe that the refinement component works well and that there is a lot of room for the accuracy improvement of bit-wise optimization in the evaluation component.

\section{Conclusion}
\label{sec:conclusion}
This paper proposes the first linear sample complexity EA-based framework SDES for MOSGs if either the number of targets or attackers is fixed, which simultaneously extends targets and heterogeneous attackers to large-scale scenarios.
The contribution includes proposing a high-dimensional evaluation and low-dimensional optimization framework, theoretically disclosing the optimization consistency and convergence guarantee, and empirically verifying the superiority of SDES with respect to scalablity, time efficiency, and effectiveness. 
%
The limitation of SDES is that it greedily solves combinatorial optimization problems bit by bit, whose convergence assumption may become difficult to meet with the increase of the number of targets. Therefore, as traditional methods~\citep{MOSG,extended-MOSG}, SDES provides refinement components to ensure convergence. An important future work is to design a framework with theoretical guarantees for the monotonicity of the quality of the solution. 

\section*{Acknowledgement}
This work is supported by the Scientific and Technological Innovation 2030 Major Projects (No. 2018AAA0100902), the National Natural Science Foundation of China (No. 62106076), and the Natural Science Foundation of Shanghai (No. 21ZR1420300). The data and code of algorithms are available at \url{https://github.com/1589864500/SDES}.

\bibliographystyle{named}
\bibliography{mybibliography}

\newpage
\section*{Appendix} 
The appendix consists of three parts: (i) additional details of the methodology, (ii) proof details of theoretical analysis and (iii) additional experimental details and results. In the first part, Section~\ref{app:notation} first summarizes the variables, parameters, and symbols in this paper. Then, Section~\ref{app:protra} introduces the significance of the discretization and optimization components of our framework, mentioned in Section~\ref{sec:dis} and~\ref{sec:opt}. Furthermore, Section~\ref{app:IatC} introduces the pseudocode of ideal attacked target calculation IATC mentioned in Algorithm~\ref{alg:pseudo1}. Finally, Section~\ref{app:fra_pseu_BSM} introduces another pseudocode of the boolean scoring mechanism BSM mentioned in Algorithm~\ref{alg:pseudo1}. In the second part, we present the proof details about Lemmas and Theorems mentioned in Section~\ref{sec:met} and~\ref{sec:val_ana}. In the third part, firstly, Section~\ref{app:expset} introduces more experimental setup details, including hyperparameter setup, implementation setup of NSGA-III in the optimization component of SDES, solution initialization of SDES. Then, Section~\ref{app:more_eff_exp} exhibits the comprehensive results of the performance of all methods mentioned in Seciton~\ref{sec:effe:part2}. In addition, Section~\ref{app:more_time_effi} analyzes the effect of population size on the runtime of SDES. Seciton~\ref{app:more_sen_ana} displays additional sensitivity analysis of population size in SDES. Finally, combined with the performance result details, Section~\ref{app:vis_ana} further analyzes the characteristics of SDES and SOTA methods by visualization.

\section{Additional Details of Methodology}
\label{app:det_met}
\subsection{Notations}
The definitions of variables, parameters, and symbols involved in this paper is listed in Table~\ref{app:tab:notion}. 
\label{app:notation}

\begin{table*}[]
\caption{The notions involved in this paper.}
\label{app:tab:notion}
\begin{adjustbox}{width=\textwidth}
    \centering
    \begin{tabular}{c|l}
    \toprule
        Notation & Definition\\ \midrule
        {$\succeq, \succ$} & $\bm{c}\succeq\bm{c}'$ means that $\bm{c}$ weak dominates $\bm{c}'$. $\bm{c}\succ\bm{c}'$ means that $\bm{c}$ dominates $\bm{c}'$. \\
        {$\setminus$} & The set difference symbol. \\
        {$\Vert\cdot\Vert_1$} & The L1 norm of a vector. \\
        {$[\cdot]$} & $[N]=\{1,\ldots,N\}$ for a positive integer $N$. \\
        {$\mathcal{A}_i$, $\mathcal{D}$} & The $i$-th attacker and defender in the multi-objective security games. \\
        {$N$, $T$} & The number of attackers and targets. \\ 
        {$r$} & {The resource ratio. The total resource is $r\cdot  T$.} \\
        $t$ & The $t$-th target in all $T$ targets.  \\
        {$\bm{c}$, $\tilde{\bm{c}}$, $\hat{\bm{c}}$, $\bm{c}^*$} & The coverage vector (solution), the feasible solution, the ideal solution, \\
        {} & {and the optimal solution, respectively.}\\
        {$\bm{v}$} & A zero coverage vector except the $t'$-th component. \\
        {$C$, $C^*$, $\tilde{C}$} & The whole space of all solutions, the space of all optimal solutions, and    \\
        {} & {the space of all feasible solutions, respectively.}\\
        {$\bm{F}(\bm{c})$} & {The fitness of solution $\bm{c}$.}  \\
        {$\bm{a}_i$} & The attack vector of $\mathcal{A}_i$.  \\
        {$at_i$} & The target being attacked by $\mathcal{A}_i$.  \\
        $U_i^{u,a}(t)$, $U_i^{c,a}(t)$ & $\mathcal{A}_i$'s payoff on target $t$ if $t$ is \textit{uncovered} and \textit{covered}.\\
        $U_i^{u,d}(t)$, $U_i^{c,d}(t)$ & $\mathcal{D}$'s payoff for target $t$ if $t$ is \textit{uncovered} and \textit{covered}.\\
        $U_i^a(c_t)$, $U_i^d(c_t)$ & $\mathcal{A}_i$'s and $\mathcal{D}$'s payoff for one target targets with respect to the resources $c_t$.\\
        $U_i^a(\bm{c}, \bm{a}_i)$,, $U_i^d(\bm{c}, \bm{a}_i)$ & $\mathcal{A}_i$'s and $\mathcal{D}$'s payoff for $T$ targets with respect to the given strategy profile  \\
        {} & {$(\bm{c}, \bm{a}_i)$.}\\
        {$\bm{U}$} & The payoff structure, including  $U_i^a(\bm{c}, \bm{a}_i)$ and $U_i^d(\bm{c}, \bm{a}_i)$.\\
        {$t_{(1)},\ldots,t_{(T)}$} & {The order of all targets in descending order of $U_i^a(c_t)$.}  \\
        {$BR_i^a(\bm{c})$, $BR_i^d(\bm{c})$} & The best response of $\mathcal{A}_i$ and $\mathcal{D}$.  \\
        {$\Gamma_i(\bm{c})$} & {The attack set, a set of targets that can provide the maximum expected payoffs}  \\
        {} & {for $\mathcal{A}_i$ with the given $\bm{c}$, also denoted as $\Gamma_i$.}\\
        {$gap_i$} & The difference of the payoff if the coverage is changed from $\bm{c}$ to $\bm{c}$+$\bm{v}$ for $\mathcal{A}_i$. \\
        {$\Gamma_s(\bm{c})$} & {The attack group, a set of all $\Gamma_i(\bm{c})$, also denoted as $\Gamma_s$.}  \\
        {$\tilde{\bm{at}}$, $\tilde{\Gamma}_i$, $\tilde{\Gamma}_s$} & The attacked target, attack set and attack group of the feasible solution.  \\
        {$\hat{\bm{at}}$, $\hat{\Gamma}_i$, $\hat{\Gamma}_s$} & The attacked target, attack set, and attack group of the ideal solution.  \\
        {$\bm{\gamma}$, $\bm{\bm{\pi}}$} & A binary matrix and a continuous matrix.  \\
        {$K(\cdot,\cdot)$, $K_s(\cdot,\cdot)$} & The divergence between $\Gamma_i$ and $\Gamma_s$, respectively.  \\
        {$max\_{}gen$} & The maximum generation in EAs.  \\
        {$pop\_size$} & The population size in EAs.  \\
        \bottomrule
    \end{tabular}
\end{adjustbox}
\end{table*}

\subsection{Details about Benefits of Discretization and Optimization}
\label{app:protra}

MOSG is modeled by the leader-followers Stackelberg game, which is easy to generalize to a wide range of real-world problems. 
Lemma~\ref{pro:property1.2} and~\ref{pro:property1.1} explains the difficulty of MOSG optimization briefly. Here we further analyze the high difficulty of MOSG optimization. 
Firstly, under the fully rational attacker behavior assumption, $\mathcal{D}$'s payoff becomes a high-dimensional step function. Meanwhile, no feedback returns if the $\mathcal{D}$ allocates resources out of $\Gamma_i$. E.g., Figure~\ref{fig:Ua_Ud} shows the complex landscape with many flat and jump areas even in small-scale MOSG optimization. 
Secondly, the leader-followers Stackelberg game allows the attackers always go first, thus the behavior of $\mathcal{D}$ is limited to $\Gamma_i$ of $\mathcal{A}_i$, which means MOSG is constrained by $N$ conflicting constraints. Thirdly, the priority action of attackers results in $\mathcal{D}$'s payoff $U^d_i$ being affected by the $\mathcal{A}_i$'s payoff $U^a_i$, cf. $\bm{a}_i$ in Equation~\eqref{equ:equation2}, but we cannot control $U^a_i$ directly.  Figure~\ref{app:fig:Ua_Ud} displays a smooth landscape of $U_i^a$. For $U^d$ optimization, strategy $\bm{c}$ affects $U^d$ and $U^a$ simultaneously, and $U^a$ also affects $U^d$. The $U^d$ optimization becomes very complex because of the divergence in the optimization direction.

This paper proposes SDES with linear complexity for MOSG if either the number of targets or attackers is fixed, which models security games with many heterogeneous attackers. 
To solve large-scale MOSG problems, we use the discretization component transforms solution representation from high-dimensional continuous $c$-code to low-dimensional discrete $I$-code. Specifically, on the one hand, the transformation from $I$-code to $\bm{c}$-code eliminates the influence of hidden variables like $U^a$ by shifting search space from high-dimensional continuous $\bm{c}$ to low-dimensional discrete $\Gamma$. In addition, searching $\Gamma$ directly also eliminates the negative effect of priority action of $\mathcal{A}_i$. On the other hand, the original problem as Equation~\eqref{equ:equationOptFunc}) is transformed to a combinatorial optimization problem as Equation~\eqref{equ:equationBi-OptFunc} and use bit-wise optimization to solve the combinatorial optimization problem.

\label{app:payoff}
\begin{figure}[]
\centering
\includegraphics[width=.6\columnwidth]{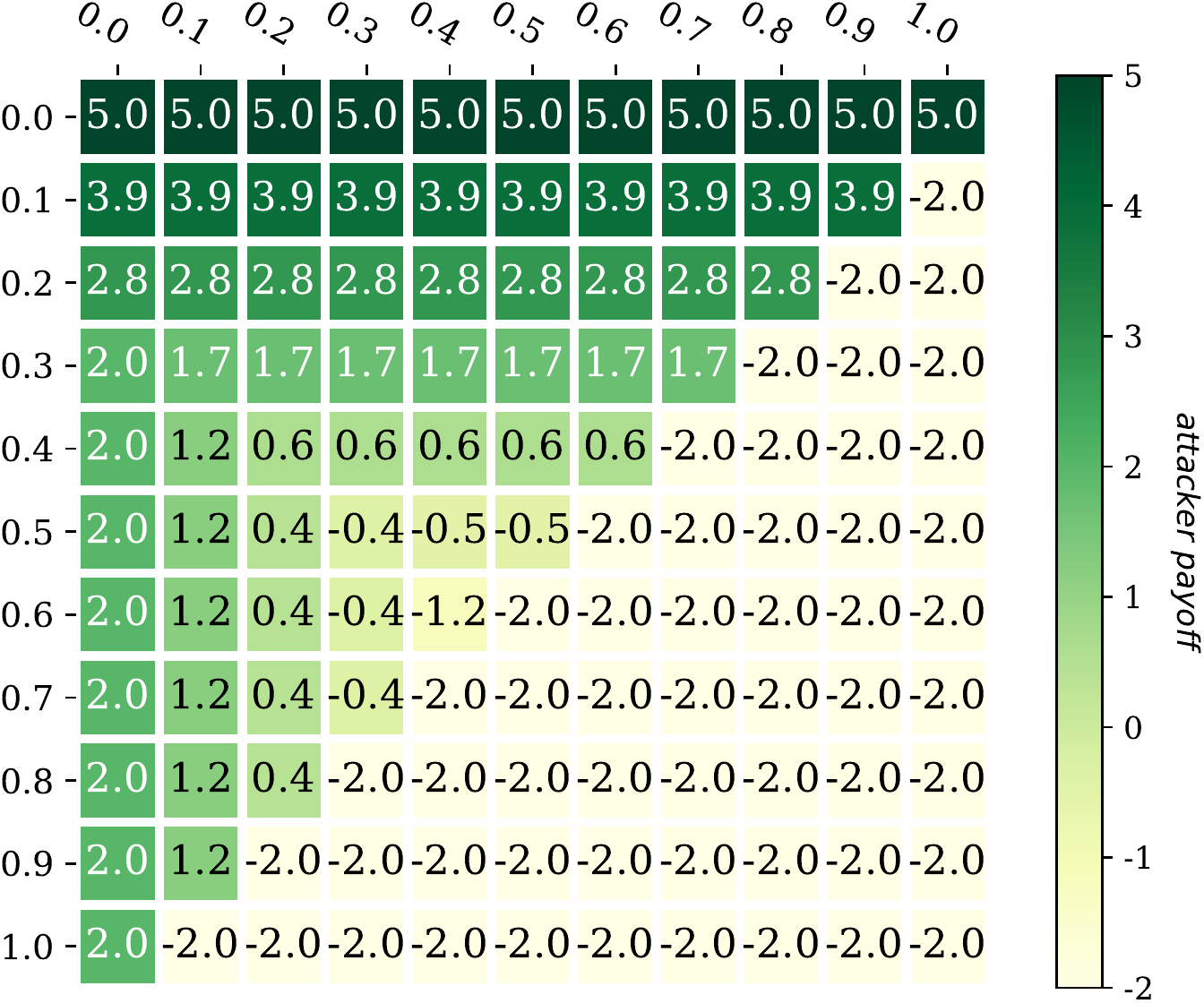}
\caption{The heatmap of attacker payoff $U^a_i(\bm{c})$. $\bm{c}$ directly affects continuous smooth $U^a_i$, which is easy to optimize. However, the goal of security games is not $\min_{t\in[T]}U^a_i(c_t)$, but $\max_{t\in[T]}U^d_i(c_t)$. To make matter worse, the optimization direction of $U^a$ can misdirect the optimization direction of $U^d$. }
\textbf{\label{app:fig:Ua_Ud}}
\end{figure}

\subsection{The Pseudocode of Ideal Attacked Target Calculation (IATC)}
\label{app:IatC}

Algorithm~\ref{alg:at_cal} aims to determine $\mathcal{A}$'s final attacked target by the maximal indifference property~\citep{ERASER} under the fully rational assumption. Line~\ref{code:IatC:line2}-~\ref{code:IatC:line3} describe the resources needed to keep $U^{u,a}_{obj}(t)$ the same, $\forall t\in\Gamma_{obj}$.   $x$ in Line~\ref{code:IatC:line2} is the adjusting goal. Line~\ref{code:IatC:line3} comes from setting $x$ equal to $U^{u,a}_{obj}(t)$ according to Equation~\eqref{equ:equation1}: $x=c_tU_{obj}^{c,u}(t)+(1-c_t)U_i^{u,a}(t)$~\citep{ERASER}. $U_{obj}^a$ calculation in Line~\ref{code:IatC:line4} comes from Equation~\eqref{equ:equation2}. Line~\ref{code:IatC:line5} comes from the fully rational assumption (i.e., $\mathcal{A}$ prefers to attack the target returning best payoff).

\begin{algorithm}
    \caption{Ideal Attacked Target Calculation (IATC)}
    \label{alg:at_cal}
    \begin{algorithmic}[1]
        \REQUIRE $U_{obj}^{u,a}$, $\Gamma_{obj}$.
        \ENSURE 
        
        \STATE$\bm{\hat{c}}\gets$ the zero ideal coverage vector.
        \STATE$x = U^{u,a}_{obj}(\Gamma_{obj}[-1])$. \small\textcolor{Green}{\% payoff of the last target in $\Gamma_{obj}$}\label{code:IatC:line2}
    
        \STATE$\hat{\bm{c}}[\Gamma_{obj}[:-1]] = \frac{x-U_{obj}^{u,a}(\Gamma_{obj}[:-1])}{U_{obj}^{c,a}(\Gamma_{obj}[:-1])-U^{u,a}_{obj}(\Gamma_{obj}[:-1])}$. \small\textcolor{Green}{\% adjust $\hat{\bm{c}}$ so that all targets' $U^a_{obj}$ equals $x$}\label{code:IatC:line3}
    
        \STATE Calculate $( U^a_{obj}(\hat{c}_t),t\in\Gamma_{obj})$.\label{code:IatC:line4}
    
        \STATE$\hat{\bm{at}}_{obj} = \Gamma_{obj}[\mathop{\arg\max}_{t\in[T]} ( U^a_{obj}(\hat{c}_t),t\in\Gamma_{obj})]$.\label{code:IatC:line5}
        \STATE \textbf{return} $\hat{\bm{at}}_{obj}$. \small\textcolor{Green}{\% the attacked target of $\mathcal{A}_{obj}$}
    \end{algorithmic}
\end{algorithm}

\subsection{The Framework and Pseudocode of Bit-wise Optimization (BitOpt)}
\label{app:fra_pseu_BSM}

The greedy algorithm divides the process of solution evaluation into $T$ bit-problems to bypass the potential combinatorial explosion. Each bit-problems is solved by the boolean scoring mechanism (BSM). BSM, the core of BitOpt, aims to search the $\bm{at}$ as similar as $\hat{\bm{at}}$. This section would introduce BSM in two perspectives, including pseudocode and visualization. 
Firstly, as shown in Figure~\ref{fig:heuristic_operator}  (as $t=2$ as an illustration), BSM aims to select $c_2^*$ from $\pi_{c_2}$. The $c_2^*$ optimization can be regarded as a binary string matching problem. 
Specifically, $(\bm{1}_{t_2}(\hat{at}_i),\ldots,\bm{1}_{t_2}(\hat{at}_M))$ is a fixed binary string with no rules, e.g., (0,1,\ldots,0,1). While $(\bm{1}_{t_2}(at_1),\ldots,\bm{1}_{t_2}(at_M))$ is a regular binary string, e.g. (0,\ldots,0,1,\ldots,1).
$Count(\cdot)$ measures the bit-wise difference between $(\bm{1}_{t_2}(\hat{at}_i),\ldots,\bm{1}_{t_2}(\hat{at}_M))$ and  $(\bm{1}_{t_2}(at_1),\ldots,\bm{1}_{t_2}(at_M))$. Finally, the $Count$ calculation part selects $c_t^*$ from $\pi_{c_t}$ by minimizing $Count(\cdot)$.
As for pseudocode perspective, those $Count$ calculation part is displaced in Algorithm~\ref{alg:pseudo2} Line~\ref{code:alg2:12}$\sim$\ref{code:alg2:20}.

\begin{figure*}[!t]
\centering
\includegraphics[width=.8\textwidth]{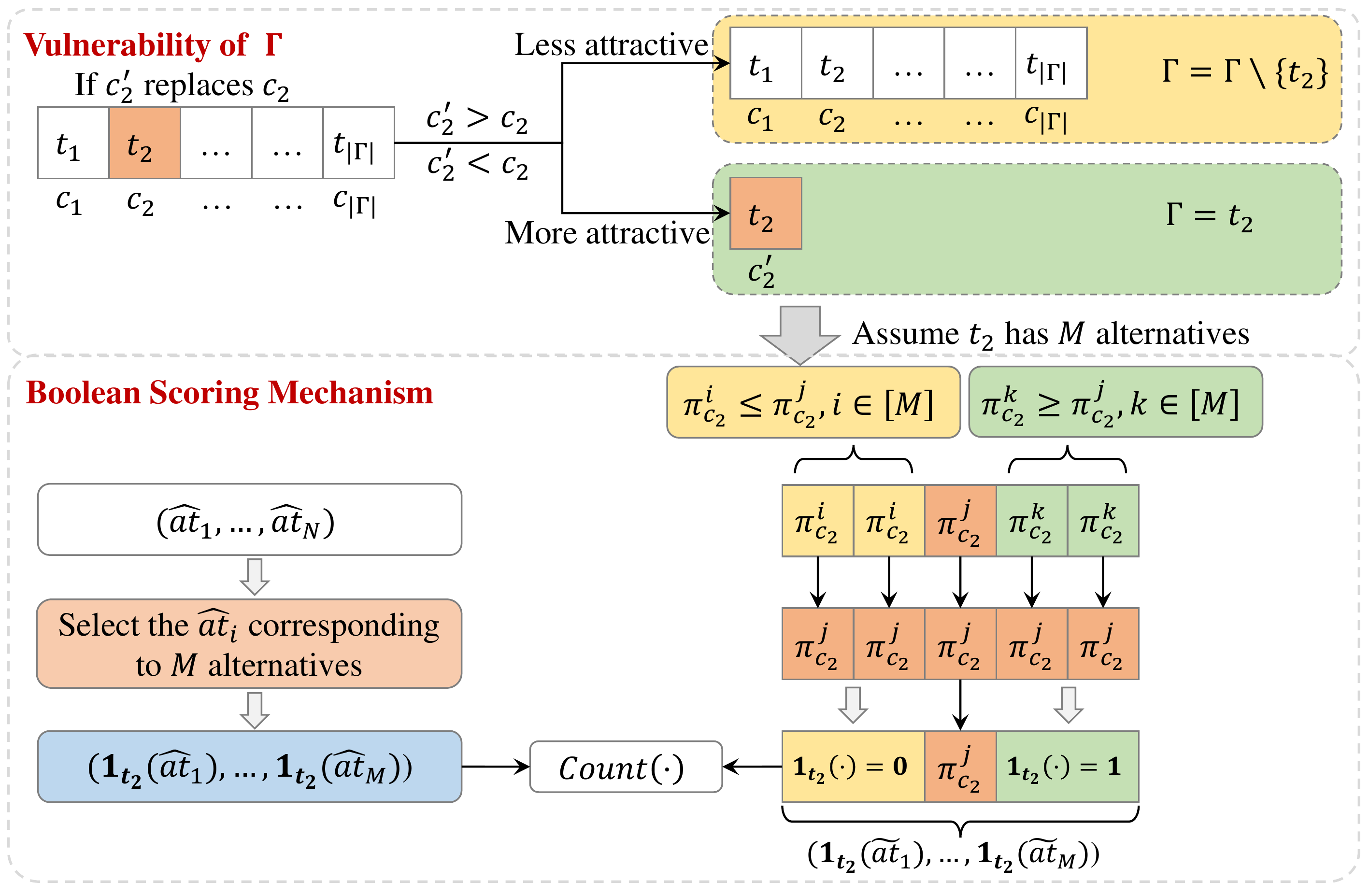}
\caption{The visualization of Lemma~\ref{pro:property1.2} at the top and the illustration of BSM details at the bottom. As shown in the top half, assume the coverage vector allocated on targets in $\Gamma$ is $\bm{c}=(c_1,\ldots,c_{\vert\Gamma\vert})$ and satisfies $c_1\leq\ldots\leq c_{|\Gamma|}$. If an allocation $c_2$ changes to $c_2'$, the $\Gamma$ changes to $\Gamma\setminus \{t_2\}$ or $\{t_2\}$ according to the value of $c_2$ and $c_2'$, e.g., $c_2<c_2'$ or $c_2>c_2'$. 
As shown in the bottom half, the right part analyzes the $\bm{at}$ corresponding to $\pi^j_{c_2}$, the left part calculates $\hat{\bm{at}}$, and the $Count(\cdot)$ calculates the distance between $\hat{\bm{at}}$ and $\bm{at}$. Specifically, suppose $M$ attackers provide target $t_2$ with $M$ alternatives $(\pi_{c_2}^1,\ldots,\pi_{c_2}^M)$ by ascending order. 
If $\mathcal{D}$ finally selects $\pi_{c_2}^j$ (orange) as $c_2^*$, each $\Gamma_i$ changes depending on the vulnerability of $\Gamma$ (the rule described in the top half). E.g., the $\Gamma$ of all $\mathcal{A}_i$ in yellow becomes $\Gamma\setminus \{t_2\}$, since $\pi_{c_2}^j>\pi_{c_2}^i$.
In addition, if $at_i=t_2$, then $\bm{1}_{t_2}(at_i)=1$. Otherwise, $\bm{1}_{t_2}(at_i)=0$. According to Lemma~\ref{pro:property1.2}, each $\Gamma_i$ in yellow must not contain $t_2$, while each $\Gamma_i$ in orange and green must contain $t_2$. Therefore, the $\bm{1}_{t2}(\cdot)$ value of blue area equals 0 while the $\bm{1}_{t2}(\cdot)$ value of red and green areas equals 1.}
\label{fig:heuristic_operator}
\end{figure*}

\begin{algorithm}
\caption{Bit-wise Optimization (BitOpt)}
\label{alg:pseudo2} 
\begin{algorithmic}[1]
    \REQUIRE $count_1$, $P$, $t$, $\bm{I}$, $\bm{A}$, $\bm{\hat{at}}$.
    \ENSURE 
    \STATE$\bm{c}^*\gets\ 0$.
    
    \small\textcolor{Green}{\% If there's only one $alternative$ (i.e., $\vert\bm{\pi_{c_t}}\vert=1$), choose it as $\bm{c}^*$. Otherwise, choose the one most similar to $\bm{\hat{at}}$.}
    \IF{$count_1=1$}
        \STATE$obj = P[t][0]$.
        \STATE$x = U^{u,a}_{obj}(\bm{A}_{[obj]}[I[obj]-1])$.
        \STATE$\bm{c}^* = \frac{x-U_{obj}^{u,a}(t)}{U_{obj}^{c,a}(t)-U^{u,a}_{obj}(t)}$.
    \ELSIF{$count_1>1$}
        \STATE$\pi_{obj}:List[int] = P[t]$.
        \STATE$\bm{x} = ( U^{u,a}_{obj}(\bm{A}_{[obj]}
        [I[obj]-1]),obj\in\pi_{obj})$.
        \STATE$\pi_{c_t} = (\frac{x[obj]-U_{obj}^{u,a}(t)}{U_{obj}^{c,a}(t)-U^{u,a}_{obj}(t)},obj\in\pi_{obj})$.
        \STATE sort $\bm{\hat{at},\pi_{c_t}}$ in ascending order of value by $\pi_{c_t}$.
        \STATE$\bm{count}_2:shapelike(\pi_{c_t})\gets$ the zero distance counter between $\tilde{\Gamma}_{s}$ and $\hat{\Gamma}_{s}$.

        \small\textcolor{Green}{\% Boolean Scoring Mechanism (BSM) part.}\\
        \small\textcolor{Green}{\% Update $\bm{count}_2[obj]$ based on $target$'s relationship to $\hat{\bm{at}}[obj-1]$ and $\hat{\bm{at}}[obj]$.}
        \FOR{$1\leq obj < \vert\pi_{obj}\vert$} \label{code:alg2:12}
            \STATE$count_2[obj] = \bm{count}_2[obj - 1]$.
            \IF{$\hat{\bm{at}}[obj-1]\neq t$}
                \STATE$\bm{count}_2[obj]\ +=\ 1$.
            \ENDIF
            \IF{$\hat{\bm{at}}[obj]\neq t$}
                \STATE$\bm{count}_2[obj]\ -=\ 1$.
            \ENDIF
        \ENDFOR \label{code:alg2:20}
        \STATE$c^*=\pi^{idx}_{c_t}, idx=\mathop{\arg\min}_i \bm{count}_2[i]$.
    \ENDIF
    \STATE\textbf{return} $\bm{c}^*$. \small\textcolor{Green}{\% the best alternative for component $c_t$}
\end{algorithmic}
\end{algorithm}

\section{Proof Details}
\label{app:proof}
\subsection{The Proof of Lemmas and Theorems in Section~\ref{sec:met}}
\label{app:proof:method}
\lemmaSGperspective*
Lemma~\ref{pro:property1.2} shows that $\bm{v}$ may cause a sharp change from $\Gamma_i(\bm{c})$ to $\Gamma_i(\bm{c}+\bm{v})$. Since the resource allocation between different targets is independent, we only discuss $\bm{v}$ with one non-zero component $v_{t'}$. Note that if $\bm{v}$ is a zero vector, $\Gamma_i(\bm{c})$ does not change, i.e., $\Gamma_i(\bm{c})=\Gamma_i(\bm{c}+\bm{v})$. In SGs, $\mathcal{A}_i$ only attacks the most attractive target $at_i$, i.e., $at_i\in\Gamma_i(\bm{c})$. The reason for Lemma~\ref{pro:property1.2} is two-fold: 
(i) If the resource allocation $\bm{v}$ causes the original $at_i$ is no longer the most attractive target, i.e., $at_i\notin \Gamma_i(\bm {c}+\bm{v})$, then $U_i^d(\bm {c}+\bm{v})$ changes dramatically. 
(ii) If the resource allocation $\bm{v}$ moves the targets that can not yield the maximum $U_i^d$ out of $\Gamma_i(\bm{c})$, then $U_i^d(\bm {c})$ would not change because of the nature of $Max$ function, i.e., $\Gamma_i(\bm{c})=\Gamma_i(\bm{c}+\bm{v})$. The proof detail is
\begin{proof}\label{pro:property1.2 proof}
We categorize the cases into (i) If $v_{t'}$ is allocated to the target $t'\notin\Gamma_i(\bm{c})$, then $\Gamma_i(\bm{c})$ does not be affected. (ii) Otherwise, $\Gamma_i(\bm{c})$ would change regularly.

If $t'\notin\Gamma_i(\bm{c})$, since the best payoff of $\mathcal{A}_i$ and $\mathcal{D}$ only come from $\Gamma_i(\bm{c})$, any minor variation $v_{t'}$ does not change $\Gamma_i(\bm{c})$, i.e., $\Gamma_i(\bm{c}+\bm{v})=\Gamma_i(\bm{c})$. 

If $t'\in\Gamma_i(\bm{c})$, an illustration is given in the top of Figure~\ref{fig:heuristic_operator}. 
Recall that $\Gamma_i(\bm{c})$ collects targets yielding $\mathcal{A}_i$'s maximum payoff $U_i^a$, i.e., $U_i^a(c_{t'}) = U_i^a(c_{t''}), \forall t',t''\in\Gamma_i(\bm{c})$. If $v_{t'}\neq0$, then $U_i^a(c_{t'})\neq U_i^a(c_{t''}),\forall t''\in\Gamma_i(\bm{c}),t''\neq t'$.
    
Next, we summarize how $v_{t'}$ changes $\Gamma_i(\bm{c})$. If $v_{t'}>0$, $\mathcal{D}$ allocates more resources on $t'$, so $\mathcal{A}_i$ gains less, i.e., $U_i^a(c_{t'}+v_{t'})<U_i^a(c_{t''}),t''\in\Gamma_i(\bm{c})$, $t''\neq t'$. Therefore,  $\Gamma_i(\bm{c} + \bm{v}) = \Gamma_i(\bm{c}) \setminus\{t'\}$. Similarly, if $v_{t'}<0$, $\mathcal{A}_i$ gains more, i.e., $U_i^a(c_{t'})>U_i^a(c_{t''}),t''\in\Gamma_i(\bm{c})$, $t''\neq t'$. Therefore, $\Gamma_i(\bm{c}+\bm{v})=\{t'\}$.
\end{proof}

\lemmaoptperspective*
\begin{proof}\label{pro:property1.1 proof}
The proof process consists of two parts: (i) the categorical discussion of the relationship between $U_i^d(\bm{c})$ and $U_i^d(\bm{c}+\bm{v})$. (ii) The calculation of the lower bound of changed value $gap_i$.

For the first part, $\mathcal{D}$'s payoff $U^d_i(\bm{c})$ is yielded by only one target $at_i\in\Gamma_i(\bm{c})$. Therefore, if $at_i$ still in $\Gamma_i(\bm{c}+\bm{v})$ when the \textit{converge vector} $\bm{c}$ changes to $\bm{c}+\bm{v}$, then $U_i^d(\bm{c}+\bm{v})=U_i^d(\bm{c})$. Otherwise, $U_i^d(\bm{c}+\bm{v})\neq U_i^d(\bm{c})$.
    
As shown in Lemma~\ref{pro:property1.2}, suppose $v_{t'}\neq0$, $\Gamma_i(\bm{c}+\bm{v})$ can be mathematically described with $\Gamma_i(\bm{c})$ and $t'$: 
If $t'\notin\Gamma_i(\bm{c})$, then $U^d_i(\bm{c}+\bm{v})=U^d_i(\bm{c})$; If $t'\in\Gamma_i(\bm{c})$ and $at_i\in\Gamma_i(\bm{c}+\bm{v})$, then $U^d_i(\bm{c}+\bm{v})=U^d_i(\bm{c})$; If $t'\in\Gamma_i(\bm{c}+\bm{v})$ and $at_i\notin\Gamma_i(\bm{c}+\bm{v})$, then $U^d_i(\bm{c}+\bm{v})\neq U^d_i(\bm{c})$.

In general, suppose $v_{t'}\neq 0$, if $t'\in\Gamma_i(\bm{c})$, then $\Gamma_i(\bm{c}+\bm{v})\neq\Gamma_i(\bm{c})$, and if  $t'\notin\Gamma_i(\bm{c})$, then $\Gamma_i(\bm{c}+\bm{v})=\Gamma_i(\bm{c})$. Note that if $v_{t'}=0$, then $\Gamma_i(\bm{c}+\bm{v})=\Gamma_i(\bm{c})$ because $\bm{v}$ is a zero vector.

For the second part, recall Equation~\eqref{equ:equation2-2} that $U_i^d(\bm{c}+\bm{v})=\max_{t\in\Gamma_i(\bm{c}+\bm{v})}U_i^d(c_t)$. Since $at_i\notin\Gamma_i(\bm{c}+\bm{v})$, $\Gamma_i(\bm{c}+\bm{v})\subseteq\Gamma_i(\bm{c})$. Therefore, one has:



    
\begin{equation}
\label{app:equ:Lemma2proof_1}
U^d_i(\bm{c}+\bm{v}) =\max_{t\in\Gamma_i(\bm{c}+\bm{v})} U^d_i(c_t) \leq\max_{t\in\Gamma_i(\bm{c})}U^d_i(c_t)\,.
\end{equation}

Since $at_i\notin\Gamma_i(\bm{c}+\bm{v})$, $at_i=\mathop{\arg\max}_{t\in\Gamma_i(\bm{c})}U_i^d(c_t)$, then $gap_i$ can be derived from Equation~\eqref{app:equ:Lemma2proof_1}:


\begin{equation}
\label{app:equ:Lemma2proof_2}
gap_i = \max_{t\in\Gamma_i(\bm{c})}U^d_i(c_t) - \max_{t\in\Gamma_i(\bm{c})\setminus\{at_i\}}U^d_i(c_t)\,.
\end{equation}
where $U_i^d(\bm{c})=\max_{t\in\Gamma_i(\bm{c})}U^d_i(c_t)$,  $U_i^d(\bm{c}+\bm{v})=\max_{t\in\Gamma_i(\bm{c})\setminus\{at_i\}}U^d_i(c_t)$. Since $U_i^d(\bm{c})\neq U_i^d(\bm{c}+\bm{v})$, $gap_i>0$.
\end{proof}

\lemmaoptimalexpansion*
The proof of Lemma~\ref{pro:property2} can be derived from the optimal expansion rules in Lemma~\ref{pro:property2}~\citep{ERASER}. 
In the Stackelberg game with multiple heterogeneous attackers, each attacker ultimately attacks only one target (without loss of generality because SSE always exists). In the fully rational SGs, $\mathcal{A}_i$ only focus on targets returning the maximal $U^a_i(\bm{c})$, which is defined as $\Gamma_i(\bm{c})$. 
Namely, the attacker's intentions are predictable.

The optimal expansion rule can provide $N$ candidate attack sets with sizes from 1 to $T$, in which an optimal attack set always exists no matter how many resources the defender owns because of SSE~\citep{SSE}.

\subsection{The Proof of Lemmas and Theorems in Section~\ref{sec:val_ana}}
\label{app:proof:3}
\lemmaTheoAnal*
\begin{proof}
From SSE assumption, there is an observation from~\citep{ERASER}. The optimal $\mathcal{D}$'s payoff for defending $\mathcal{A}_i$ is derived from the target in $\mathcal{A}_i$'s $\Gamma_i(\bm{c})$, thus if $\Gamma_i(\bm{c})\subseteq\Gamma_i(\bm{c}')$ and $c_t=c_t'$, $\forall t\in\Gamma_i(\bm{c})$, then $U_i^d(\bm{c})\leq U_i^d(\bm{c}')$.

For Lemma~\ref{lem:lemma1}, since $\Gamma_i(\bm{c})$ is constructed from the optimal expansion rule, i.e., add $t$ to $\Gamma_i(\bm{c})$ in descending order of $U_i^a(c_t)$, if $\Gamma_i(\bm{c}')$ is larger than $\Gamma_i(\bm{c})$, then we have $\Gamma_i(\bm{c})\subseteq\Gamma_i(\bm{c}')$ and $c'_t\geq c_t,\forall t\in\Gamma_i(\bm{c})$. Let's discuss them in categories (i) $c'_t=c_t$ and (ii) $c'_t>c_t$. 

\begin{itemize}
    \item [(i)]  
    According to the observation in~\citep{ERASER}, if $\Gamma_i(\bm{c})\subseteq\Gamma_i(\bm{c}')$ and $c_t=c_t'$, $\forall t\in\Gamma_i(\bm{c})$, then $U_i^d(\bm{c})\leq U_i^d(\bm{c}')$.
    
    \item[(ii)] If $c'_t>c_t$, since the more resources allocated to $t$, the higher $\mathcal{A}_i$ payoff of the defender $U^d_i(\bm{c})$, the observation in~\citep{ERASER} still holds.
\end{itemize}
\end{proof}

\obsalgorithmfir*
Since both $\bm{c}$ and $\hat{\bm{c}}^i$ are constructed from the optimal expansion rule according to Lemma~\ref{pro:property2}, the size of $\Gamma_i(\bm{c})$ corresponds to the content of $\Gamma_i(\bm{c})$, i.e., the target order adding to $\Gamma_i(\bm{c})$ must be in descending $U_i^a(c_t)$. Since the ideal solution $\hat{\bm{c}}^i$ uses the maximal resources defending $\mathcal{A}_i$, $\Vert\bm{c}\Vert_1\leq\Vert\hat{\bm{c}}^i\Vert_1$. $\Vert\tilde{\bm{c}}\Vert_1\leq\Vert\hat{\bm{c}}^i\Vert_1$ means that $\hat{\bm{c}}^i$ has more resources and $\mathcal{D}$ can protect more targets, i.e., $\Gamma_i(\bm{c})\subseteq\Gamma_i(\hat{\bm{c}}^i)$. Meanwhile, if $\Vert\bm{c}\Vert_1\leq\Vert\hat{\bm{c}}^i\Vert_1$ and $\Gamma_i(\bm{c})\subseteq\Gamma_i(\hat{\bm{c}}^{i})$, then $U_i^d(\bm{c})\leq U_i^d(\hat{\bm{c}}^{i})$, according to Lemma~\ref{lem:lemma1}. Finally, since each component $U_i^d(\hat{\bm{c}})$ is constructed by using all resources to defending $\mathcal{A}_i$, $U_i^d(\hat{\bm{c}})$ is the upper bound of any strategy for defending $\mathcal{A}_i$. Therefore, $(U_i^d(\hat{\bm{c}}^1),\ldots,U_i^d(\hat{\bm{c}}^N))$ cannot be dominated by any strategy on the PF.

\obsalgorithmthi*
\begin{proof}
The difference of \textit{attack set} and \textit{attack group} lies in model type of SGs. Attack set focuses on single-objective modeling with one attacker, while attack group focuses on multi-objective modeling with multiple attackers. Conclusions in attack set perspective can easily be extended to conclusions in attack group perspective, because the attackers are independent of each other. The discussion below focuses on conclusions in the attack set perspective.

Under SSE assumption in the single attacker scenario, $\mathcal{A}_i$ finally attacks only one target $at_i$ returning the maximal payoff. $at_i\in\Gamma_i(\bm{c})$ means that $at_i$ is protected by $\mathcal{D}$. $at_i\in\Gamma_i(\bm{c})$ and $at_i\in\Gamma_i(\hat{\bm{c}}^i)$ means that both $\bm{c}$ and $\hat{\bm{c}}^i$ can protect $at_i$. Otherwise, $\bm{c}$ cannot protect $at_i$ well. Therefore, the divergence $K(\Gamma_i(\bm{c}), \Gamma_i(\hat{\bm{c}}^i))$ calculation is determined only by $at_i$ but not all elements of $\Gamma_i(\bm{c})$, $\Gamma_i(\hat{\bm{c}}^i))$. After generalizing the above result from the single attacker scenario to the $N$-attacker scenario, one has that the divergence $K_s(\Gamma_s(\bm{c}), \hat{\Gamma}_s)$ calculation is determined only by $\bm{at}$ but not all elements of $\Gamma_s(\bm{c})$, $\hat{\Gamma}_s$.

As for Theorem~\ref{obs:algorithm3}, $\Gamma_i(\bm{c})$ is similar to $\Gamma_i(\hat{\bm{c}}^i)$ means that both $\bm{c}$ and $\hat{\bm{c}}^i$ allow $\mathcal{D}$ to protect $at_i$ from $\mathcal{A}_i$. Meanwhile, since $\bm{c}$ and $\hat{\bm{c}}^i$ are constructed according to the optimal expansion rule, the feasible solution obtains the same payoff as the ideal solution if $at_i\in\Gamma_i(\bm{c})$ and $at_i\in\Gamma_i(\hat{\bm{c}}^i)$ for all $i\in[N]$. Finally, from Lemma~\ref{obs:algorithm1}, $(U_i^d(\hat{\bm{c}}^1),\ldots,U_i^d(\hat{\bm{c}}^N))$ cannot be dominated by any feasible strategy. If there is a feasible strategy $\bm{c}$ can achieve $(U_i^d(\hat{\bm{c}}^1),\ldots,U_i^d(\hat{\bm{c}}^N))$, then $\bm{c}$ is the solution on the PF.

Meanwhile, Lemma~\ref{lem:lemma1} has been supplemented by Theorem~\ref{obs:algorithm3}. In detail, even though $\Gamma_i(\bm{c})\subseteq\Gamma_i(\hat{\bm{c}}^i)$, if they have the same $at_i$, then they are similar and have the same payoff. Namely, the feasible solution $\tilde{\bm{c}}$ can achieve the upper bound, i.e., $(U_i^d(\hat{\bm{c}}^1),\ldots,U_i^d(\hat{\bm{c}}^N))$.
\end{proof}

\splitassumption*
\theoremRatBitbybit*
\begin{proof}
Theorem~\ref{obs:val_ana1} shows that the upper bound $(U_i^d(\hat{\bm{c}}^1),\ldots,U_i^d(\hat{\bm{c}}^N))$ can be achieved by $\bm{c}$ as long as $\Gamma_s(\bm{c})$ contains the attacked target vector $\bm{at}$. In the top half of Figure~\ref{fig:heuristic_operator}, it describes what happened to the $\Gamma_i(\bm{c})$ when the resources on $t\in\Gamma_i(\bm{c})$ changes. Specifically, recall Lemma~\ref{pro:property1.2} and suppose $c_2$ changes to $c_2'$, one has that: (i) If $c_2'>c_2$, then $\Gamma_i(\bm{c})=\Gamma_i(\bm{c})\setminus \{t_2\}$. (ii) If $c_2'<c_2$, then $\Gamma_i(\bm{c})=\{t_2\}$. In the bottom half of Figure~\ref{fig:heuristic_operator}, it describes what happened to $\Gamma_i(\bm{c})$ when choosing $\pi_{c_2}^j$ from $\pi_{c_2}$. Suppose $\pi_{c_2}$ is sorted in increasing order and $\pi_{c_2}^j$ is selected, one has that: (i) the $\Gamma_i(\bm{c})$ corresponding to all $\pi_{c_2}^i$, which are less than $\pi_{c_2}^j$, satisfies $\Gamma_i(\bm{c})=\Gamma_i(\bm{c})\setminus \{t_2\}$.  (ii) The $\Gamma_i(\bm{c})$ corresponding to all $\pi_{c_2}^i$, which are greater than $\pi_{c_2}^j$, satisfies $\Gamma_s(\bm{c})=\{t_2\}$. To ensure $\Gamma_i(\bm{c})$ contains all $\bm{at}$, we summarize the following conditions

If BitOpt subjects to Assumption~\ref{asp:BitOpt}, all $\bm{at}$ dose not affected. As longe as all $\bm{at}$ dose not affected when searching each $c_i^*\in\{\pi_{c_i}^1,\ldots,\pi_{c_i}^N\}$, $\bm{c}^*$ can achieve the upper bound by Theorem~\ref{obs:algorithm3}.  Finally, by Theorem~\ref{obs:algorithm3}, the solutions found by BitOpt must on the true PF.

\end{proof}

\section{Additional Experimental Details and Results}
\subsection{More Details about Experiment Setup}
\label{app:expset}
\subsubsection{Hyperparameter Setup}

SDES has a few hyperparameters needed to tune, including $max\_{}gen$ and $pop\_{}size$. On the one hand, we set $max\_{}gen=300$ in all MOSG problems. Figure~\ref{fig:convergence_curve} shows the fast convergence speed of our method (within 50 iterations), so 300 iterations are able to ensure algorithm convergence. Meanwhile, SDES is insensitive to $max\_{}gen$ because of the linear complexity with both target and heterogeneous attacker dimensions. On the other hand, $pop\_{}size$ determines the granularity of the subspace, i.e., $\vert\tilde{C}\vert$. The larger $pop\_{}size$, the better the space distribution. Although solution quality is guaranteed, the relationship between $pop\_{}size$ and runtime is undetermined. Section~\ref{app:more_time_effi} further analyze the effect of increasing $pop\_{}size$ on runtime.

\subsubsection{Implementation Setup of NSGA-III in the Optimization Component of SDES}
\label{app:NSGA-III}
The basic configuration of NSGA-III includes sampling=Int Random Sampling, selection=Tournament Selection, crossover=Simulated Binary Crossover or Half Uniform Crossover (we choose SBX in this paper), Mutation=Int Polynomial Mutation. The implementation of NSGA-III uses an open-source framework: multi-objective optimization in Python\footnote{The document and code are available at \url{https://pymoo.org/index.html}.}~\citep{pymoo} (PyMOO). In PyMOO, crossover rate, mutation rate, and other hyperparameter use default values, and the final result saves all non-dominated solutions in the iteration process. The pseudocode of each generation of NSGA-III is shown in Algorithm~\ref{app:alg:NSGA-III}~\citep{NSGA-III}.

In addition, the NSGA-III in the optimization component of SDES uses fixed population size and generation number. Specifically, since Riesz can adaptively provide a set of well-spaced reference direction, the many-objective EA with a fixed but sufficiently large population size is still able to find a well-spaced PF. Meanwhile, the ORIGAMI method is shown to converge a single solution in constant generations~\citep{MOSG}.

\begin{algorithm}
    \caption{Generation $t$ of NSGA-III Procedure}
    \label{app:alg:NSGA-III}
    \begin{algorithmic}[1]
        \REQUIRE $H$ structured reference points $Z^s$ or supplied aspiration points $Z^a$, parent population $P_t$.
        \ENSURE
        
        \STATE $S_t=\emptyset$, $i=1$.
        \STATE $Q_t=$ Crossover+Mutation$(P_t)$.
        \STATE $R_t=P_t\cup Q_t$
        \STATE $(F_1,F_2,\ldots)=$ Non-dominated-sort$(R_t)$.
        \REPEAT
        \STATE $S_t=S_t\cup F_i$ and $i=i+1$.
        \UNTIL{$\vert S_t\vert\geq N$} 
        \STATE Last front to be included: $F_l=F_i$.
        \IF{$\vert S_t\vert=N$}
        \STATE $P_{t+1}=S_t$, \textbf{return} $P_{t+1}$.
        \ELSE
        \STATE $P_{t+1}=\cup_{j=1}^{l-1}F_j$.
        \STATE Points to be chosen from $F_l$: $K=N-\vert P_{t+1}\vert$.
        \STATE Normalize objectives and create reference set $Z^r$: $Normalize(\bm{\mathrm{f}}^n,S_t,Z^r,Z^s,Z^a)$.
        \STATE Associate each member $\bm{s}$ of $S_t$ with a reference point: $[\pi_{\bm{s}},d(\bm{s})]=Associate(S_t,Z^r)$.  \small \textcolor{Green}{\% $\pi(\bm{s})$: closest reference point, $d$: distance between $\bm{s}$ and $\pi(\bm{s})$}
        \STATE Compute niche count of reference point $j\in Z^r$: $\rho_j=\sum_{\bm{s}\in S_t/F_l}((\pi(\bm{s})=j)\,?\,1:0)$. \small \textcolor{Green}{\% $((\cdot)\,?\,1:0)$: if $(\cdot)$ is true then return $1$, otherwise return $0$}
        \STATE Choose $K$ members one at a time from $F_l$ to construct $P_{t+1}$: $Niching(K,\rho_j,\pi,d,Z^r,F_l,P_{t+1})$
        \ENDIF
        \STATE \textbf{return} $P_{t+1}$.
    \end{algorithmic}
\end{algorithm}

\subsubsection{Solution Initialization of SDES}
The initialization of solutions is also the result of the discretization component of SDES. According to the definition of $\Gamma_s$, the solution initialization can be defined as a $N$-integers random vector $\bm{I}$, $I_i\in[T]$. However, due to limited resources and other reasons, the scenario $I_i=T$ (all targets are added to $\Gamma_i$) generally does not appear in real problems. Therefore, we can optimize the initialization with redundancy as follows: Firstly, calculate the maximum attack set  $\vert\Gamma_i\vert_{max}$ that can be obtained by allocating all resources for defending $\mathcal{A}_i$. Secondly, initialize $I_i\in[\vert\Gamma_i\vert_{max}]$. Each component of $\bm{I}$ means the size of attacker set $\Gamma$.

\subsection{More Details about Effectiveness Experiments}
\label{app:more_eff_exp}

We show the details of the effectiveness experiment, cf. Section~\ref{sec:effe:part2}, in Table~\ref{app:tab:performance:N=3},~\ref{app:tab:performance:N=4},~\ref{app:tab:performance:N=5},~\ref{app:tab:performance:N=6},~\ref{app:tab:performance:N=7}, and~\ref{app:tab:performance:N=8}, including \textit{HV}, \textit{IGD}$^+$ scores and ranks of all methods and the constructed PF. The problem scale is $N=[3,\ldots,8]$, $T=[25,\ldots,1000]$. We further consider the $N=3$ case, although it does not conform to the provisions of MaOPs ($N>3$). That is because all the comparison algorithms fail successively When the objective dimension increases. Specially, comparison algorithms fail when $T\geq600$ in $N=4$ case, $T\geq100$ in $N=5,6$ case, $T\geq50$ in $N=7$ case, and $T\geq25$ in $N=8$ case. The exponential growth in time consumption of comparison algorithms limited the content we could analyze, so we also showed the experiment with $N=3$. 

Table~\ref{app:tab:performance:N=3},~\ref{app:tab:performance:N=4},~\ref{app:tab:performance:N=5},~\ref{app:tab:performance:N=6},~\ref{app:tab:performance:N=7}, and~\ref{app:tab:performance:N=8},  show the \textit{HV} and \textit{IGD}$^+$ indicators of SDES and comparison algorithms on different problem scales. \textit{HV} and \textit{IGD}$^+$ are relative indicators: the larger \textit{HV}, the better the algorithm, while \textit{IGD}$^+$ is the opposite. As result, when the problem size is small, all the algorithms can complete within the maximum time $M=30$ mins. However, as the problem scale grows, methods fail in sequence: 
ORIGAMI-M-BS $\prec$ ORIGAMI-A $\prec$ ORIGAMI-M $\prec$ DIRECT-MIN-COV $\prec$ SDES. 
The symbol - means the runtime of the corresponding algorithm is out of $M$ in Table~\ref{app:tab:performance:N=3}$\sim$~\ref{app:tab:performance:N=8}. 

As results, SDES achieve rank-1 in all MOSG problems in $N=3,4,6,7,8$ cases, except $N=5$. In $N=5$ case, SDES achieve 2 rank-1, 3 rank-2, 2 rank-3, 1 rank-4. The reason for the poor performance of SDES is (i) the SOTA methods are exact algorithms. Although they face the curse of dimension, they performance very well in medium-scale problems. (ii) $N=5$ happens to be the last problem scale for the SOTA methods to time out. They produce enough high-quality solutions in a limited time. However, they cannot accomplish larger problem scales because of the curse of dimensionality.

\begin{table*}[]
\caption{Multi-objective performance indicator of SDES and comparison algorithms across all problem scales in $N=3$}
\label{app:tab:performance:N=3}
\begin{adjustbox}{width=\textwidth}
\centering
\begin{tabular}{lrrrrrrrr}
\toprule
        \multicolumn{1}{c}{\textbf{$N=3$}} & \multicolumn{2}{c}{\textbf{$T=25$}} & \multicolumn{2}{c}{\textbf{$T=50$}} & \multicolumn{2}{c}{\textbf{$T=75$}} & \multicolumn{2}{c}{\textbf{$T=100$}} \\  
        \multicolumn{1}{c}{\textbf{$\bm{Method}$}} & $HV$/Rank & $IGD^{+}$/Rank & $HV$/Rank & $IGD^{+}$/Rank & $HV$/Rank & $IGD^{+}$/Rank & $HV$/Rnak & $IGD^{+}$/Rank \\ 
        \midrule
        PF & 243 & 0 & 1480 & 0 & 2163 & 0 & 2201 & 0 \\  
        \textbf{SDES} & \textbf{180}{\scriptsize $\pm$2}/1 &  \textbf{0.27}{\scriptsize $\pm$0.00}/1 & \textbf{680}{\scriptsize $\pm$514}/1 & \textbf{0.69}{\scriptsize $\pm$0.46}/1 & \textbf{868}{\scriptsize $\pm$512}/1 & \textbf{0.74}{\scriptsize $\pm$0.33}/1 & \textbf{754}{\scriptsize $\pm$492}/1 &  \textbf{0.97}{\scriptsize $\pm$0.39}/1 \\  
        ORIGAMI-M & 172/4 & 0.48/3 & 318/3 & 1.57/3 & 377/2 & 1.78/3 & 403/2 & 1.54/3 \\  
        ORIGAMI-A & 179/2 & 0.45/2 & 323/2 & 1.54/2 & 365/3 & 1.74/2 & 397/4 & 1.52/2 \\  
        ORIGAMI-M-BS & 148/5 & 0.78/5 & 232/5 & 1.89/5 & 286/5 & 2.25/4 & 279/5 & 2.26/5 \\  
        DIRECT-MIN-COV & 174/3 & 0.57/4 & 293/4 & 1.87/4 & 323/4 & 2.43/5 & 401/3 & 1.79/4 \\
        \textbf{RANK(OURS)}&\textbf{1}&\textbf{1}&\textbf{1}&\textbf{1}&\textbf{1}&\textbf{1}&\textbf{1}&\textbf{1}\\
        
        \midrule
        \multicolumn{1}{c}{\textbf{$N=3$}} & \multicolumn{2}{c}{\textbf{$T=200$}} & \multicolumn{2}{c}{\textbf{$T=400$}} & \multicolumn{2}{c}{\textbf{$T=600$}} & \multicolumn{2}{c}{\textbf{$T=800$}} \\ \midrule
        PF & 1161 & 0 & 1642 & 0  &1549  &0&1209 &0\\  
        \textbf{SDES} & \textbf{640}{\scriptsize $\pm$230}/1 & 1.07{\scriptsize $\pm$0.43}/1 & \textbf{1107}{\scriptsize $\pm$298}/1 & \textbf{0.57}{\scriptsize $\pm$0.40}/1  & \textbf{863}{\scriptsize $\pm$497}/1 & \textbf{0.53}{\scriptsize $\pm$0.49}/1 & \textbf{728}{\scriptsize $\pm$199}/1 & \textbf{0.57}{\scriptsize $\pm$0.36}/1\\  
        ORIGAMI-M & 457/3 & 1.19/2 & 557/3 & 1.80/3  &290/2 &1.58/3 & 447/2 & 1.70/3\\  
        ORIGAMI-A & 426/4 & 1.32/3 & \multicolumn{2}{c}{--}&\multicolumn{2}{c}{--}&\multicolumn{2}{c}{--}\\  
        ORIGAMI-M-BS & 415/5 & 1.47/5 & 535/4 & 2.02/4  & 265/4 & 1.57/2  & 388/3 &1.63/2\\  
        DIRECT-MIN-COV & 483/2 & 1.36/4 & 655/2 & 1.66/2  & 267/3 &1.89/4 &\multicolumn{2}{c}{--}\\ 
        \textbf{RANK(OURS)}&\textbf{1}&\textbf{1}&\textbf{1}&\textbf{1}&\textbf{1}&\textbf{1}&\textbf{1} &\textbf{1}\\

        \midrule
        \multicolumn{1}{c}{\textbf{$N=3$}} & \multicolumn{2}{c}{\textbf{$T=1000$}}\\ \midrule
        PF & 609 & 0 &  &  &  &&\\  
        \textbf{SDES} & \textbf{355}{\scriptsize $\pm$58}/1 & \textbf{0.72}{\scriptsize $\pm$0.27}/1 & &  & &&\\  
        ORIGAMI-M &\multicolumn{2}{c}{--} & & & &&\\  
        ORIGAMI-A & \multicolumn{2}{c}{--}&  &  & &&\\  
        ORIGAMI-M-BS & 212/2 & 1.69/2 &  & & && \\  
        DIRECT-MIN-COV &\multicolumn{2}{c}{--} & &  & && \\ 
        \textbf{RANK(OURS)}&\textbf{1}&\textbf{1}& & &&&\\
\bottomrule
\end{tabular}
\end{adjustbox}
\end{table*}

\begin{table*}[]
\caption{Multi-objective performance indicator of SDES and comparison algorithms across all problem scales in $N=4$}
\label{app:tab:performance:N=4}
\begin{adjustbox}{width=\textwidth}
\centering
\begin{tabular}{lrrrrrrrr}
\toprule
        \multicolumn{1}{c}{\textbf{$N=4$}} & \multicolumn{2}{c}{\textbf{$T=25$}} & \multicolumn{2}{c}{\textbf{$T=50$}} & \multicolumn{2}{c}{\textbf{$T=75$}} & \multicolumn{2}{c}{\textbf{$T=100$}} \\
        \multicolumn{1}{c}{\textbf{$\bm{Method}$}} & $HV$/Rank & $IGD^{+}$/Rank & $HV$/Rank & $IGD^{+}$/Rank & $HV$/Rank & $IGD^{+}$/Rank & $HV$/Rnak & $IGD^{+}$/Rank \\
        \midrule
        PF & 6.42e4 & 0 & 3.66e4 & 0 & 9.25e4 & 0 & 1.05e5 & 0 \\  
        \textbf{SDES} & \textbf{3.12e4}{\scriptsize $\pm$6018}/1 & \textbf{0.47}{\scriptsize $\pm$0.08}/1 & \textbf{2.31e4}{\scriptsize $\pm$3709}/1 & \textbf{0.52}{\scriptsize $\pm$0.13}/1 & \textbf{4.73e4}{\scriptsize $\pm$7691}/1 & \textbf{0.64}{\scriptsize $\pm$0.12}/1 & \textbf{5.15e4}{\scriptsize $\pm$9247}/1 & \textbf{0.56}{\scriptsize $\pm$0.14}/1 \\  
        ORIGAMI-M & 2.84e4/2 & 0.56/2 & 1.85e4/3 & 0.89/2 & 4.03e4/2 & 0.86/3 & 4.37e4/3 & 0.86/3 \\  
        ORIGAMI-A & 2.76e4/3 & 0.67/3 & 1.88e4/2 & 0.93/3 & 3.95e4/3 & 0.73/2 & 4.42e4/2 & 0.70/2 \\  
        ORIGAMI-M-BS & 2.38e4/5 & 1.06/4 & 1.61e4/5 & 1.30/4 & 3.45e4/4 & 1.24/4 & 3.85e4/4 & 1.06/4 \\  
        DIRECT-MIN-COV & 2.46e4/4 & 1.63/5 & 1.63e4/4 & 1.71/5 & 3.42e4/5 & 1.63/5 & 3.75e4/5 & 2.33/5 \\ 
        \textbf{RANK(OURS)}&\textbf{1}&\textbf{1}&\textbf{1}&\textbf{1}&\textbf{1}&\textbf{1}&\textbf{1}&\textbf{1}\\
        \midrule
        \multicolumn{1}{c}{\textbf{$N=4$}} & \multicolumn{2}{c}{\textbf{$T=200$}} & \multicolumn{2}{c}{\textbf{$T=400$}} &  \multicolumn{4}{c}{${T\geq600}$ $\textbf{TIMEOUT}$}\\  
        \midrule
        PF & 3.71e4 & 0 & 4.96e4 & 0 & \multicolumn{4}{c}{--} \\  
        $\textbf{SDES}$ & \textbf{2.71e4}{\scriptsize $\pm$4658}/1 & \textbf{0.50}{\scriptsize $\pm$0.17}/1 & \textbf{3.04e4}{\scriptsize $\pm$4589}/1 & \textbf{0.86}{\scriptsize $\pm$0.18}/1 &  \multicolumn{4}{c}{--} \\  
        ORIGAMI-M & 2.06e4/2 & 0.66/2 & 2.56e4/2 & 1.01/2 & \multicolumn{4}{c}{--}\\  
        ORIGAMI-A & \multicolumn{2}{c}{--} & \multicolumn{2}{c}{--} & \multicolumn{4}{c}{--} \\  
        ORIGAMI-M-BS & 1.58e4/4 & 1.41/4 & \multicolumn{2}{c}{--} & \multicolumn{4}{c}{--} \\  
        DIRECT-MIN-COV & 1.97e4/3 & 1.25/3 & \multicolumn{2}{c}{--} & \multicolumn{4}{c}{--} \\ 
        \textbf{RANK(OURS)}&\textbf{1}&\textbf{1}&\textbf{1}&\textbf{1}&\multicolumn{4}{c}{--}\\
\bottomrule
\end{tabular}
\end{adjustbox}
\end{table*}

\begin{table*}[]
\caption{Multi-objective performance indicator of SDES and comparison algorithms across all problem scales in $N=5$}
\label{app:tab:performance:N=5}
\begin{adjustbox}{width=\textwidth}
\centering
\begin{tabular}{lrrrrrrrr}
\toprule
        \multicolumn{1}{c}{\textbf{$N=5$}} & \multicolumn{2}{c}{\textbf{$T=25$}} & \multicolumn{2}{c}{\textbf{$T=50$}} & \multicolumn{2}{c}{\textbf{$T=75$}} & \multicolumn{2}{c}{$T=100$}\\  
        \multicolumn{1}{c}{\textbf{$\bm{Method}$}} & $HV$/Rank & $IGD^{+}$/Rank & $HV$/Rank & $IGD^{+}$/Rank & $HV$/Rank & $IGD^{+}$/Rank & $HV$/Rnak & $IGD^{+}$/Rank \\
        \midrule
        PF & 6.46e4 & 0 & 7.73e4 & 0 & 2.30e6 & 0 & 1.59e6&0 \\  
        $\textbf{SDES}$ & 5.03e5{\scriptsize $\pm$2.59e4}/3 & 0.85{\scriptsize $\pm$0.03}/4 & \textbf{5.61e5}{\scriptsize $\pm$1.23e5}/1 & 0.50{\scriptsize $\pm$0.06}/2 & 7.68e5{\scriptsize $\pm$2.27e4}/3 & 0.55{\scriptsize $\pm$0.01}/2 & \textbf{7.55e5}{\scriptsize $\pm$1.57e5}/1 & 0.63{\scriptsize $\pm$0.10}/2 \\  
        ORIGAMI-M & \textbf{5.20e5}/1 & \textbf{0.49}/1 & 5.27e5/4 & 0.57/3 & 7.29e5/4 & 0.63/3 & 6.92e5/2 & \textbf{0.55}/1\\  
        ORIGAMI-A & 5.02e5/4 & \textbf{0.49}/1 & 5.59e5/2 & \textbf{0.40}/1 & \textbf{8.10e5}/1 & \textbf{0.36}/1  & \multicolumn{2}{c}{--} \\  
        ORIGAMI-M-BS & 4.32e5/5 & 1.56/5 & 4.52e5/5 & 1.23/5 & 6.57e5/5 & 1.12/5 & 5.97e5/4&1.05/3 \\  
        DIRECT-MIN-COV & 5.17e5/2 & 0.61/3 & 5.57e5/3 & 0.64/4 & 7.81e5/2 & 0.79/4  & 6.37e5/3&1.68/4 \\ 
        \textbf{RANK(OURS)}&\textbf{3}&\textbf{4}&\textbf{1}&\textbf{2}&\textbf{3}&\textbf{2}&\textbf{1}&\textbf{2}\\
\bottomrule
\end{tabular}
\end{adjustbox}
\end{table*}

\begin{table*}[]
\caption{Multi-objective performance indicator of SDES and comparison algorithms across all problem scales in $N=6$}
\label{app:tab:performance:N=6}
\begin{adjustbox}{width=\textwidth}
\centering
\begin{tabular}{lrrrrrrrr}
\toprule
        \multicolumn{1}{c}{\textbf{$N=6$}} & \multicolumn{2}{c}{\textbf{$T=25$}} & \multicolumn{2}{c}{\textbf{$T=50$}} & \multicolumn{2}{c}{\textbf{$T=75$}} & \multicolumn{2}{c}{\textbf{$T=100$}}\\  
        \multicolumn{1}{c}{\textbf{$\bm{Method}$}} & $HV$/Rank & $IGD^{+}$/Rank & $HV$/Rank & $IGD^{+}$/Rank & $HV$/Rank & $IGD^{+}$/Rank & $HV$/Rnak & $IGD^{+}$/Rank \\
        \midrule
        PF & 5.40e7 & 0 & 6.10e6 & 0 & 3.46e7 & 0 & 1.15e7&0 \\  
        $\textbf{SDES}$ & \textbf{9.45e6}{\scriptsize $\pm$9.02e6}/1 & \textbf{0.29}{\scriptsize $\pm$0.13}/1 & \textbf{4.78e6}{\scriptsize $\pm$1.73e5}/1 & \textbf{0.32}{\scriptsize $\pm$0.02}/1 & \textbf{1.00e7}{\scriptsize $\pm$3.78e6}/1 & \textbf{0.42}{\scriptsize $\pm$0.10}/1 & \textbf{7.91e6}{\scriptsize $\pm$5.36e5}/1 & \textbf{0.41}{\scriptsize $\pm$0.02}/1 \\  
        ORIGAMI-M & \multicolumn{2}{c}{--} & \multicolumn{2}{c}{--} & \multicolumn{2}{c}{--} & \multicolumn{2}{c}{--}\\  
        ORIGAMI-A & 5.58e6/2 & 0.66/2 & \multicolumn{2}{c}{--} & \multicolumn{2}{c}{--}  & \multicolumn{2}{c}{--} \\  
        ORIGAMI-M-BS & 5.16e6/3 & 0.90/3 & 4.34e6/2 & 0.92/2 & \multicolumn{2}{c}{--} & \multicolumn{2}{c}{--} \\  
        DIRECT-MIN-COV & 4.22e6/4 & 1.66/4 & 3.08e6/3 & 2.48/3 & 5.64e6/2 & 3.05/2  & 5.23e6/2&2.85/2 \\ 
        \textbf{RANK(OURS)}&\textbf{1}&\textbf{1}&\textbf{1}&\textbf{1}&\textbf{1}&\textbf{1}&\textbf{1}&\textbf{1}\\
\bottomrule
\end{tabular}
\end{adjustbox}
\end{table*}

\begin{table*}[]
\caption{Multi-objective performance indicator of SDES and comparison algorithms across all problem scales in $N=7$}
\label{app:tab:performance:N=7}
\begin{adjustbox}{width=\textwidth}
\centering
\begin{tabular}{lrrrrrrrr}
\toprule
        \multicolumn{1}{c}{\textbf{$N=7$}} & \multicolumn{2}{c}{\textbf{$T=25$}} & \multicolumn{2}{c}{\textbf{$T=50$}} & \multicolumn{4}{c}{${T\geq75}$ $\textbf{TIMEOUT}$}\\  
        \multicolumn{1}{c}{\textbf{$\bm{Method}$}} & $HV$/Rank & $IGD^{+}$/Rank & $HV$/Rank & $IGD^{+}$/Rank & \multicolumn{4}{c}{--} \\
        \midrule
        PF & 2.21e8 & 0 & 3.37e8 & 0 & \multicolumn{4}{c}{--} \\  
        $\textbf{SDES}$ & \textbf{8.81e7}{\scriptsize $\pm$3.43e7}/1 &  \textbf{0.39}{\scriptsize $\pm$0.04}/1 & \textbf{1.64e8}{\scriptsize $\pm$4.82e7}/1 & \textbf{0.39}{\scriptsize $\pm$0.04}/1 & \multicolumn{4}{c}{--} \\  
        ORIGAMI-M & \multicolumn{2}{c}{--} & \multicolumn{2}{c}{--}  & \multicolumn{4}{c}{--}\\  
        ORIGAMI-A & 6.95e7/2 & 0.45/2 & \multicolumn{2}{c}{--}  & \multicolumn{4}{c}{--} \\  
        ORIGAMI-M-BS & 5.66e7/3 & 1.17/3 & 1.14e8/2 & 1.25/2& \multicolumn{4}{c}{--} \\  
        DIRECT-MIN-COV & 5.28e7/4 & 1.58/4 & 1.03e8/3 & 2.04/3  & \multicolumn{4}{c}{--} \\ 
        \textbf{RANK(OURS)}&\textbf{1}&\textbf{1}&\textbf{1}&\textbf{1}&\multicolumn{4}{c}{--}\\
\bottomrule
\end{tabular}
\end{adjustbox}
\end{table*}

\begin{table}[]
\caption{Multi-objective performance indicator of SDES and comparison algorithms across all problem scales in $N=8$}
\label{app:tab:performance:N=8}
\centering
\begin{tabular}{lrrrrrrrr}
\toprule
        \multicolumn{1}{c}{\textbf{$N=8$}} & \multicolumn{2}{c}{\textbf{$T=25$}}  & \multicolumn{6}{c}{${T\geq50}$ $\textbf{TIMEOUT}$}\\  
        \multicolumn{1}{c}{\textbf{$\bm{Method}$}} & $HV$/Rank & $IGD^{+}$/Rank & \multicolumn{6}{c}{--} \\
        \midrule
        PF & 4.75e9 & 0 &  \multicolumn{6}{c}{--} \\  
        $\textbf{SDES}$ & \textbf{1.56e9}{\scriptsize $\pm$4.98e8}/1 & \textbf{0.37}{\scriptsize $\pm$0.05}/1 & \multicolumn{6}{c}{--} \\  
        ORIGAMI-M & \multicolumn{2}{c}{--} &  \multicolumn{6}{c}{--}\\  
        ORIGAMI-A & 4.94e8/2 & 0.80/2 & \multicolumn{6}{c}{--} \\  
        ORIGAMI-M-BS &\multicolumn{2}{c}{--} &\multicolumn{6}{c}{--} \\  
        DIRECT-MIN-COV & 4.71e8/3 & 2.16/3 &\multicolumn{6}{c}{--} \\ 
        \textbf{RANK(OURS)}&\textbf{1}&\textbf{1}&\multicolumn{6}{c}{--}\\
\bottomrule
\end{tabular}
\end{table}

\subsection{Additional Experiments about Time Efficiency}
\label{app:more_time_effi}
Section~\ref{sec:Runtime} shows the linear complexity of our method SDES with fixed $pop\_{}size$ in both target and heterogeneous attacker dimensions. SDES is designed to deal with the following requirements: high-dimensional problems in limited time but does not require so many solutions. Other algorithms often fail to complete tasks on time (within the maximum runtime cup $M$) by providing too many solutions, while the general user does not need thousands of solutions. For example, the SOTA algorithm ORIGAMI-M provides about 10000 non-dominant solutions when $N=6$, $T=50$, which cannot accomplish MOSG problems in time. Meanwhile, the effectiveness experiment displays a good performance of our method in multi-objective indicators compared with SOTA algorithms. Therefore, SDES with fixed $pop\_{}size$ can handle more large-scale problems and can provide users with a considerable amount of non-dominated solutions. Those solutions can form a better distributed solution set than other SOTA algorithms.

\begin{figure}[!t]
\centering
\includegraphics[width=.6\textwidth]{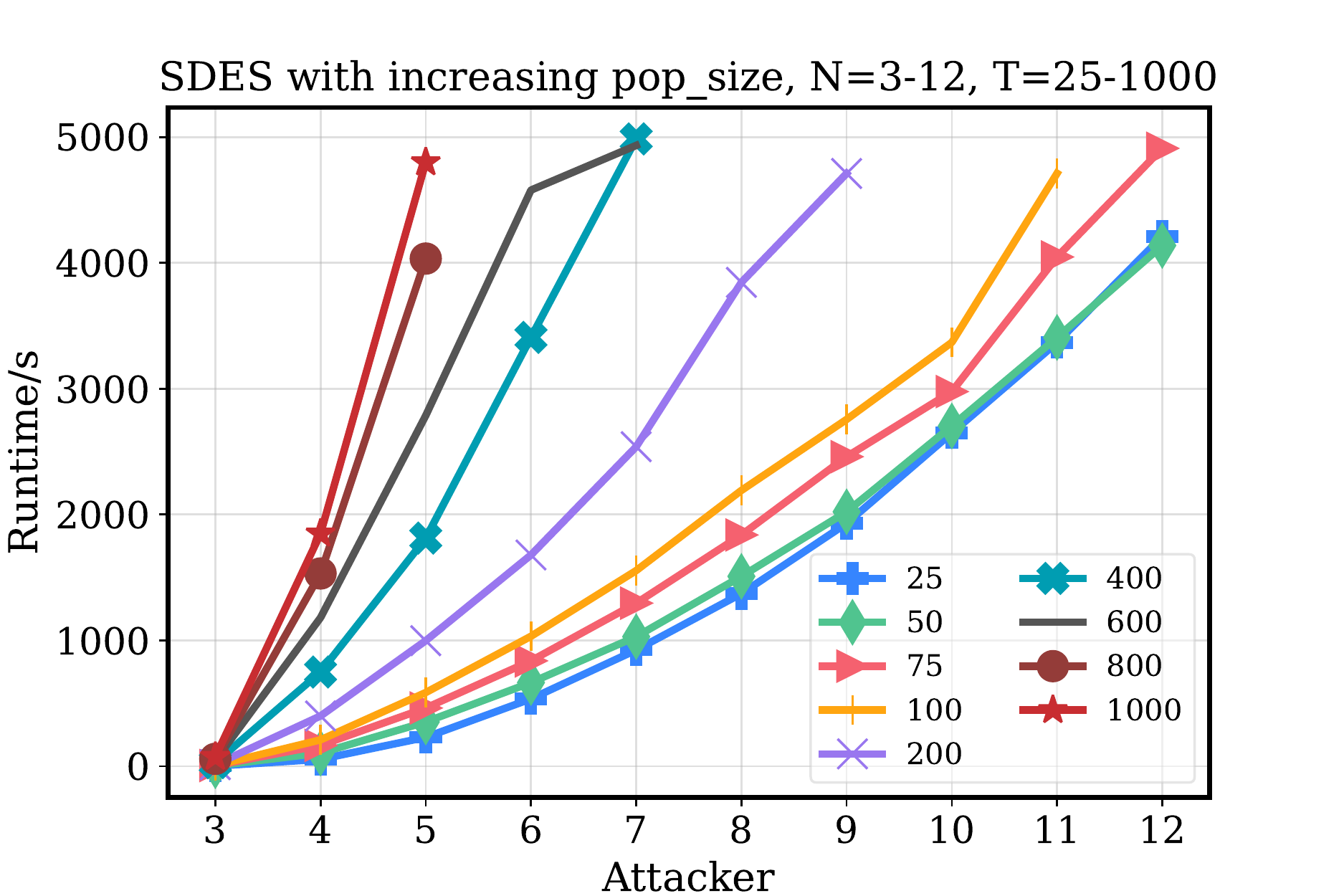}
\caption{The runtime of SDES with increasing $pop\_{}size$. The $pop\_{}size$ is an approximate arithmetic progression $[50, 200, 400,\ldots,1800]$ corresponding to the number of attackers $[3, 4, 5,\ldots, 12]$. The time efficiency of SDES with increasing $pop\_{}size$ is still better than comparison algorithms.}
\label{app:fig:Runtime:SDESF:larger}
\end{figure}

Here we further analyze the influence of $pop\_{}size$ on runtime. In the addition experiment, the population growth of SDES is an increasing sequence with constant steps, cf. Figure~\ref{app:fig:Runtime:SDESF:larger}. As the number of attacker scale up, the runtime of SDES with increasing $pop\_{}size$ exhibits sub-linear growth, which is still faster than all traditional comparison methods. As targets scale up, since $pop\_{}size$ is fixed when the number of attacker is fixed, the runtime of SDES with increasing $pop\_size$ still exhibits linear growth.

\subsection{Additional Experiments about Sensitivity Analysis of Population Size in SDES}
\label{app:more_sen_ana}

\begin{figure}[!hb]
\centering
\begin{minipage}[l]{\columnwidth}\centering
\includegraphics[width=0.34\textwidth]{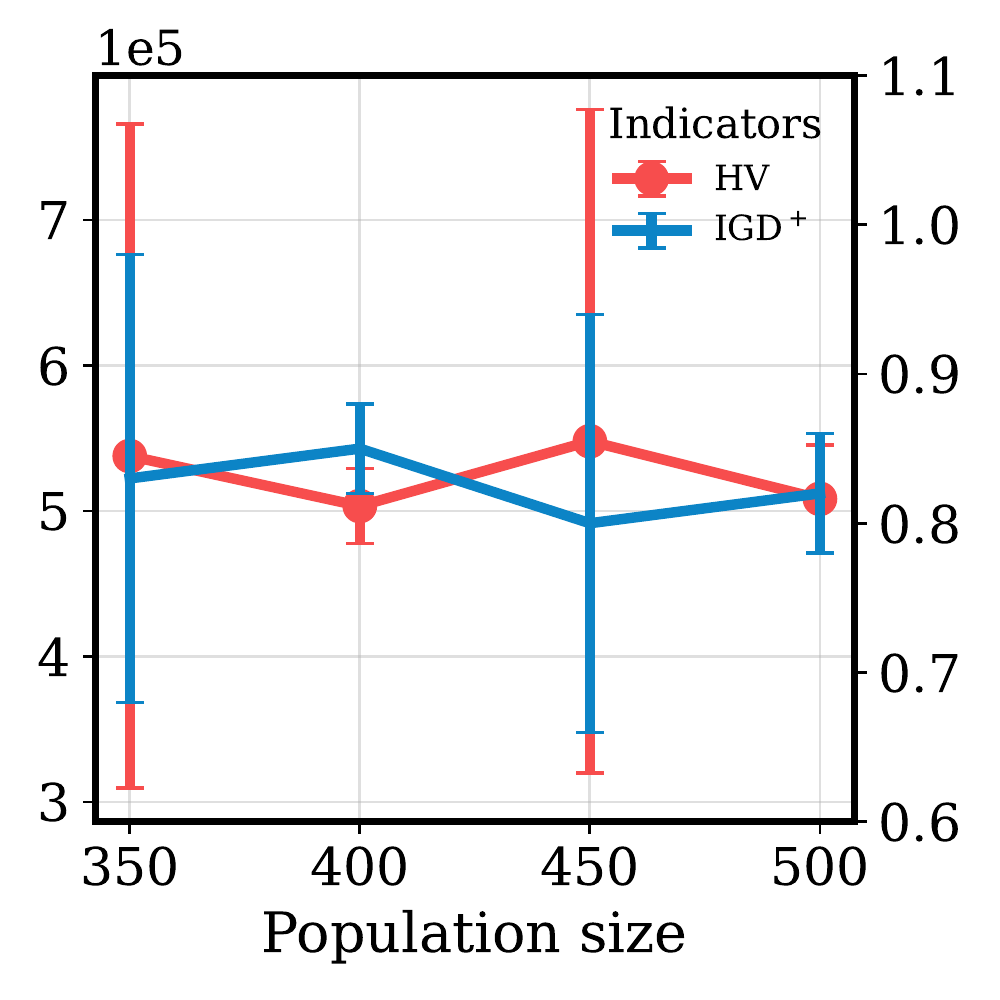}
\includegraphics[width=0.34\textwidth]{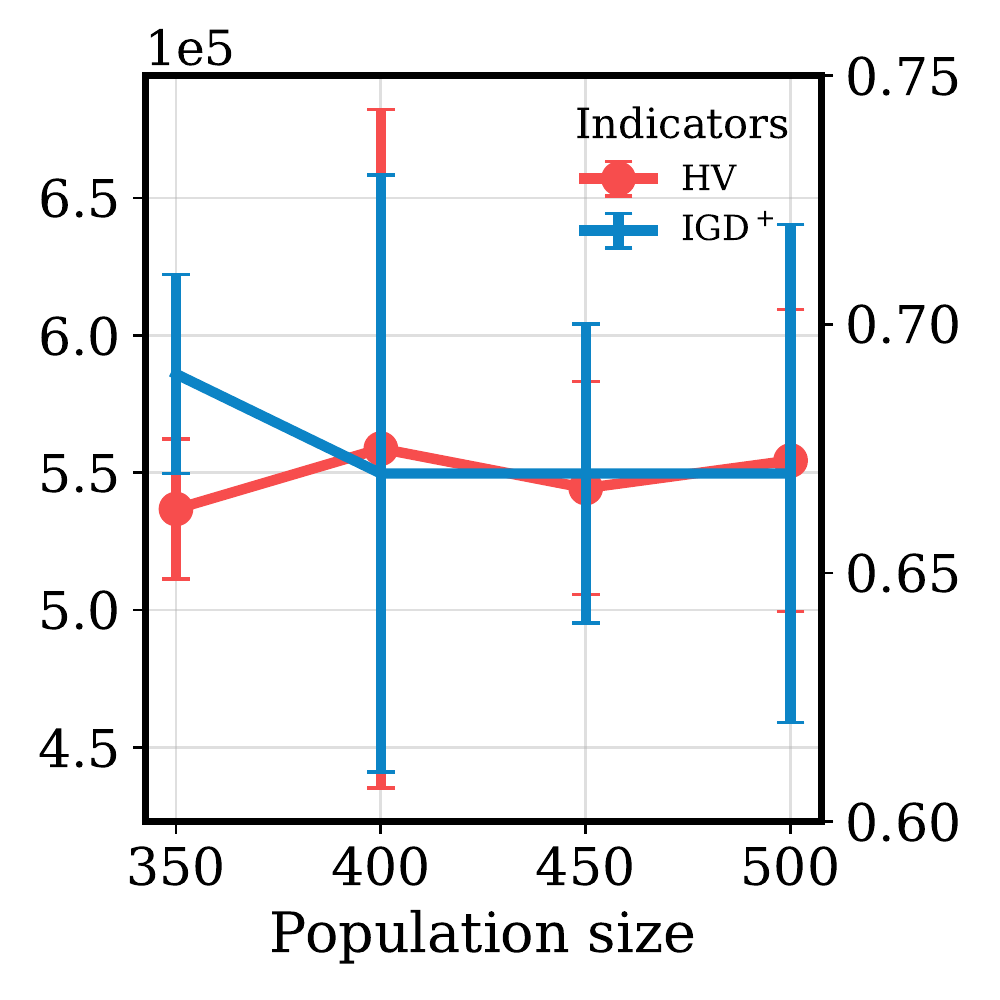}
\includegraphics[width=0.34\textwidth]{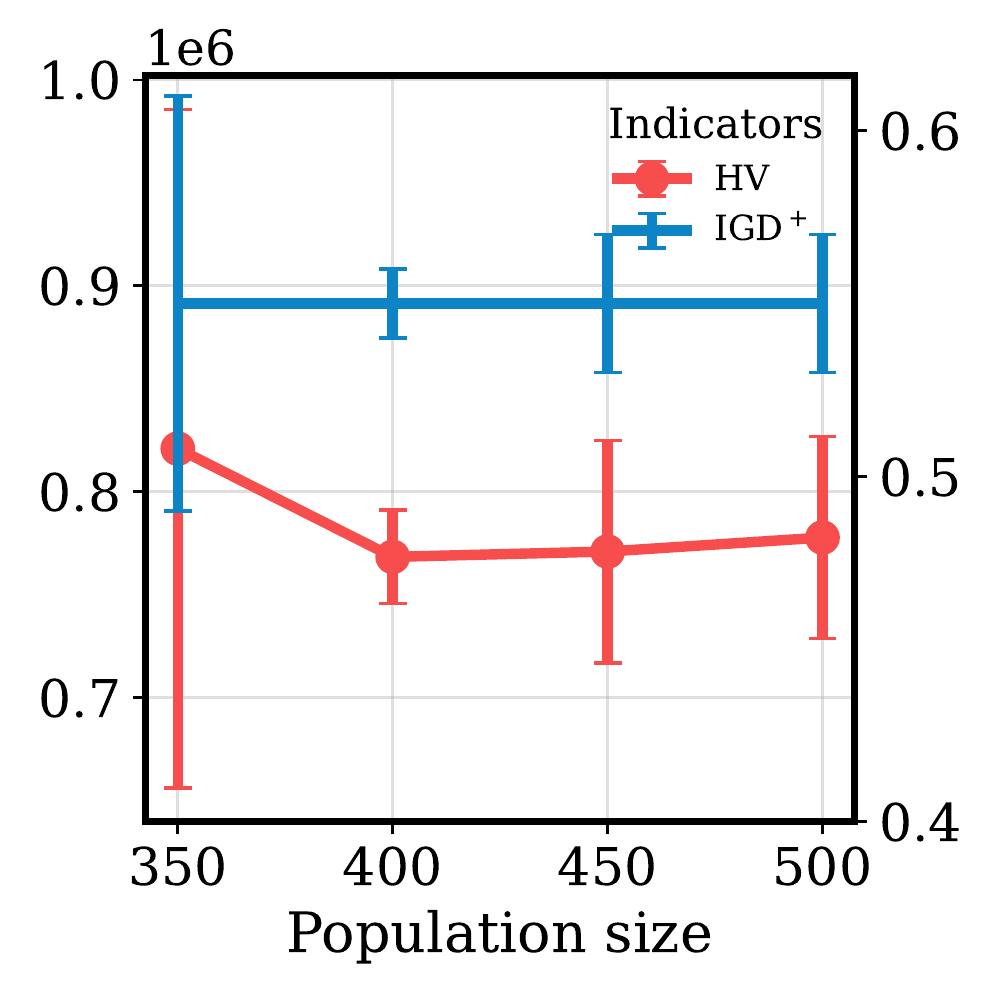}
\includegraphics[width=0.34\textwidth]{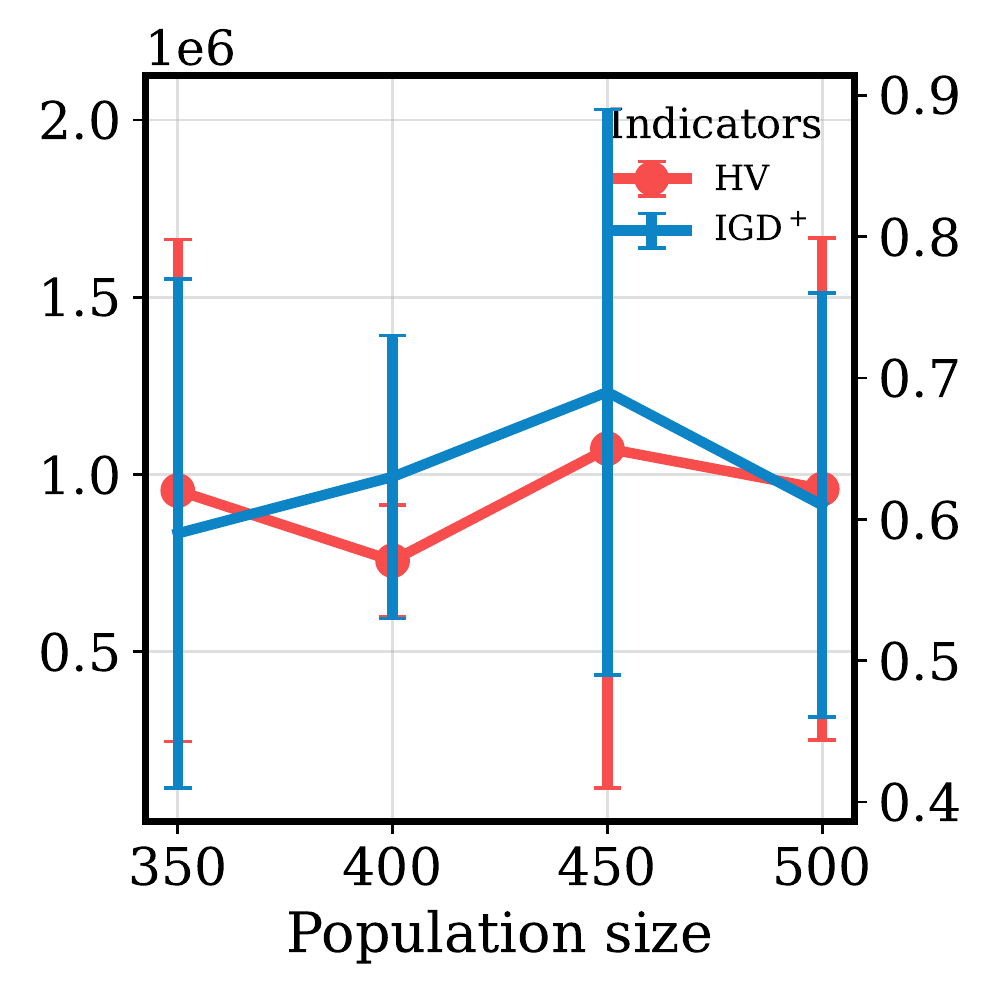}
\caption{Sensitivity Analysis of SDES. The range of population size is $[350, 400, 450, 500]$. The range of problem scale is $N=5$, $T=[25, 50, 75, 100]$. }
\label{app:fig:Sensitivity:N=5}
\end{minipage}
\end{figure}

We analyze how the performance of SDES varies with the population size $pop\_{}size$ and the result is shown in Figure~\ref{app:fig:Sensitivity:N=5}. It is measured by the maximization indicator \textit{HV} and minimization indicator \textit{IGD}$^+$. 
Ideally, the larger $pop\_{}size$, the lower \textit{IGD}$^+$ and higher \textit{HV}. Figure~\ref{app:fig:Sensitivity:N=5} shows that SDES's sensitivity to $pop\_{}size$ depends mainly on the problem difficulty and the randomness of many-objective EAs. 
For MOSGs, Figure~\ref{app:fig:Sensitivity:N=5} illustrates that $pop\_{}size$ has little influence on multi-objective performance indicators. Furthermore, increasing $pop\_{}size$ has less effect on \textit{HV} and \textit{IGD}$^+$ indicators, and \textit{HV} indicator may be reduced by large $pop\_{}size$. In addition, Increasing $pop\_{}size$ does not reduce the standard deviation of the indicator. 
The more difficult the problem, the more sensitive SDES is to $pop\_{}size$. On the one hand, when $T=25$ or $T=50$, \textit{HV} and \textit{IGD}$^+$ show stable changes on the whole and SDES has low sensitivity to $pop\_{}size$. On the other hand, SDES is more sensitive to $pop\_{}size$ when $T=100$ (a harder problem). 
Furthermore, the performance of SDES is more influenced by the randomness of many-objective EAs. There is no obvious connection between the value of \textit{IGD}$^+$ or \textit{HV} and $pop\_{}size$.

\subsection{Additional Experiments about Visualization Analysis}
\label{app:vis_ana}

\begin{figure}[!th]
\centering
\includegraphics[width=.6\textwidth]{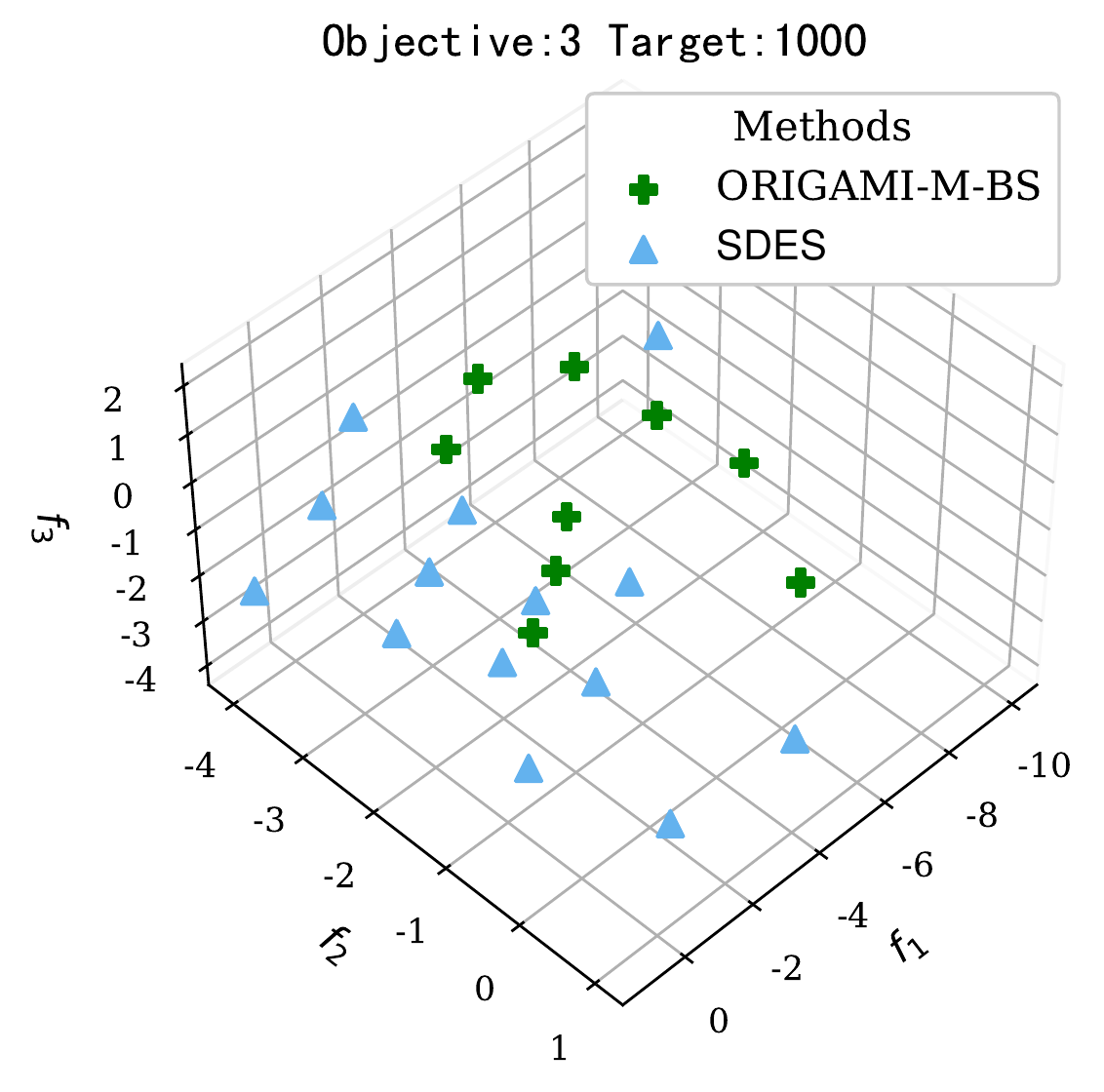}
\caption{PSP visualization of $N=3, T=1000$ MOSG problem. Compared with ORIGAMI-M-BS, SDES can find a well-spaced PF.}
\label{app:fig:visualization:N=3}
\end{figure}

\begin{figure}[!th]
\centering
\includegraphics[width=.6\textwidth]{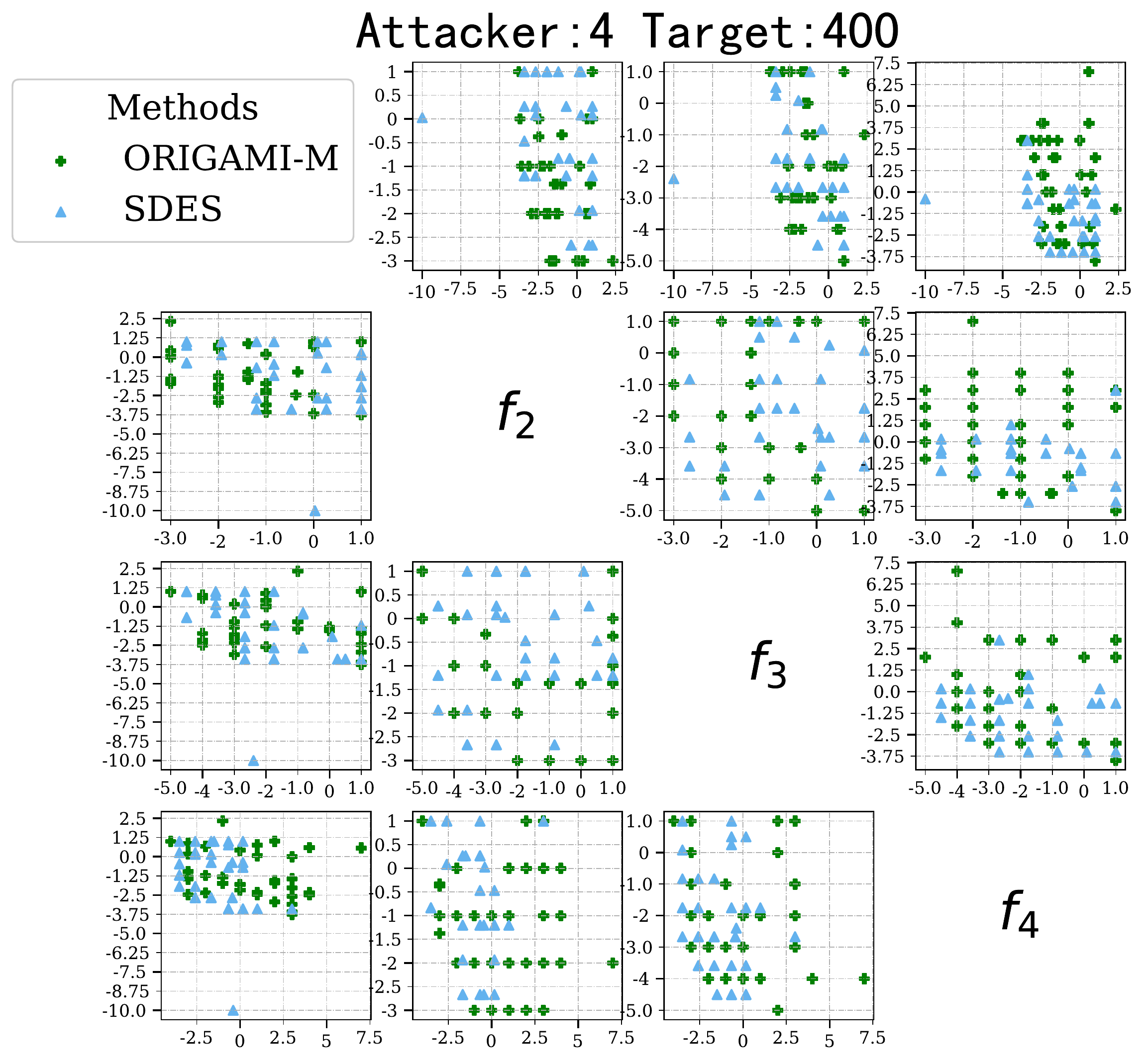}
\caption{PSP visualization of $N=4, T=400$ MOSG problem. MOSG is a minimum task. The closer and more diverse the result to the original, the better the result. From the first column subplots ($f_{2,3,4}$, $f_1$), SDES has the potential to widen the boundaries. From other subplots, SDES is comparable to ORIGAMI-M.}
\label{app:fig:visualization:N=4}
\end{figure}

\begin{figure}[!th]
\centering
\includegraphics[width=.6\textwidth]{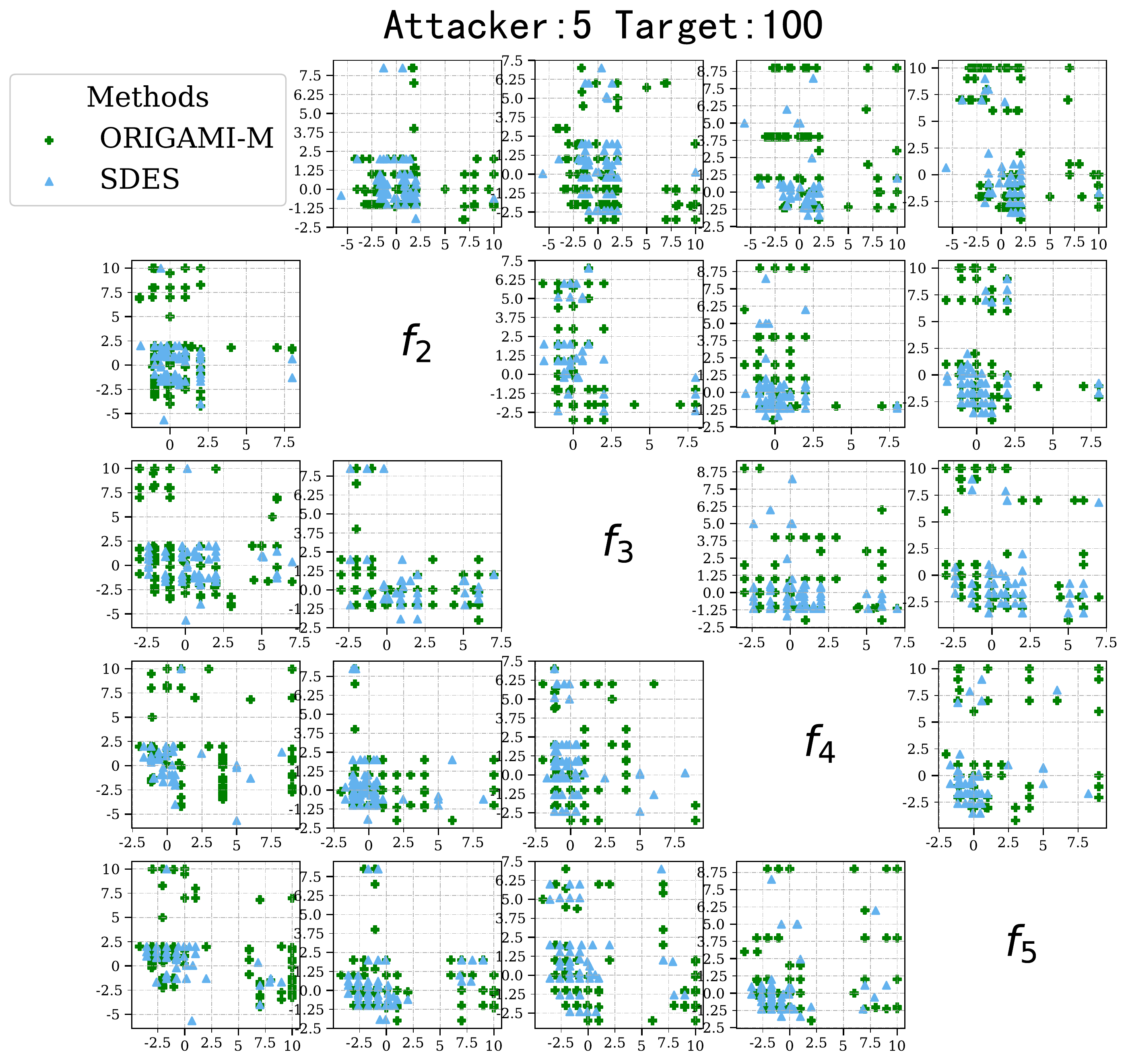}
\caption{PSP visualization of $N=5, T=100$ MOSG problem. In general, there is no significant difference between SDES and ORIGAMI-M in middle-scale MOSG problems. Since the bottom left PF of both methods is similar, the convergence ability of both methods is similar, but the capability of boundary exploration of different methods is different. }
\label{app:fig:visualization:N=5}
\end{figure}

The visualization method can show the PF distribution information that cannot be reflected by performance indicators. The commonly used visualization methods of MaOPs are Pairwise Scatter Plots (PSP) and Parallel Coordinate Plots (PCP). PSP is a classic visualization method to demonstrate the distribution and the breadth of exploration of the PF in the objective space, and PCP is a powerful technique to analyze the PF distribution on each coordinate in the objective space. PSP and PCP can not only judge the optimization difficulty of each target through the optimal value but also analyze the optimization preference of different algorithms through the distribution density. 

We display all visualization results about large-scale MOSG problems by PSP (responsible for $N\leq 5$) and PCP (responsible for $N>5$). The visualization shows the performance of both SDES and the SOTA method (depending on the $HV$ indicator). The SOTA method is drawn in red and SDES is drawn in blue. The table ticks of axes represent the defender payoff $(U_1^d(\bm{c}),\ldots,U_N^d(\bm{c}))$. MOSG problem aims to find the minimum defender loss. 

\begin{figure}[!bh]
\centering
\includegraphics[width=.5\textwidth]{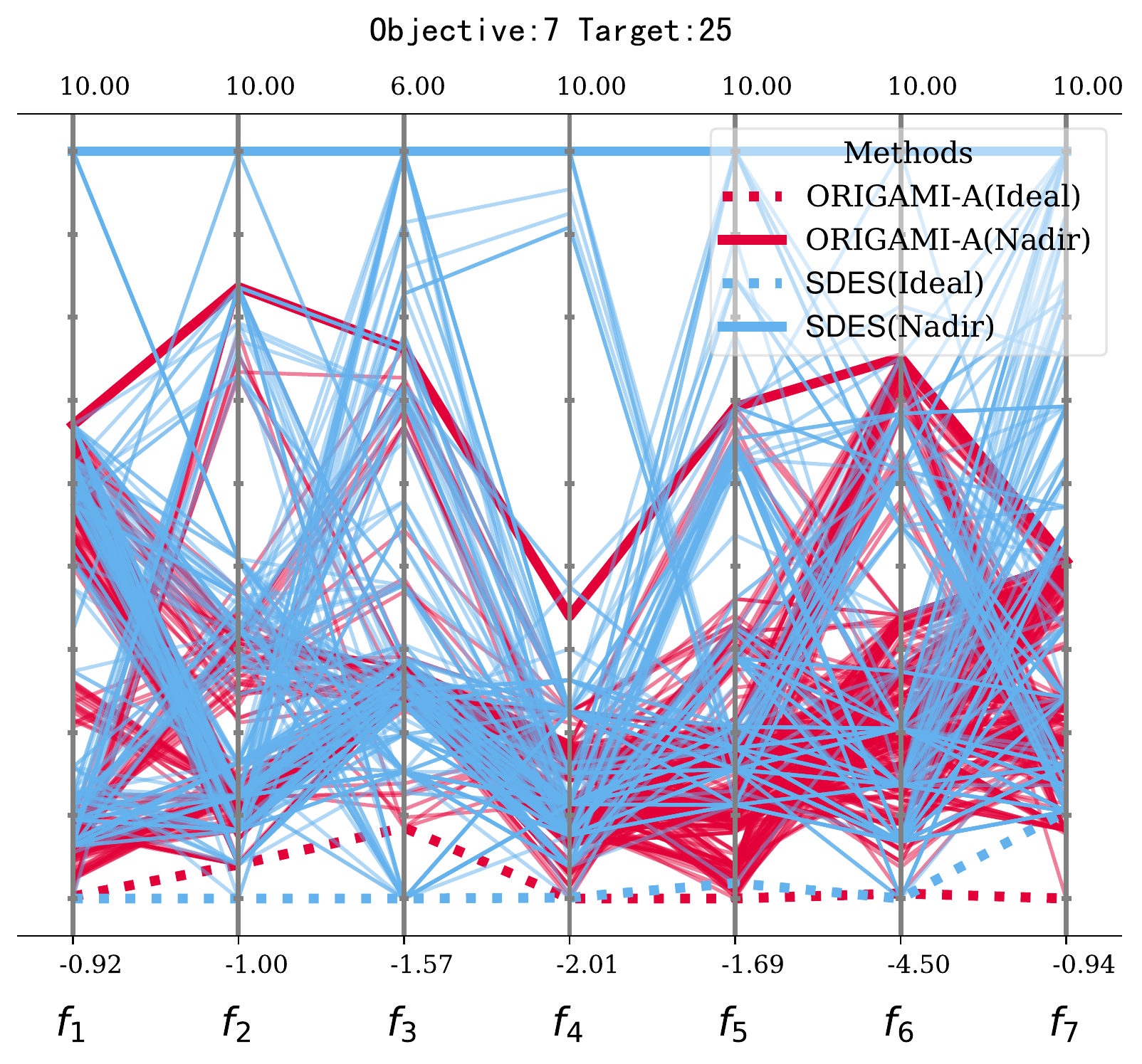}
\caption{PCP visualization of $N=6, T=100$ MOSG problem. On the one hand, since the ideal solution (dotted lines) and the nadir solution (solid lines) of SDES (blue lines) and DIRECT-MIN-COV (red lines) cannot completely include each other, the ability of boundary exploration of SDES and DIRECT-MIN-COV is comparable. On the other hand, compared with DIRECT-MIN-COV, SDES can find a well-spaced PF. }
\label{app:fig:visualization:N=6T=100}
\end{figure}

\begin{figure}[!bh]
\centering
\includegraphics[width=.5\textwidth]{figures/visualization/PCP/parallel_coordinate_plots-N7T25IndihvSEED0--ORIGAMI-A-ORIGAMI-G.pdf}
\caption{PCP visualization of $N=7, T=25$ MOSG problem. When $N$ scales up to 7, the superiority of SDES is revealed both in the ability of boundary exploration and the uniformity and diversity of PF results.}
\label{app:fig:visualization:N=7T=25}
\end{figure}

\begin{figure}[!bh]
\centering
\includegraphics[width=.5\textwidth]{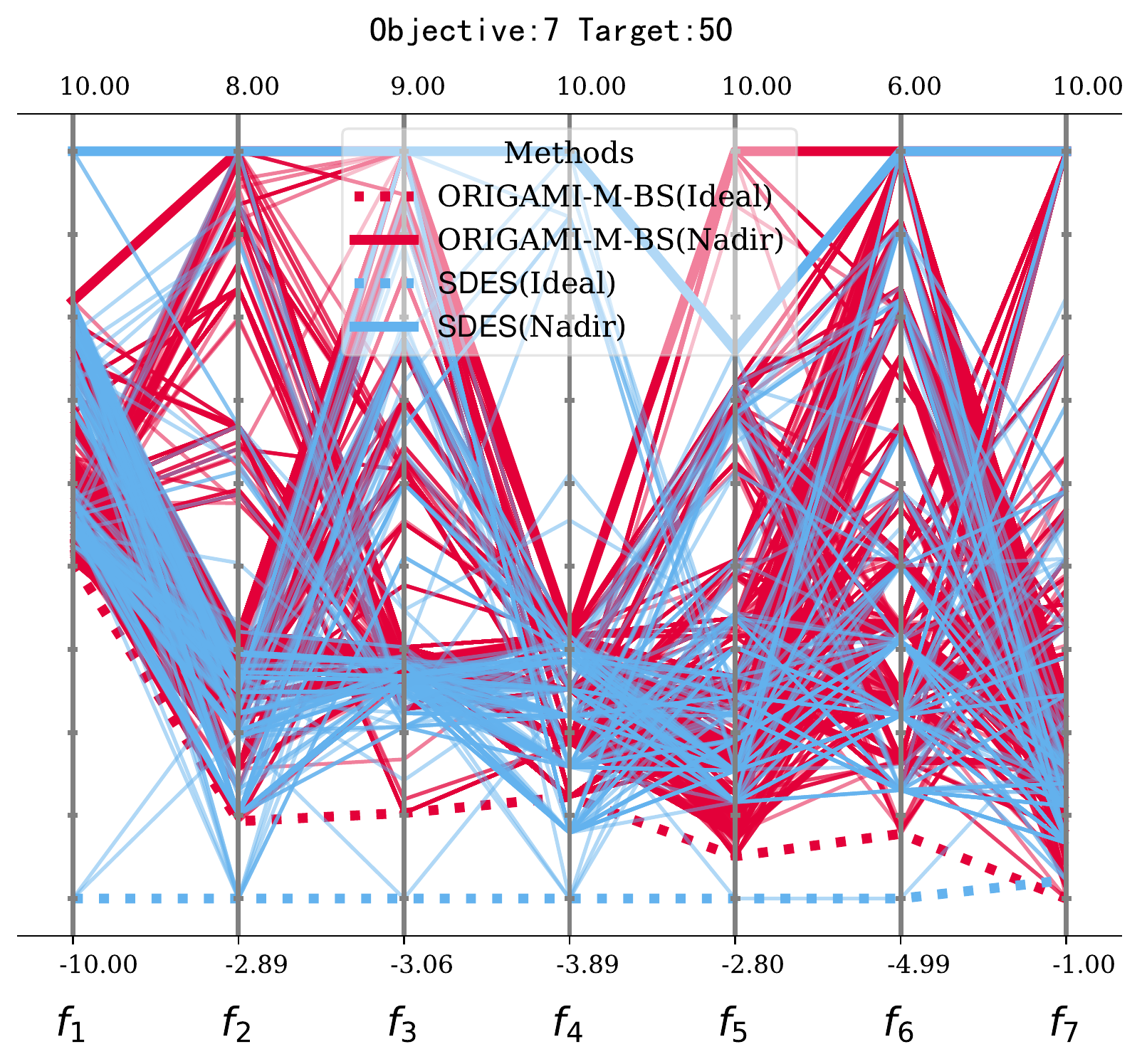}
\caption{PCP visualization of $N=7, T=50$ MOSG problem. When $N$ scales up to 7, the superiority of SDES is revealed both in the ability of boundary exploration and the uniformity and diversity of PF results.}
\label{app:fig:visualization:N=7T=50}
\end{figure}

\begin{figure}[!bh]
\centering
\includegraphics[width=.5\textwidth]{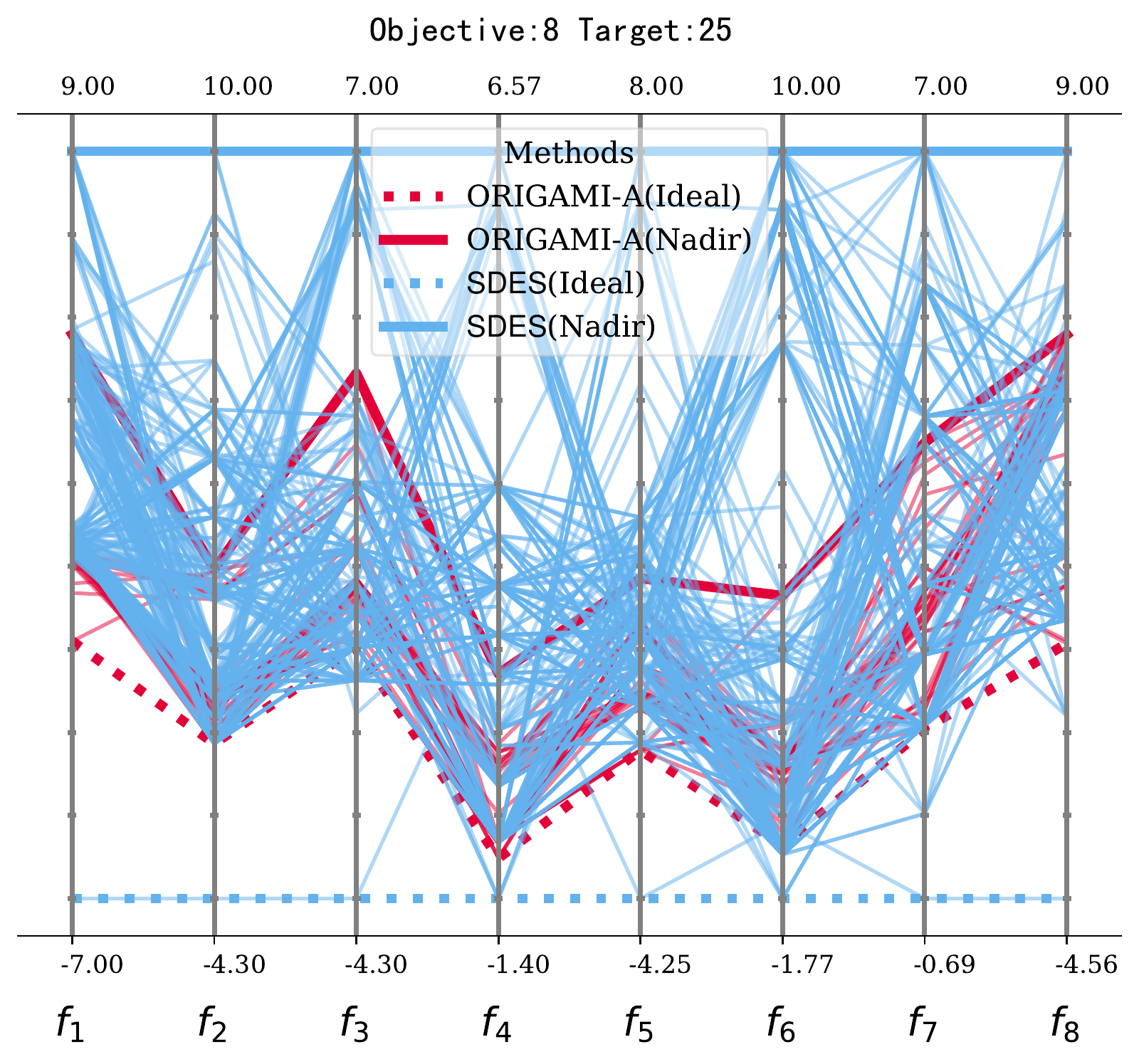}
\caption{PCP visualization of $N=8, T=25$ MOSG problem. When $N$ scales up to 8, although ORIGAMI-A is one of the SOTA methods in middle-scale MOSG problems, the exploration range of ORIGAMI-A is limited in large-scale MOSG problems. The performance of our methods SDES is satisfactory in both solution distribution and convergence. }
\label{app:fig:visualization:N=8T=25}
\end{figure}

In PSP, as shown in Figure~\ref{app:fig:visualization:N=3},~\ref{app:fig:visualization:N=4}, and~\ref{app:fig:visualization:N=5}, the closer and more diverse the PF found by the method to the origin, the better the method.
PSP is used in medium-scale MOSG problem visualization, e.g., $N=3, 4$. As for $N=3$, PSP is a 3D figure displayed from a 45-degree viewing angle, e.g., Figure~\ref{app:fig:visualization:N=3}. As for $N>3$, PSP is symmetrical, we only need to observe the lower triangle, e.g., Figure~\ref{app:fig:visualization:N=4}. 
When analyzing, we use coordinates to refer to subplots, for example, the subplot of Figure~\ref{app:fig:visualization:N=4} in the lower right corner is called $(f_4, f_1)$. Figure~\ref{app:fig:visualization:N=4} illustrates (i) Compared with the SOTA comparison method, SDES usually can find a better PF. (ii) SDES can explore larger boundaries (e.g., the boundary point at the first column subplots ($f_{2,3,4}$, $f_1$) of Figure~\ref{app:fig:visualization:N=4}). More visualization analysis details in the figure captions.

In PCP, cf. Figure~\ref{app:fig:visualization:N=6T=100},~\ref{app:fig:visualization:N=7T=25},
~\ref{app:fig:visualization:N=7T=50}, and~\ref{app:fig:visualization:N=8T=25}, the thin lines depict the distribution of the PF found by SDES and the SOTA method in the objective space, and the thick lines at the top and bottom of the figure depict the exploration capability of methods. The more diverse the thin lines, the better the method, and the wider the distance between the top thick line to the bottom thick line, the better the method. The top thick lines are also called ideal lines, composed of the optimal values of all objectives, and the bottom thick lines are also called nadir lines, composed of the worst values of all objectives.
All PCP figures illustrate (i) The blue thin lines (SDES) are basically covered by the red lines (SOTA comparison method), indicating that the PF returned by SDES is better than others in distribution. (ii) The Ideal and Nadir of SDES basically surround the SOTA comparison method, which means that the width of the PF returned by SDES is also better than others. More details of visualization analysis are described in the figures' captions.
\end{document}